\documentclass{article}

\usepackage{iclr2027_conference,times}
\usepackage[utf8]{inputenc}
\usepackage[T1]{fontenc}
\usepackage{hyperref}
\usepackage{url}
\usepackage{booktabs}
\usepackage{amsfonts}
\usepackage{microtype}
\usepackage{xcolor}
\usepackage{graphicx}
\usepackage{subcaption}
\usepackage{algorithm}
\usepackage{algorithmic}
\usepackage{amsmath}
\usepackage{amssymb}
\usepackage{mathtools}
\usepackage{placeins}


\usepackage{float}
\usepackage[normalem]{ulem}
\usepackage{amsthm}
\definecolor{DarkForestGreen}{RGB}{0,85,0}

\definecolor{DarkForestGreen}{RGB}{0,85,0}

\theoremstyle{plain}
\newtheorem{theorem}{Theorem}[section]
\newtheorem{proposition}[theorem]{Proposition}
\newtheorem{lemma}[theorem]{Lemma}
\newtheorem{corollary}[theorem]{Corollary}
\theoremstyle{definition}

\newtheorem{assumption}[theorem]{Assumption}
\theoremstyle{remark}
\newtheorem{remark}[theorem]{Remark}

\title{Policy as Data: Replay-Based Policy \\ Dual Averaging via Advantage Regression}

\author{Nianli Peng$^{1}$, Geoffrey J. Gordon$^{2}$, Kiant\'e Brantley$^{1}$\\
$^{1}$Harvard University, $^{2}$Carnegie Mellon University}

\iclrfinalcopy

\begin{document}

\maketitle
\lhead{Preprint}

\begin{abstract}
Actor-critic methods reuse past experience to improve sample efficiency. However,
historical data are typically regarded as off-policy samples for the current
policy-improvement update. This work introduces Regularized Dual Averaging Actor
Critic (RDA2C), which assigns a distinct role to replay. In regularized dual
averaging, the subsequent policy is determined by accumulated policy-improvement
feedback, so historical advantage estimates contribute directly to the actor objective rather
than solely to the most recent update. RDA2C stores state-action samples
with critic-estimated advantage labels, fits a dual score model $Z_\theta$ to
the aggregated dataset, and derives the current policy from the accumulated score model using the entropy mirror map. In this way, replay defines an empirical dual objective from which the policy is computed. To analyze RDA2C, we establish a finite-time value-gap decomposition, separating the regularized dual-averaging term from errors due to stale-replay supervised fitting, critic bias, finite-buffer variance, and replay coverage, and stating the assumptions under which each error is bounded. RDA2C accepts advantage labels from any critic. With GAE labels, RDA2C outperforms PPO on six of eight MuJoCo tasks and eight of twelve Atari games. RDA2C also outperforms AAPDA, the closest dual-averaging baseline, on six of eight MuJoCo tasks. With twin-$Q$ labels, RDA2C matches SAC at matched batch size and update frequency.
\end{abstract}

\section{Introduction}

Experience replay is an important mechanism for improving sample efficiency in deep reinforcement learning by allowing previously collected transitions to support multiple learning updates \citep{mnih2015human,schaul2015prioritized,fedus2020revisiting}.
For value-based learning, the role of replay is relatively straightforward.
Past transitions serve to calculate supplementary targets for temporal-difference updates or Bellman regression.
In contrast, using historical data for policy learning introduces additional complexities.
The policy improvement signal is defined with respect to the current policy, whereas replayed actions originate from previous behavior policies.
Standard actor updates therefore treat replayed samples as off-policy data for the current policy-improvement step \citep{wang2017acer,espeholt2018impala}.
Near-on-policy methods limit reuse and constrain the likelihood ratio between the behavior and current policies, while more explicitly off-policy methods use importance weighting or related corrections \citep{schulman2017proximal,espeholt2018impala,wang2017acer}.
Alternatively, methods such as DDPG, TD3, and SAC learn an off-policy critic from replay and update the actor through that critic, so the actor update itself stays on-policy while the critic absorbs the off-policy data, using target networks and slowly changing value estimates to stabilize learning
\citep{lillicrap2016ddpg,fujimoto2018td3,rw_haarnoja2018sac}.

We propose an alternative perspective and objective for the use of replay data.
The analysis begins with regularized dual averaging (RDA), which determines the subsequent decision rule using accumulated first-order feedback instead of relying solely on a local update around the latest iterate.
For RDA2C, this feedback is provided by critic-estimated advantages.
From this perspective, an older advantage estimate is one of the observations that defines the cumulative dual objective from which the next policy is computed, and it counts as much as the newest one.
Replay thus serves both as a source of additional optimization steps on prior experience and as a statistical representation of the actor’s optimization problem.

The distinction is most apparent when comparing dual averaging with local mirror descent. Methods such as natural policy gradient, TRPO, PPO, and policy mirror descent can be interpreted as local policy-space updates
\citep{kakade2001natural,schulman2015trust,schulman2017proximal,rw_lan2023pmd}.
Their optimization history is summarized by the most recent policy or actor parameters. In contrast, regularized dual averaging and Follow the Regularized Leader optimize accumulated linearized feedback in conjunction with a regularizer
\citep{nesterov2009primal,xiao2010dual,rw_mcmahan2011ftrl}.
This cumulative formulation does not claim a universal optimization rate advantage over mirror descent. It highlights a concrete statistical object, a historical average of advantage feedback that can be estimated from replay.

We instantiate this idea in Regularized Dual Averaging Actor Critic.
At each iteration, RDA2C stores state-action samples along with
critic-estimated advantage labels in a replay buffer. A dual score model
$Z_\theta(s,a)$ is trained via regression on this aggregated dataset, thereby
estimating a replay-weighted average of historical policy-improvement feedback. The
current policy is subsequently derived directly from the learned score using the
entropy mirror map. Thus the learned dual model provides the policy's natural-coordinate representation, with no separately optimized projection actor.

This construction motivates the phrase \emph{policy as data}. In conventional recursive actor updates, historical improvement information is implicitly compressed into the current network parameters. In RDA2C, the replay buffer explicitly
stores the advantage evidence that defines a moving empirical dual objective, and $Z_\theta$ summarizes this objective. Modifying the contents of the buffer alters the actor objective, and fitting $Z_\theta$ and applying the mirror map computes the policy induced by the updated data.
This perspective turns replay from a mechanism for reusing stale
experience into an explicit representation of the policy-improvement problem (Figure~\ref{fig:image}), leading to the algorithmic, theoretical, and empirical contributions below.\footnote{Code: \url{https://github.com/InsultedByMaths/rda2c}.}

\begin{itemize}
    \item \textbf{A replay-based dual-averaging actor update.}
    We derive an entropy-regularized actor--critic method in which replayed advantage-labeled samples define an empirical cumulative dual objective, a learned score model $Z_\theta$ estimates that objective, and the policy is obtained by an exact mirror-map readout for both finite and continuous action spaces.

    \item \textbf{A finite-time analysis of replayed dual feedback.}
    We give a value-gap decomposition that separates the standard
    RDA optimization term from the errors introduced by a practical replay implementation. The analyzed update closely tracks the implemented one, with the bounds applying to the replayed labels, finite FIFO window, and fitted score model. The decomposition covers local stale-replay drift, supervised fitting error, critic-label bias, finite-buffer variance, and replay coverage.

    \item \textbf{Evaluation across critic constructions and action spaces.}
    We instantiate RDA2C using both GAE and twin-$Q$ advantage labels. Across
    eight MuJoCo-v5 tasks and twelve Atari games, the results show that the
    same replay-based dual-score actor can be used with on-policy-style value
    critics, off-policy twin-$Q$ critics, continuous Gaussian policies, and
    discrete softmax policies.
\end{itemize}

\section{Preliminaries}

\begin{figure}[t]
    \centering
    \includegraphics[width=\linewidth]{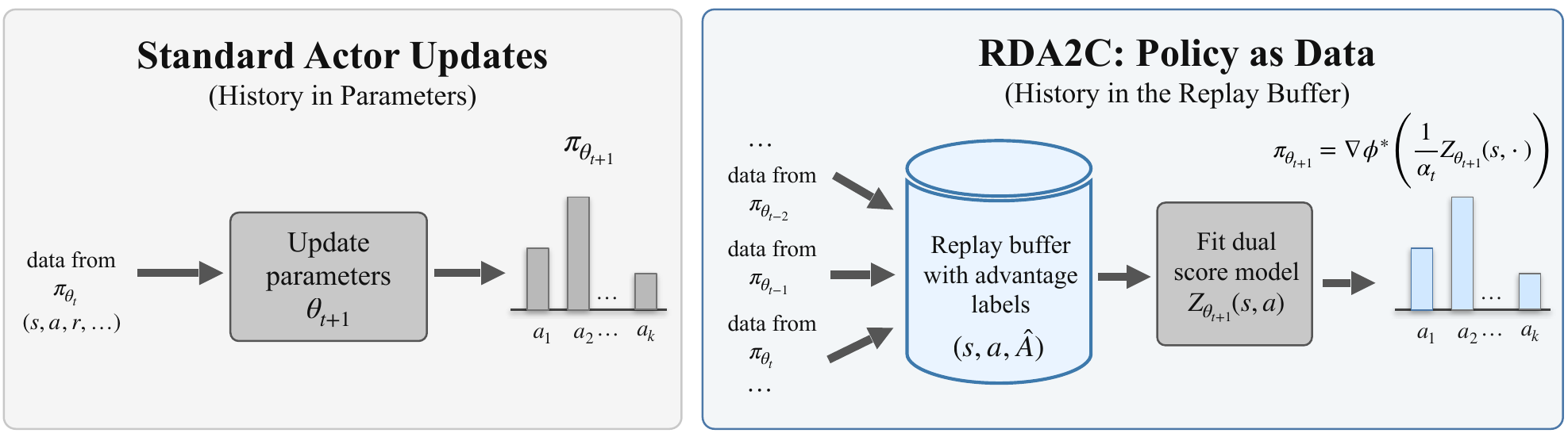}
    \caption{
\textbf{Left} Standard actor updates compress policy-improvement history into the current parameters.
\textbf{Right} In RDA2C, the policy is determined by the replayed data. The buffer stores accumulated advantage evidence, the dual score summarizes that evidence, and the mirror map converts it into the next policy.
}
    \label{fig:image}
\end{figure}

\label{sec:preliminaries}
\label{sec:unified_view}
We consider a discounted Markov decision process $\mathcal M=(\mathcal S,\mathcal A,P,r,\rho_0,\gamma)$, where $\mathcal S$ is the state space, $\mathcal A$ is the action space, $P(\cdot\mid s,a)$ is the transition kernel, $r(s,a)$ is the reward function, $\rho_0$ is the initial state distribution, and $\gamma\in[0,1)$ is the discount factor. A stationary policy $\pi:\mathcal S\to\Delta(\mathcal A)$ maps each state $s$ to an action distribution. We write $\tau=(s_0,a_0,s_1,a_1,\ldots)\sim\pi$ for a trajectory generated by $s_0\sim\rho_0$, $a_t\sim\pi(\cdot\mid s_t)$, and $s_{t+1}\sim P(\cdot\mid s_t,a_t)$. The discounted return objective is $J(\pi)=\mathbb E_{\tau\sim\pi}\left[\sum_{t=0}^{\infty}\gamma^t r(s_t,a_t)\right]$. For a policy $\pi$, the value function, action-value function, and advantage function are $V^\pi(s)=\mathbb E_{\tau\sim\pi}\left[\sum_{t=0}^{\infty}\gamma^t r(s_t,a_t)\,\middle|\,s_0=s\right]$, $Q^\pi(s,a)=r(s,a)+\gamma\mathbb E_{s'\sim P(\cdot\mid s,a)}\left[V^\pi(s')\right]$, and $A^\pi(s,a)=Q^\pi(s,a)-V^\pi(s)$. Equivalently, $V^\pi(s)=\mathbb E_{a\sim\pi(\cdot\mid s)}[Q^\pi(s,a)]$, and hence $\mathbb E_{a\sim\pi(\cdot\mid s)}[A^\pi(s,a)]=0$.
A policy $\pi$ induces the normalized discounted state visitation distribution $d^\pi(s)=(1-\gamma)\sum_{t=0}^{\infty}\gamma^t\mathbb P_\pi(s_t=s)$, where the probability is over trajectories generated by $\pi$. We write $\bar d_t=\frac{1}{t}\sum_{k=1}^t d^{\pi_k}$ for the average historical mixture of visitation distributions, and use $d$ without a policy superscript to denote an arbitrary state density over $\mathcal S$.
We also write $\mu^\pi$ for the induced discounted \emph{state-action} occupancy, with $\mu^\pi(s,a)=d^\pi(s)\pi(a\mid s)$ and $d^\pi(s)=\sum_a\mu^\pi(s,a)$, replacing the sum by an integral for continuous actions.

\section{Background}

Mirror descent is a first-order method for optimization over non-Euclidean geometries \citep{nemirovskij1983problem,beck2003mirror}. It measures movement with a Bregman divergence induced by a strictly convex mirror potential in place of the Euclidean norm. Let \(\varphi:\Delta(\mathcal A)\to\mathbb R\) be a differentiable, strictly convex statewise mirror potential on the action simplex. In this paper, we use the statewise negative entropy \(\varphi(p)=\sum_{a\in\mathcal A}p(a)\log p(a)\) as the mirror potential. The potential on the full policy is its state-weighted average, $\varphi_d(\pi)=\mathbb E_{s\sim d}[\varphi(\pi(\cdot\mid s))]$ for a state weighting $d$, and every statewise statement below extends to the policy space through this average.
For a fixed state, the Bregman divergence induced by this potential is the KL divergence \(D_{\varphi}(p,q)=D_{\mathrm{KL}}(p\|q)\).  The entropy mirror potential also identifies the action probabilities
$\pi(\cdot\mid s)$ as primal variables and policy logits as dual coordinates. We represent the latter by $\log\pi(\cdot\mid s)$, omitting action-independent offsets in $\nabla\varphi$ throughout because they do not affect the map back to the simplex.
Conversely, a dual score vector \(z\in\mathbb R^{|\mathcal A|}\) maps back to the simplex by the inverse entropy mirror map, equivalently \(\nabla\varphi^*(z)\) with \(\varphi^*(z)=\log\sum_{a\in\mathcal A}\exp(z(a))\), yielding \(\pi_z(a\mid s)=\nabla\varphi^*(z)(a)=\frac{\exp(z(a))}{\sum_{b\in\mathcal A}\exp(z(b))}\). We call this map from dual scores to a policy the readout. It is the gradient of the conjugate potential, so it belongs to the mirror geometry, and any method that maintains dual scores obtains its policy through it.

\subsection{Direct functional representation}
\label{subsec:direct_functional}

In the direct functional representation, the policy itself is the state indexed collection of action distributions $\{\pi(\cdot\mid s):s\in\mathcal S\}$. At iteration $t$, actor--critic methods obtain a local functional gradient direction from the critic and use it to update the policy. In policy space, this can be viewed as freezing the current visitation density $d^{\pi_t}$ and forming a linearized improvement objective around $\pi_t$.
With respect to the \(d^{\pi_t}\)-weighted inner product $\langle f,\pi\rangle_{d}=\mathbb E_{s\sim d}[\sum_a f(s,a)\pi(a\mid s)]$ between score functions and policies, the current-policy advantage \(A^{\pi_t}\) represents the functional gradient of \(J\) at \(\pi_t\), satisfying \(\langle \nabla_\pi J(\pi_t),\pi\rangle_{d^{\pi_t}}= \langle A^{\pi_t},\pi\rangle_{d^{\pi_t}}\), up to terms independent of \(\pi\).
We therefore use $A^{\pi_t}$ as the functional gradient representative and write the local linearized improvement term as $\langle A^{\pi_t},\pi\rangle_{d^{\pi_t}}$. Because \(d^{\pi_t}\) changes across rounds, accumulated feedback needs one common coordinate system.

\subsection{Regularized Dual Averaging in Policy Space}
\label{subsec:rda_policy_space}
Regularized dual averaging (RDA) uses the same policy-space mirror geometry as Policy Mirror Descent (PMD) \citep{rw_lan2023pmd}, but applies it to a cumulative linearized objective rather than to a one-step local surrogate \citep{nesterov2009primal,xiao2010dual}.
In its general form, RDA selects $x_{t+1}=\arg\min_x\{\tfrac1t\sum_{k=1}^t\langle g_k,x\rangle+\Psi(x)+\tfrac{\beta_t}{t}\varphi(x)\}$ for loss gradients $g_k$, and the three parts play different roles. The averaged linear terms carry the feedback, and every round contributes one term at equal weight.
The composite term $\Psi$ is persistent, maintaining a constant weight relative to the feedback average and thereby shaping the solution to which the iterates converge.
The stabilizer $\varphi$ damps early iterates, and its relative weight $\beta_t/t$ must vanish for the feedback to take over, which holds whenever $\beta_t$ grows sublinearly. Our instantiation takes both $\Psi$ and $\varphi$ to be entropy, with persistent weight $\lambda$ and, in our experiments, a constant $\beta_t=\beta$. The equal-weight sum over past feedback is also the term our replay window later truncates with a finite window that keeps only recent rounds, trading the variance reduction of a long memory against the staleness of old advantage estimates (Section~\ref{sec:theory}).

In actor--critic methods, the feedback from round $t$ is generated under the visitation distribution $d^{\pi_t}$, so each linear term retains the state weighting of the policy that produced it. To regularize different policies on the same historical state support, we weight by the historical mixture $\bar d_t$ from Section~\ref{sec:preliminaries}. The resulting state-weighted policy-space RDA update is
\begin{equation}
\label{eq:rda_update}
    \pi_{t+1}
    =
    \arg\max_{\pi\in\Pi}
    \left\{
        \frac{1}{t}\sum_{k=1}^t \langle A^{\pi_k},\pi\rangle_{d^{\pi_k}}
        -\Psi_{\bar d_t}(\pi)
        -\frac{\beta_t}{t} \varphi_{\bar d_t}(\pi)
    \right\},
\end{equation}
where $\Psi_{\bar d_t}$ is the persistent composite regularizer, $\varphi_{\bar d_t}$ is a strongly convex stabilizer, and $\beta_t\ge 0$ is the stabilizer weight.
With $g_k=-A^{\pi_k}$, minimizing the averaged loss is maximizing the averaged advantage, which gives the sign convention of \eqref{eq:rda_update}.

\section{Regularized Dual Averaging Actor Critic}
\label{sec:algorithm}

We now present RDA2C, a replay-based actor--critic algorithm that instantiates the policy space RDA update from the previous section. The central object in RDA2C is a dual score function $Z_\theta:\mathcal S\times\mathcal A\to\mathbb R$, which approximates the
state-density-weighted average advantage $Z_t$ defined in
Section~\ref{subsec:dual_average_estimation}. Throughout, a score is a function on state-action pairs read in dual (logit) coordinates,  not the
likelihood score $\nabla_\theta\log\pi_\theta$ used in policy gradient methods.
RDA2C estimates the cumulative policy improvement signal required by RDA and then converts this score directly into a policy using entropy geometry.

Concretely, at each policy round the critic attaches an advantage label to each sampled state--action pair. These labeled examples are retained for several recent rounds, and $Z_\theta$ is trained by supervised regression to predict their replay-weighted average.
The policy is then obtained directly from the learned score through the entropy mirror map. RDA2C thus separates the actor update into advantage-label generation, dual-score regression, and mirror-map readout.

\subsection{Deriving the Dual Function}
\label{subsec:dual_average_estimation}

RDA is written in terms of an average dual object, so we first rewrite the accumulated, state weighted exact advantage feedback as a single inner product under the historical state mixture.
For states with $\bar d_t(s)>0$, define the density weighted average advantage score by
\begin{equation}
    \label{eq:density_weighted_average_advantage_score}
    \frac{1}{t}
    \sum_{k=1}^t
    \langle A^{\pi_k},\pi\rangle_{d^{\pi_k}}
    =
    \langle Z_t,\pi\rangle_{\bar d_t},
    \quad
    \textrm{where}
    \quad
    Z_t(s,a)
    \coloneqq
    \frac{
        \sum_{k=1}^t d^{\pi_k}(s)A^{\pi_k}(s,a)
    }{
        \sum_{k=1}^t d^{\pi_k}(s)
    } .
\end{equation}
 The cumulative advantage feedback is then represented by the single average dual score \(Z_t\).

For \textsc{RDA2C}, we set the composite regularizer in \eqref{eq:rda_update} as \(\Psi_{\bar d_t}=\lambda\varphi_{\bar d_t}\). Both terms use the negative-entropy geometry, with the roles described in Section~\ref{subsec:rda_policy_space}, where \(\lambda\varphi_{\bar d_t}\) is the persistent entropy floor, and \((\beta_t/t)\varphi_{\bar d_t}\) is the transient stabilizer. Together they give the effective entropy temperature \(\alpha_t=\lambda+\beta_t/t\). Thus, after rewriting the normalized cumulative advantage feedback as \(\langle Z_t,\pi\rangle_{\bar d_t}\), the RDA update becomes \(\pi_{t+1}=\arg\max_{\pi\in\Pi}\left\{\langle Z_t,\pi\rangle_{\bar d_t}+\alpha_t H_{\bar d_t}(\pi)\right\}\), where \(H_{\bar d_t}=-\varphi_{\bar d_t}\).

Although the density-weighted objective in \eqref{eq:rda_update} is written over the product policy space, it decomposes statewise. For any state with \(\bar d_t(s)>0\), the same positive scalar \(\bar d_t(s)\) multiplies both the linear score term and the entropy term, so it does not affect the optimizer at that state, provided the score vector \(Z_t(s,\cdot)\) can be set independently at each state, as in the tabular case. The density weights therefore determine how the average score \(Z_t\) is formed and which states are covered, and once \(Z_t(s,\cdot)\) is fixed, the policy at state \(s\) is obtained by the entropy mirror-map readout. With function approximation the weights matter again, because the approximator trades larger error at low-weight states for smaller error at high-weight ones. For finite action spaces, the entropy conjugate readout $\nabla\varphi^*$ is the softmax map. For continuous action spaces, it is the normalized Boltzmann readout. We use the quadratic score class $Z_\theta(s,a)=-\tfrac12\sum_i (a_i-\mu_{\theta,i}(s))^2/\sigma_{\theta,i}^2(s)+b_\theta(s)$, for which this readout is the diagonal Gaussian $\mathcal N\big(\mu_\theta(s),\alpha_t\operatorname{diag}(\sigma_\theta^2(s))\big)$, and the bias $b_\theta(s)$ cancels in the normalization (Appendix~\ref{subsec:continuous_actions}). Solving the statewise objective in closed-form gives the readout:
\begin{equation}
\label{eq:closed_form_final}
    \pi_{t+1}^{\mathrm{RDA}}(\cdot \mid s)
    =
    \operatorname{softmax}\Bigg(\frac{Z_t(s,\cdot)}{\alpha_t}\Bigg)
\end{equation}

\begin{algorithm}[tb]
   \small
\caption{\textsc{RDA2C}: Regularized Dual Averaging Actor Critic}
   \label{alg:rda2c}
\begin{algorithmic}
   \STATE {\bfseries Input:} replay capacity or window schedule, regularization $\lambda$, stabilizer schedule $\{\beta_t\}_{t\ge1}$
   \STATE Initialize dual score model $Z_\theta$, critic learner $\mathcal C$, replay buffer $\mathcal B_0=\emptyset$, and initial policy $\pi_1$
   \FOR{iteration $t=1,\ldots,T$}
       \STATE Collect trajectories using current policy $\pi_t$
       \STATE Compute advantage labels $\hat A_t$ with critic learner $\mathcal C$ \hfill{\emph{e.g., GAE or twin-$Q$ labels}}
       \STATE Form labeled data $\mathcal D_t=\{(s_i,a_i,\hat A_i)\}$ and push $\mathcal D_t$ into $\mathcal B_t$ \hfill{\emph{evicting old data as needed}}
       \STATE Update critic learner $\mathcal C$ using the backbone's return or TD targets
       \STATE Set temperature $\alpha_t = \lambda + \beta_t/t$
       \STATE Fit $Z_\theta$ on minibatches from $\mathcal B_t$ using \eqref{eq:dual_regression_loss}
       \STATE Read out the next policy $\pi_{t+1}=\nabla\varphi^*(Z_{\theta_t}/\alpha_t)$
   \ENDFOR
\end{algorithmic}
\end{algorithm}

\subsection{Policy as data: fitting the aggregated advantage}
\label{subsec:policy_as_data}

The update in \eqref{eq:closed_form_final} is defined through the average score $Z_t$ from \eqref{eq:density_weighted_average_advantage_score}, and no algorithm can maintain that object exactly, because it averages full advantage functions over rounds. RDA2C realizes the average statistically instead, through dataset aggregation \citep{ross2011reduction}. Each round contributes advantage-labeled samples, a sliding window retains the recent rounds, and one squared-loss regression fits a single score model to everything retained. The regression does the averaging. The mirror map does the readout. The full loop is Algorithm~\ref{alg:rda2c}.

Three variants of the algorithm follow from what kind of advantage feedback is available. The first stores the full vector $A(s,\cdot)$ at each visited state. It shows most directly how regression performs dual averaging: when a state is visited in several rounds, the regression drives $Z_\theta(s,\cdot)$ toward the average of the stored vectors. It requires enumerating the action set and a critic that can be queried at every action, including actions the recent policies never took. The second stores only the action taken at each visited state, which is all a value critic can label. Regression then targets the behavior-weighted average at each pair instead of the unweighted one, an extra error that the analysis in Section~\ref{sec:theory} accounts for, in exchange for feasibility with large action spaces and on-policy labels. The third sits between the two. With a $Q$ critic, we score two actions sampled from the current policy at each replayed state, one as the label and one as the baseline, which gives a one-sample advantage label without enumeration. The GAE settings on MuJoCo and Atari use the second variant, and the twin-$Q$ MuJoCo setting uses the third. Appendix~\ref{app:implementation_details} gives the label construction.

Formally, for the second variant, at iteration $t$ the current policy $\pi_t$ generates labeled samples $\mathcal D_t=\{(s_{t,i},a_{t,i},\widehat A_{t,i})\}_{i=1}^{n_t}$, and a FIFO or sliding window retains the recent rounds, $\mathcal B_t=\mathrm{Window}(\mathcal D_1,\ldots,\mathcal D_t)$. At actor update time, minibatches are drawn from the realized buffer, so the sampled tuples follow the buffer's empirical distribution. Let $P_t^{\mathrm{rep}}$ denote the population replay law over labeled triples $(s,a,\widehat A)$ induced by the window at iteration $t$. The population dual regression problem is to minimize
\begin{equation}
\label{eq:dual_regression_loss}
    \mathcal L_{\mathrm{dual}}(\theta,t)
    =
    \mathbb E_{(s,a,\widehat A)\sim P_t^{\mathrm{rep}}}
    \left[(Z_{\theta}(s,a)-\widehat A)^2\right].
\end{equation}
In implementation, this loss is approximated by minibatches from $\mathcal B_t$. The reported GAE runs on MuJoCo and Atari replace the squared loss by the clipped variant of Appendix~\ref{subsec:appendix_clipped_dual_regression}, which the ablation in Appendix~\ref{app:ablation_adv_loss} shows matters in practice. The twin-$Q$ runs use the squared loss on standardized labels.

For the squared-loss objective, the population regression target is the
replay conditional mean label $\widehat Z_t(s,a) = \mathbb E_{P_t^{\mathrm{rep}}}[\widehat A\mid S=s,A=a]$.
The regression therefore fits the replay Bayes target $\widehat Z_t$, and fitting on a finite buffer introduces additional sampling error.
This is the sense in which the policy is represented by data, with the buffer defining an empirical dual objective, $Z_\theta$ serving as its fitted representation, and the entropy readout of $Z_\theta$ giving the current policy.

The action weighting is the one gap the analysis closes by assumption: the bounds charge it to a coverage coefficient, which is finite when the replayed policies put positive probability on every action at every state the comparator visits (Appendix~\ref{app:layer2_coverage}).

\subsection{Connections to Key Baselines}
\label{subsec:baseline_connections}
We highlight the algorithmic differences of the baselines most relevant to our method; a broader discussion of related work appears in Appendix~\ref{app:additional_related}.

\paragraph{Connection to SAC.}
SAC and the twin-$Q$ variant of RDA2C use the same critic-learning
procedure but differ in their actor targets and update mechanisms.
SAC's actor tracks the softmax of the latest $Q$ estimate, whereas
RDA2C derives its policy from a fitted replay-weighted average of
recent critic advantage labels. SAC optimizes a KL projection objective by stochastic gradient through reparameterized action samples $a=f_\phi(\epsilon;s)$, which requires continuous actions \citep{rw_haarnoja2018sac}. RDA2C
instead fits a quadratic dual score to stored labels by supervised
regression and obtains its policy through the entropy mirror map.

Despite these differences, we hypothesize comparable performance
under matched critic learning. \citet{rw_tomar2022mdpo} show that SAC's actor step is an entropy-regularized follow-the-regularized-leader update, the same form as the regularized dual averaging step behind RDA2C, so with the critic shared the two methods differ only in how the cumulative score is formed. They also report that off-policy MDPO-KL performs on par with SAC across all six MuJoCo tasks in their evaluation, providing an empirical precedent for comparable performance despite differences in the actor update.

This parity is useful because RDA2C is not tied to a particular critic construction or action space. The same actor update can use either
GAE labels from a value critic or advantage labels from twin-$Q$
critics, and applies to both discrete and continuous action spaces.
SAC, by contrast, relies on a soft $Q$-critic construction for its actor update and is therefore not directly applicable in settings where only trajectory-based GAE labels are available, and its reparameterized actor step does not extend to discrete actions. SAC is not applicable to discrete action settings without a separate derivation \citep{rw_christodoulou2019sacdiscrete}, and the resulting discrete variant needs its own stabilization \citep{rw_zhou2022discretesac}. Our results
show that RDA2C improves over PPO in the GAE setting while matching
SAC when twin-$Q$ labels are available. Parity at SAC's own strength, together with gains over PPO where SAC does not apply, makes RDA2C the better global default across the two suites.

\paragraph{Connection to AAPDA.}
AAPDA \citep{rw_ju2026aapda} and RDA2C both build on policy dual
averaging, but differ in how they obtain the policy from the
accumulated feedback. AAPDA learns a separate actor to approximate
the policy dual-averaging target. RDA2C instead fits the dual score
from replayed advantage labels and obtains the policy directly
through the entropy mirror map, without a separate actor-fitting
stage.

\section{Theoretical Analysis}
\label{sec:theory}

The analysis is for finite action spaces and separates the entropy-RDA optimization term from the replay score errors introduced by the fitted actor. The fitted actor is the entropy readout of the learned replay score. As in \eqref{eq:closed_form_final}, $\pi_{\theta_t}$ is the policy that Algorithm~\ref{alg:rda2c} reads out at the end of round $t-1$ from $Z_{\theta_{t-1}}$ with temperature $\alpha_{t-1}$,
\begin{equation}
\label{eq:fitted_readout_theory}
    \pi_{\theta_t}(\cdot\mid s)
    =
    \operatorname{softmax}\!\left(Z_{\theta_{t-1}}(s,\cdot)/\alpha_{t-1}\right),
    \qquad
    \alpha_{t-1}=\lambda+\beta/(t-1),\quad \lambda>0,\ \beta\ge0.
\end{equation}
Here \(\lambda\) is the persistent entropy floor and \(\beta/(t-1)\) is the decaying RDA stabilizer. The analysis specializes the stabilizer schedule $\{\beta_t\}$ of Algorithm~\ref{alg:rda2c} to a constant, $\beta_t\equiv\beta$, giving $\alpha_t=\lambda+\beta/t$. The proof converts policy-space regret into value loss through the performance difference lemma in Appendix~\ref{app:performance_difference_lemma}.

\paragraph{Replay window and regression bridge.}
The theory analyzes a replay window $\mathcal W_t=\{t-W_t,\dots,t-1\}$ of length $W_t=\max\{1,\lfloor t^\zeta\rfloor\}$. Using the state-action occupancy notation from Section~\ref{sec:preliminaries}, set $\mu_k=\mu^{\pi_{\theta_k}}$, i.e., $\mu_k(s,a)=d^{\pi_{\theta_k}}(s)\pi_{\theta_k}(a\mid s)$, and let $\mu_t(s,a)=W_t^{-1}\sum_{k\in\mathcal W_t}\mu_k(s,a)$ be the population replay marginal. The squared-loss replay regression fits $Z_{\theta_{t-1}}$ to the replay Bayes target of \eqref{eq:dual_regression_loss} and, with true advantage labels $A_k=A^{\pi_{\theta_k}}$, this target is
$
    Z_t^A(s,a)=\big(\textstyle\sum_{k\in\mathcal W_t}\mu_k(s,a)A_k(s,a)\big)\big/\big(\sum_{k\in\mathcal W_t}\mu_k(s,a)\big),
$
on the support of $\mu_t$. Thus $Z_t^A$ is the replay-weighted average advantage that bridges the current advantage $A_t$ and the learned score $Z_{\theta_{t-1}}$, where $A_k=A^{\pi_{\theta_k}}$ are the advantages of the learned policies. The comparator coverage by replay is measured by $C_{\mathrm{cov}}(\pi^\star)=\max_{1\le t\le T}\operatorname*{ess\,sup}_{s,a} d^{\pi^\star}(s)/\mu_t(s,a)$, which is finite if $\mu_t(s,a)>0$ for every action $a$ at every state with $d^{\pi^\star}(s)>0$, a universal action coverage condition at comparator states.

We first state the bound for the regularized value gap, the average regret against the comparator $\pi^\star$ with the persistent entropy correction kept on the left-hand side:
$$
    \mathcal G_T^{\mathrm{reg}}(\pi^\star)
    \coloneqq
    \frac1{T-1}\sum_{t=2}^T
    \left[
    \big(J(\pi^\star)-J(\pi_{\theta_t})\big)
    +
    \frac{\lambda}{1-\gamma}
    \mathbb E_{s\sim d^{\pi^\star}}
    \big(
        \varphi(\pi_{\theta_t}(\cdot\mid s))-\varphi(\pi^\star(\cdot\mid s))
    \big)
    \right].
$$

\begin{theorem}[Fitted replay value bound]
\label{thm:fitted_replay_value_bound}
Assume $\mathcal A$ finite, $|r(s,a)|\le R_{\max}$, the policy path stability of Assumption~\ref{ass:learned_policy_path_stability_clean} (constant $C_\pi$) and the replay error bounds of Assumption~\ref{ass:replay_error_decomp} (Appendix~\ref{app:theory_setup}), and $C_{\mathrm{cov}}(\pi^\star)<\infty$. Let $\zeta\in(0,1)$ and $C_{\mathrm{loc}}=\frac{(1+\gamma)R_{\max}C_\pi}{(1-\gamma)^2(1-2^{\zeta-1})}$. Then for every stationary comparator $\pi^\star$ satisfying the above assumptions,
$$
\begin{aligned}
\mathcal G_T^{\mathrm{reg}}(\pi^\star)
\le \frac{1}{1-\gamma}\Bigg[\,
&\frac{\beta\ln|\mathcal A|(1+\ln T)}{T-1}
+\frac{2C_{\mathrm{loc}}}{\zeta}\frac{T^\zeta}{T-1}\\[-0.4ex]
&+2\sqrt{C_{\mathrm{cov}}(\pi^\star)}\Big(\bar\epsilon_{\mathrm{fit}}+\epsilon_0+\tfrac{C_{\mathrm{var}}}{1-\zeta/2}\tfrac{T^{1-\zeta/2}}{T-1}\Big)
\Bigg]
=:\mathcal B_T(\pi^\star).
\end{aligned}
$$
\end{theorem}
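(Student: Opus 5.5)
We start from the performance difference lemma (Appendix~\ref{app:performance_difference_lemma}), which gives, for each round $t\ge2$,
\[
J(\pi^\star)-J(\pi_{\theta_t})=\frac{1}{1-\gamma}\,\mathbb E_{s\sim d^{\pi^\star}}\big\langle A_t(s,\cdot),\pi^\star(\cdot\mid s)-\pi_{\theta_t}(\cdot\mid s)\big\rangle .
\]
The inner product against $\pi_{\theta_t}$ vanishes because $\mathbb E_{a\sim\pi_{\theta_t}}[A_t(s,a)]=0$, so only the $\pi^\star$ term survives. After adding the entropy correction, the summand of $(1-\gamma)\mathcal G_T^{\mathrm{reg}}$ becomes a statewise linear regret against $\pi^\star$ under the fixed comparator density $d^{\pi^\star}$. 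This is the key simplification: the density no longer changes across rounds.

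Next we decompose the feedback. Write
\[
A_t=Z_{\theta_{t-1}}+(Z_t^A-Z_{\theta_{t-1}})+(A_t-Z_t^A).
\]

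\textbf{Optimization term.} The term involving $Z_{\theta_{t-1}}$ is handled by the standard entropy-RDA/FTRL argument. Because $\pi_{\theta_t}=\operatorname{softmax}(Z_{\theta_{t-1}}/\alpha_{t-1})$ maximizes $\langle Z_{\theta_{t-1}},\pi\rangle+\alpha_{t-1}H(\pi)$ statewise, we have
\[
\langle Z_{\theta_{t-1}},\pi^\star-\pi_{\theta_t}\rangle+\lambda\big(\varphi(\pi_{\theta_t})-\varphi(\pi^\star)\big)\le\frac{\beta}{t-1}\big(\varphi(\pi_{\theta_t})-\varphi(\pi^\star)\big)\le\frac{\beta\ln|\mathcal A|}{t-1}.
\]
Here $\lambda$ cancels the persistent part, leaving only the stabilizer $\beta/(t-1)$ times an entropy range of at most $\ln|\mathcal A|$. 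Summing over $t$ gives $\beta\ln|\mathcal A|\sum_t 1/(t-1)\le\beta\ln|\mathcal A|(1+\ln T)$, which is the first term of $\mathcal B_T$.

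I would check whether the intended argument is this per-round optimality comparison or a genuine FTRL telescoping with cumulative scores. Since the readout of $Z_{\theta_{t-1}}$ already encodes the average, I expect the per-round version to suffice.

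\textbf{Staleness term.} The term $A_t-Z_t^A$ captures drift within the window. On the support, $Z_t^A$ is a $\mu_k$-weighted convex combination of $A_k$ for $k\in\mathcal W_t$. Hence
\[
|A_t-Z_t^A|\le\max_{k\in\mathcal W_t}\|A_t-A_k\|_\infty .
\]
By Assumption~\ref{ass:learned_policy_path_stability_clean}, together with the standard perturbation bound $\|A^\pi-A^{\pi'}\|_\infty\lesssim\frac{(1+\gamma)R_{\max}}{(1-\gamma)^2}\|\pi-\pi'\|$, the gap $\|A_t-A_k\|$ is bounded by roughly $C_\pi\sum_{j=k}^{t-1}1/j$. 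Under the window $W_t=\lfloor t^\zeta\rfloor$ this behaves like $W_t/t\approx t^{\zeta-1}$. Summing $t^{\zeta-1}$ produces $T^\zeta/\zeta$, and a dyadic-block argument that sums over blocks $[2^m,2^{m+1})$ supplies the geometric factor $(1-2^{\zeta-1})^{-1}$. Together these give $\frac{2C_{\mathrm{loc}}}{\zeta}T^\zeta$, where the factor 2 comes from pairing with the two policies.

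\textbf{Replay errors.} The term $Z_t^A-Z_{\theta_{t-1}}$ contains the fitting error, the critic-label bias $\widehat Z_t-Z_t^A$, and the finite-buffer variance. We move it from $d^{\pi^\star}\pi^\star$ onto the replay law by Cauchy--Schwarz with importance ratio $d^{\pi^\star}/\mu_t\le C_{\mathrm{cov}}$:
\[
\mathbb E_{d^{\pi^\star}}\langle e,\pi^\star-\pi_{\theta_t}\rangle\le 2\sqrt{C_{\mathrm{cov}}}\,\|e\|_{L^2(\mu_t)} .
\]
Assumption~\ref{ass:replay_error_decomp} then supplies three bounds. The fitting error contributes $\bar\epsilon_{\mathrm{fit}}$ on average and the label bias contributes $\epsilon_0$. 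The variance contributes $C_{\mathrm{var}}/\sqrt{W_t}\approx C_{\mathrm{var}}t^{-\zeta/2}$, which sums to $\frac{C_{\mathrm{var}}}{1-\zeta/2}T^{1-\zeta/2}$. Finally we divide everything by $T-1$ and by $1-\gamma$.

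\textbf{Main obstacle.} I expect the staleness step to be hardest. It must turn path stability of the policies into uniform advantage drift and then sum over the sliding window with exactly the stated constant $C_{\mathrm{loc}}$. A second difficulty is applying Cauchy--Schwarz with the action-level coverage ratio when comparing against $\pi_{\theta_t}$, which is what produces the factor 2.
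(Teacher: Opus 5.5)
Your route is the paper's: the performance difference lemma under the fixed density $d^{\pi^\star}$, the split $A_t=Z_{\theta_{t-1}}+(Z_t^A-Z_{\theta_{t-1}})+(A_t-Z_t^A)$, and a per-round comparison with the exact readout (no FTRL telescoping is needed, as you suspected). It continues with the convex-combination drift bound, a coverage change of measure, and the triangle inequality through $\widetilde Z_t$ and $\widehat Z_t$.

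\textbf{The gap: where $C_{\mathrm{loc}}$ comes from.} You write $\sum_{j=k}^{t-1}1/j\approx W_t/t$ and credit the factor $(1-2^{\zeta-1})^{-1}$ to a dyadic-block summation of $t^{\zeta-1}$. That step does not work as written. A dyadic argument yields a factor of order $(1-2^{-\zeta})^{-1}$ instead. It would also replace, not accompany, the $1/\zeta$ you already get from $\sum_{t=2}^T t^{\zeta-1}\le\int_1^T x^{\zeta-1}\,dx\le T^\zeta/\zeta$. Moreover, ``$\approx W_t/t$'' is not an upper bound: at $t=2$ the sum equals $1$ while $W_2/2=1/2$.

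The missing ingredient is a lower bound on the oldest index in the window. For $t\ge2$ one has $t^{\zeta-1}\le2^{\zeta-1}$, so every $k\in\mathcal W_t$ satisfies $k\ge t-W_t\ge t-t^\zeta\ge(1-2^{\zeta-1})t$. Hence
\[
\sum_{j=k}^{t-1}\frac1j\le\frac{t-k}{t-W_t}\le\frac{W_t}{(1-2^{\zeta-1})t}\le\frac{t^{\zeta-1}}{1-2^{\zeta-1}} .
\]
Combined with $L_A=(1+\gamma)R_{\max}C_\pi/(1-\gamma)^2$ from the advantage-Lipschitz bound, this gives $\|A_t-Z_t^A\|_\infty\le C_{\mathrm{loc}}t^{\zeta-1}$ uniformly in $t\ge2$. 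Only then does the integral comparison produce $\frac{2C_{\mathrm{loc}}}{\zeta}T^\zeta$. The factor $1-2^{\zeta-1}$ is the uniform correction for how far back the window reaches relative to $t$, worst at $t=2$. It has nothing to do with dyadic blocks.

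\textbf{Two smaller points.} First, readout optimality gives
\[
\langle Z_{\theta_{t-1}},\pi^\star-\pi_{\theta_t}\rangle+\lambda\bigl(\varphi(\pi_{\theta_t})-\varphi(\pi^\star)\bigr)\le-\frac{\beta}{t-1}\bigl(\varphi(\pi_{\theta_t})-\varphi(\pi^\star)\bigr),
\]
with a minus sign. Your bound $\beta\ln|\mathcal A|/(t-1)$ survives because $\varphi\in[-\ln|\mathcal A|,0]$.

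Second, the coverage difficulty you flag resolves itself. The numerator of $C_{\mathrm{cov}}$ is the state density $d^{\pi^\star}(s)$ with no action factor. The paper therefore uses $\|\pi^\star-\pi_{\theta_t}\|_2\le2$ statewise, then
\[
\mathbb E_{s\sim d^{\pi^\star}}\|e(s,\cdot)\|_2\le\Bigl(\sum_{s,a}d^{\pi^\star}(s)e(s,a)^2\Bigr)^{1/2}\le\sqrt{C_{\mathrm{cov}}(\pi^\star)}\,\|e\|_{2,\mu_t}.
\]
Alternatively, you can bound the $\pi^\star$ and $\pi_{\theta_t}$ parts separately under $d^{\pi^\star}\pi^\star$ and $d^{\pi^\star}\pi_{\theta_t}$. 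Both are dominated by $d^{\pi^\star}(s)$, so this gives the same factor $2$.
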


Converting the regularized gap to ordinary value costs an entropy bias term. The conversion turns the guarantee into an ordinary average regret bound against $\pi^\star$, comparable with unregularized methods, with an explicit bias term that scales with $\lambda$ and represents the price of the entropy floor retained on the left-hand side of the regularized bound.

\begin{corollary}[Unregularized value bound]
\label{cor:unregularized_rate}
Under the assumptions of Theorem~\ref{thm:fitted_replay_value_bound}, moving the entropy correction to the right gives
$$
    \frac1{T-1}\sum_{t=2}^T \big(J(\pi^\star)-J(\pi_{\theta_t})\big)
    \le
    \frac{\lambda\ln|\mathcal A|}{1-\gamma}
    +
    \mathcal B_T(\pi^\star),
$$
with $\mathcal B_T(\pi^\star)$ the right hand side of Theorem~\ref{thm:fitted_replay_value_bound}. With $\zeta=2/3$ and fixed $\lambda>0$ the ordinary value gap converges at rate $T^{-1/3}$ to the floor set by the entropy bias $\frac{\lambda\ln|\mathcal A|}{1-\gamma}$ and the persistent fitting and critic error $\frac{2\sqrt{C_{\mathrm{cov}}(\pi^\star)}}{1-\gamma}(\bar\epsilon_{\mathrm{fit}}+\epsilon_0)$.
\end{corollary}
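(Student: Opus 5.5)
The corollary follows from Theorem~\ref{thm:fitted_replay_value_bound} once we show that the entropy correction on its left-hand side is bounded below by $-\lambda\ln|\mathcal A|/(1-\gamma)$; the rate claim then comes from substituting $\zeta=2/3$ into $\mathcal B_T(\pi^\star)$.

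\textbf{Step 1: separate the two parts of the regularized gap.} By the definition of $\mathcal G_T^{\mathrm{reg}}(\pi^\star)$,
$$
\frac1{T-1}\sum_{t=2}^T\big(J(\pi^\star)-J(\pi_{\theta_t})\big)
=\mathcal G_T^{\mathrm{reg}}(\pi^\star)-\frac{\lambda}{1-\gamma}\,\frac1{T-1}\sum_{t=2}^T\mathbb E_{s\sim d^{\pi^\star}}\big(\varphi(\pi_{\theta_t}(\cdot\mid s))-\varphi(\pi^\star(\cdot\mid s))\big).
$$

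\textbf{Step 2: bound the entropy term statewise.} Since $\varphi$ is negative entropy on a finite simplex, $-\ln|\mathcal A|\le\varphi(p)\le 0$ for every $p\in\Delta(\mathcal A)$. Hence
$$
\varphi(\pi_{\theta_t}(\cdot\mid s))-\varphi(\pi^\star(\cdot\mid s))\ge -\ln|\mathcal A|
$$
pointwise in $s$. Because $d^{\pi^\star}$ is a probability distribution, taking the expectation and averaging over $t$ preserves this bound. Therefore the subtracted term in Step 1 is at most $\lambda\ln|\mathcal A|/(1-\gamma)$.

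\textbf{Step 3: combine with the theorem.} Applying Theorem~\ref{thm:fitted_replay_value_bound} to $\mathcal G_T^{\mathrm{reg}}(\pi^\star)$ gives
$$
\frac1{T-1}\sum_{t=2}^T\big(J(\pi^\star)-J(\pi_{\theta_t})\big)\le \frac{\lambda\ln|\mathcal A|}{1-\gamma}+\mathcal B_T(\pi^\star),
$$
which is the displayed inequality.

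\textbf{Step 4: the rate at $\zeta=2/3$.} Examine each term of $\mathcal B_T(\pi^\star)$ with $1-\zeta/2=2/3$:
\begin{itemize}
\item the stabilizer term $\beta\ln|\mathcal A|(1+\ln T)/(T-1)$ is $O(T^{-1}\ln T)$;
\item the staleness term $\tfrac{2C_{\mathrm{loc}}}{\zeta}T^{2/3}/(T-1)$ is $O(T^{-1/3})$;
\item the variance term $\tfrac{C_{\mathrm{var}}}{2/3}\,T^{2/3}/(T-1)$ is $O(T^{-1/3})$;
\item the term $\frac{2\sqrt{C_{\mathrm{cov}}(\pi^\star)}}{1-\gamma}(\bar\epsilon_{\mathrm{fit}}+\epsilon_0)$ does not depend on $T$.
\end{itemize}
The $T$-dependent terms are therefore dominated by $T^{-1/3}$, and what remains is the entropy bias $\frac{\lambda\ln|\mathcal A|}{1-\gamma}$ plus the fitting and critic floor.

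The choice $\zeta=2/3$ balances the staleness term, of order $T^{\zeta-1}$, against the variance term, of order $T^{-\zeta/2}$: setting $\zeta-1=-\zeta/2$ gives $\zeta=2/3$.

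The argument contains no real obstacle. The one point requiring care is the sign in Step 2: we need a lower bound on $\varphi(\pi_{\theta_t})-\varphi(\pi^\star)$, and it follows from $\varphi(\pi_{\theta_t})\ge-\ln|\mathcal A|$ together with $\varphi(\pi^\star)\le0$.
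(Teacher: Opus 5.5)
Your proof is correct and follows the paper's own argument. The paper also moves the entropy correction to the right using $\varphi\in[-\ln|\mathcal A|,0]$, so that $\varphi(\pi^\star(\cdot\mid s))-\varphi(\pi_{\theta_t}(\cdot\mid s))\le\ln|\mathcal A|$, and then reads off the $T^{-1/3}$ rate at $\zeta=2/3$ by separating the $T$-independent fitting and critic floor from the vanishing stabilizer, drift, and variance terms. One wording point in Step 2: what you need, and what your lower bound gives, is that the \emph{negative} of the subtracted term is at most $\lambda\ln|\mathcal A|/(1-\gamma)$, not the subtracted term itself.
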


\section{Experiments}
\label{sec:experiments}
We evaluate whether the replay estimator proposed in Section~\ref{sec:algorithm} gives a useful actor update in both continuous and discrete control. The experiments ask five questions: whether RDA2C improves over PPO when both methods use GAE labels, whether the same actor holds parity with SAC when labels come from twin-\(Q\) critics, whether it improves on AAPDA, the prior dual-averaging actor, at that method's released defaults, whether the same update carries over to image-based discrete control on Atari, and whether the replay window behaves like the finite replay and drift terms in Section~\ref{sec:theory}. Additional ablations and control experiments in Appendix~\ref{app:additional_ablations} examine the effects of individual algorithmic components and implementation choices.

\subsection{Experimental setup}

\begin{figure}[t]
    \centering
    \includegraphics[width=\linewidth]{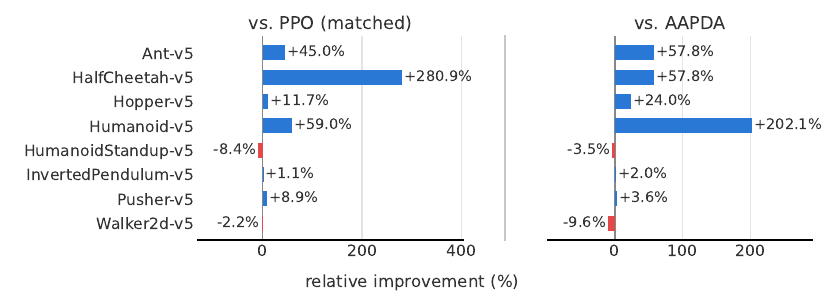}

    \caption{\textbf{RDA2C improves over matched PPO on six of eight MuJoCo tasks and over AAPDA on six of eight.} Bars show per-environment relative improvement, $100\cdot(\bar R_{\mathrm{ours}}-\bar R_{\mathrm{base}})/|\bar R_{\mathrm{base}}|$. Left: PPO matched to RDA2C's architecture and hyperparameters (GAE labels, online return). Right: AAPDA at its released defaults (deterministic evaluation). Exact values: Appendix~\ref{app:main_result_tables}.}
    \label{fig:mujoco_summary}
\end{figure}

\paragraph{Environments and evaluation.}
For the continuous control suite, we use eight Gymnasium MuJoCo v5 tasks \citep{todorov2012mujoco}.
Both the GAE and twin-$Q$ critic MuJoCo comparisons run for 1M environment steps and use 10 random seeds. The SAC comparison reports deterministic mean action evaluation over 20 episodes every 50K steps. 
For the Atari suite, we evaluate on twelve Atari 2600 games \citep{bellemare13arcade} for 10M frames with 10 random seeds, using the EnvPool vectorized environment wrapper \citep{weng2022envpool}. Across both suites, we follow the evaluation and reporting conventions used in the corresponding baseline literature, as detailed in Appendix~\ref{app:main_result_tables}.

\paragraph{Baselines and model classes.}

The GAE-based MuJoCo comparison uses PPO \citep{schulman2017proximal}
as the baseline. We match the $2\times128$ trunks, learning rate,
gradient clipping, GAE $\lambda$, batch sizes, update epochs, and
horizon, so the actor update is the intended difference.
Both methods use a value network separate from the actor. RDA2C derives its policy from a dual score that is quadratic in the actions (Appendix~\ref{subsec:continuous_actions}), whereas PPO directly parameterizes a Gaussian policy with a global log-standard-deviation.

The twin-$Q$ MuJoCo comparison uses SAC \citep{rw_haarnoja2018sac}
with matched transition-replay settings, twin-$Q$ critic architectures,
target updates, optimizer, batch size ($512$), and policy update
frequency ($1$). 
RDA2C retains the critic-learning procedure but replaces
SAC's actor update by fitting a quadratic dual score to replayed
advantage labels and obtaining the policy through the entropy mirror map.

We also compare against actor-accelerated policy dual averaging
(AAPDA) \citep{rw_ju2026aapda} on MuJoCo. AAPDA learns a separate
actor to approximate the policy dual-averaging target, whereas
RDA2C reads its policy directly from the fitted dual score.
We use AAPDA's released defaults without
per-task tuning, with both methods trained for 1M environment steps
over 10 paired training seeds. The evaluation protocol is detailed
in Appendix~\ref{app:main_result_tables}.

For Atari, RDA2C uses a discrete softmax readout and separate convolutional feature extractors for the score model and value function, while PPO keeps the shared trunk of its standard implementation \citep{huang2022ppo37}.

\subsection{Experimental results}

\paragraph{GAE labels: RDA2C improves over PPO on six of eight MuJoCo tasks} Figure~\ref{fig:mujoco_summary} summarizes the relative improvement over matched PPO, and Appendix Table~\ref{tab:mujoco_results} reports the raw mean and standard deviation over 10 seeds. RDA2C improves over matched PPO on six of eight MuJoCo tasks and gives a fixed-suite average per-environment relative improvement of $+49.5\%$ (paired seed bootstrap 95\% CI $[+42.1\%,+56.2\%]$).\footnote{\label{fn:bootstrap}Percentile bootstrap intervals can undercover at seed counts like ours: \citet{jordan2020evaluating} measure failure rates above twice the nominal level, and in a simulation matched to our data the suite-level (mujoco and atari suites) procedure used here covered a nominally 95\% target 90.5\% of the time. We therefore read these intervals as descriptive uncertainty summaries. This caveat applies to every bootstrap interval in the paper.} The largest gains are on \texttt{HalfCheetah-v5}, where RDA2C reaches $7099$ compared with PPO's $1864$, and \texttt{Humanoid-v5}, where RDA2C reaches $3768$ compared with PPO's $2370$. These gains are obtained against a strengthened PPO baseline. The matched configuration used here has higher mean returns than PPO's CleanRL defaults on five of eight tasks (Appendix~\ref{app:default_baselines}).

\paragraph{Twin-$Q$ critic labels: RDA2C matches SAC under matched settings}
\label{subsubsec:twinq}
RDA2C is higher on four tasks, ties on one, and is lower on three. Appendix Table~\ref{tab:mujoco_sac_results} reports the underlying best deterministic evaluations within the 1M-step budget over 10 seeds. The suite-level summary is a fixed-suite average per-environment relative change of $+0.64\%$ (paired seed bootstrap 95\% CI $[-2.80\%,+4.06\%]$)\textsuperscript{\ref{fn:bootstrap}}. The largest single-task difference favors RDA2C on \texttt{Walker2d-v5} ($4505$ vs $4080$), and the largest in SAC's favor is on \texttt{HalfCheetah-v5} ($9525$ vs $10119$). No taskwise difference is decisive at 10 seeds. The outcome matches the expectation in Section~\ref{subsec:baseline_connections}, and it answers the question the experiment asks: the same dual-score regression works with twin-$Q$ labels in place of GAE, at SAC's level of return. The comparison against SAC at its CleanRL-default batch size and update frequency appears in Appendix~\ref{app:default_baselines}.

\paragraph{AAPDA: RDA2C improves on the closest dual-averaging baseline}
\label{subsubsec:aapda}

AAPDA is the closest runnable dual-averaging baseline \citep{rw_ju2026aapda}, described in Appendix~\ref{sec:related-work}. We ran the authors' released defaults for 1M steps with no per-task tuning. RDA2C is higher on six of eight tasks. The right panel of Figure~\ref{fig:mujoco_summary} summarizes the comparison, and Appendix Table~\ref{tab:aapda_results} gives the per-task returns. The fixed-suite mean relative change is $+41.8\%$ (paired seed bootstrap 95\% CI $[+26.3\%,+71.2\%]$)\textsuperscript{\ref{fn:bootstrap}} and the median is $+13.8\%$ (CI $[+4.9\%,+27.0\%]$)\textsuperscript{\ref{fn:bootstrap}}. AAPDA is ahead on \texttt{HumanoidStandup-v5} and \texttt{Walker2d-v5}.

\paragraph{Atari: RDA2C improves over PPO on eight of twelve games}

Figure~\ref{fig:atari_summary} reports relative improvement over PPO after 10M frames, and Appendix Table~\ref{tab:atari_comparison} gives the raw returns. RDA2C improves over PPO on 8 of the 12 Atari games, with a fixed-suite average per-environment relative improvement of $+112.4\%$
(paired seed bootstrap 95\% CI $[+88.5\%,+141.2\%]$)\textsuperscript{\ref{fn:bootstrap}} and a median relative improvement of $+32.9\%$. The largest gains are on \texttt{Seaquest-v5}, \texttt{Enduro-v5}, and \texttt{BeamRider-v5}. PPO is higher on \texttt{Breakout-v5}, \texttt{Freeway-v5}, \texttt{Pong-v5}, and \texttt{Qbert-v5}. The Atari results show that the same average advantage parameterization can scale to image-based discrete control. The table reports mean running return and seed standard deviation over 10 random seeds.

\begin{figure}[t]
    \centering
    \includegraphics[width=\linewidth]{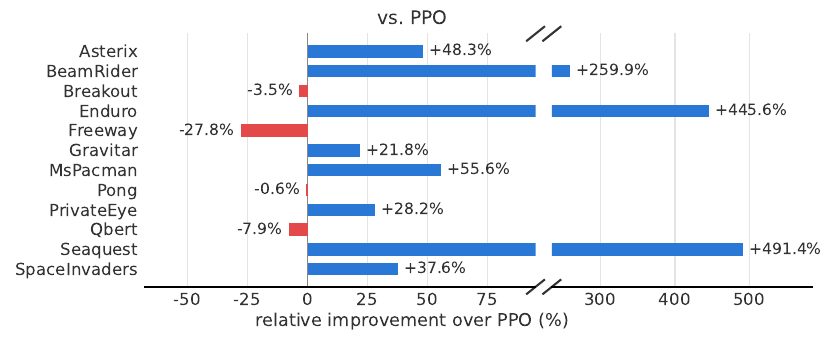}

    \caption{\textbf{RDA2C improves over PPO on eight of twelve Atari games.} Bars show relative improvement over PPO, $100\cdot(\bar R_{\mathrm{RDA2C}}-\bar R_{\mathrm{PPO}})/|\bar R_{\mathrm{PPO}}|$, after 10M frames. The value axis is broken between $+95\%$ and $+235\%$. The bars for \texttt{BeamRider-v5}, \texttt{Enduro-v5}, and \texttt{Seaquest-v5} continue in the right segment. Exact mean $\pm$ std values are reported in Appendix~\ref{app:main_result_tables}, Table~\ref{tab:atari_comparison}.}
    \label{fig:atari_summary}
\end{figure}

\section{Conclusion and Future Work}

We introduced RDA2C, a replay-based actor--critic implementation of regularized dual averaging via advantage regression. In RDA2C, replayed advantage-labeled samples define an empirical cumulative dual objective, which is summarized by a learned score model. The policy is then obtained directly through the entropy mirror map. Replay then serves both as a source of additional optimization steps and as the representation of policy-improvement feedback from which the current actor is derived.

\newpage
\label{sec:mainend}
\subsection*{AI use statement}
In this work, we used generative AI tools to implement analysis and sweep code. Additionally, we used generative AI tools to edit parts of the paper for readability and structure and to create and modify figures. We have reviewed all AI-assisted work. All experiments were run by the authors, all AI-generated code was checked and tested by the authors, and all AI-drafted text was revised by the authors. We take responsibility for the final content of this work, including text, claims, or artifacts produced with the aid of generative AI.

\subsection*{Reproducibility statement}
Appendix~\ref{app:implementation_details} gives network architectures, hyperparameters for every setting, and the label constructions. Appendix~\ref{app:main_result_tables} reports the numerical results behind every figure, with seed counts and evaluation protocols. The assumptions and complete proofs of the theoretical results are in Appendix~\ref{app:proofs}. The code for all three entrypoints follows the CleanRL single-file convention and is available at \url{https://github.com/InsultedByMaths/rda2c}.

\subsection*{Acknowledgments}
KB and NP acknowledge this work has been made possible in part by a gift from the Chan Zuckerberg Initiative Foundation to establish the Kempner Institute for the Study of Natural and Artificial Intelligence.

\bibliography{example_paper}
\bibliographystyle{iclr2027_conference}

\clearpage
\appendix

\makeatletter
\let\app@original@addcontentsline\addcontentsline
\renewcommand{\addcontentsline}[3]{%
  \app@original@addcontentsline{#1}{#2}{#3}%
  \def\app@toc{toc}%
  \def\app@target{#1}%
  \ifx\app@toc\app@target
    \app@original@addcontentsline{apc}{#2}{#3}%
  \fi
}
\makeatother

\section*{Appendix Contents}
\begingroup
\small
\setcounter{tocdepth}{2}
\makeatletter
\@starttoc{apc}
\makeatother
\endgroup

\clearpage

\FloatBarrier
\section{Implementation Details}
\label{app:implementation_details}
\FloatBarrier
\subsection{Network Architectures}
We utilize standard architectures for continuous and discrete control domains and report the relevant capacity differences explicitly. For GAE MuJoCo, RDA2C uses the quadratic dual score actor described below with a separate critic, while PPO uses the standard \href{https://github.com/vwxyzjn/cleanrl}{CleanRL} Gaussian policy/value architecture.

\paragraph{Continuous control (MuJoCo, GAE labels).}
For the GAE MuJoCo experiments, the actor is the quadratic dual score network described in Section~\ref{sec:algorithm}. It has a two-layer Tanh MLP trunk of width 128 and three heads: Gaussian mean, state dependent log standard deviation, and scalar bias $b_\theta(s)$. The value function is a separate two-layer Tanh MLP of width 128. During rollout collection we also store the old dual score prediction $Z_{\mathrm{old}}(s,a)$ for each sampled transition, which is used only by the optional clipped dual regression loss of Appendix~\ref{subsec:appendix_clipped_dual_regression}.
\begin{itemize}
    \item \textbf{Dual Score / Policy Network:} Input state $\rightarrow$ Linear(128) $\rightarrow$ Tanh $\rightarrow$ Linear(128) $\rightarrow$ Tanh $\rightarrow$ Mean / LogStd / Bias heads.
    \item \textbf{Value Network:} Input state $\rightarrow$ Linear(128) $\rightarrow$ Tanh $\rightarrow$ Linear(128) $\rightarrow$ Tanh $\rightarrow$ Linear(1).
\end{itemize}

\paragraph{Update schedule (MuJoCo, GAE labels).}
Each iteration collects one rollout of $2048$ steps from a single environment, computes GAE labels, and pushes the $2048$ tuples $(s,a,\hat A,Z_{\mathrm{old}})$ into the FIFO label buffer of capacity $W=50{,}000$, which therefore holds the most recent $24$ rollouts once full. The stored action is the unsquashed Gaussian sample. The environment receives that sample clipped to the action bounds, so the policy is a clipped Gaussian and the regression is fit on the pre-clip action. The dual regression then runs $10$ epochs of $32$ minibatches of $64$ tuples, $320$ gradient steps per iteration. The $20{,}480$ tuples are drawn uniformly with replacement from the buffer once per iteration, standardized as a batch, shuffled, and consumed sequentially, so a stored label is used about $0.4$ times per iteration on average and about $10$ times over its residence in the buffer. The value loss in the same gradient step uses the on-policy rollout minibatch of the same size. The temperature $\alpha_t=\lambda+\beta/t$ is computed once per rollout, so with $\lambda=0.05$ and $\beta=1$ it decays from $1.05$ to $\lambda+\beta/488$ over the $488$ rollouts that make up 1M steps.

\paragraph{Continuous control (MuJoCo, twin-$Q$ critic labels).}
For the SAC comparison, both SAC and RDA2C with twin-$Q$ labels use the standard CleanRL SAC backbone: two-layer MLPs with width 256 and ReLU activations, together with twin-$Q$ critics. The SAC baseline uses the standard squashed Gaussian actor and default SAC hyperparameters. Our RDA2C with twin-$Q$ labels variant reuses the same replay / critic backbone and the same width actor trunk, but replaces the SAC actor loss with regression onto the learned quadratic dual score from Section~\ref{sec:algorithm}. Concretely, the RDA2C with twin-$Q$ labels actor has mean, log standard deviation, and scalar bias heads, while each $Q$-network maps $(s,a)$ through 256-256 ReLU layers to a scalar estimate.

The reported twin-$Q$ critic RDA2C runs use \emph{automatic temperature tuning} in place of the hand-designed RDA schedule used in the GAE experiments. Specifically, for RDA2C twin-$Q$ we optimize a scalar $\log \alpha_t$ against target entropy $-|A|$. This learned temperature determines the sampling scale of the squashed Gaussian policy through $\sigma(s)\sqrt{\alpha_t}$ and enters both the soft critic bootstrap target and the actor regression labels. For $K=2$ current-policy samples $a_1,\ldots,a_K$ at a replayed state $s$, we define
$$
    x_i(s)=\min(Q_1,Q_2)(s,a_i)-\alpha_t\log\pi_\theta(a_i\mid s),
    \qquad
    g_i(s)=x_i(s)-\frac{1}{K-1}\sum_{j\ne i}x_j(s).
$$
The regression buffer stores $(s,a_i,g_i)$, where $a_i$ is the corresponding pre-Tanh Gaussian sample. We store raw labels in a FIFO label buffer of total size $W=12{,}800$, standardize them within sampled regression minibatches, clip the standardized targets to $[-10,10]$, and regress the quadratic dual score $Z_\theta(s,A_i)$ to these buffered targets. In the primary comparison both methods use batch size 512 and policy frequency 1. SAC at its CleanRL defaults, batch size 256 and policy frequency 2, is reported separately in Appendix~\ref{app:default_baselines}. Both methods use 5 parallel environments and deterministic evaluation over 20 episodes. Because autotuning is enabled throughout this comparison, the RDA schedule parameters $(\lambda,\beta)$ are inactive.

All MuJoCo runs used 2 CPU cores and no GPUs. All Atari runs used a single H100 GPU.

\FloatBarrier
\subsection{Action classes and label settings}
\paragraph{Handling discrete and continuous actions.}

\label{subsec:continuous_actions}
For discrete actions, \(Z_\theta(s)\) outputs one score per action, and the entropy readout is the corresponding softmax policy. For continuous control, we choose a quadratic dual score class,
\(Z_\theta(s,a)=-\tfrac12\sum_i \tfrac{(a_i-\mu_{\theta,i}(s))^2}{\sigma_{\theta,i}^2(s)}+b_\theta(s)\).
Under the same entropy mirror map, \(\operatorname{softmax}(Z_\theta(s, \cdot) / \alpha_t)\)
is the diagonal Gaussian \(\mathcal N\left(\mu_\theta(s), \alpha_t\,\operatorname{diag}(\sigma_\theta^2(s))\right)\). The additive term \(b_\theta(s)\) cancels in the normalization and does not affect the policy.

\paragraph{Labels, replay windows, and stabilization.}

The GAE label variants in MuJoCo and Atari compute advantage labels from a value critic using generalized advantage estimates \citep{schulman2015high}. The twin-$Q$ label variant constructs labels from the two critics by sampling actions from the current policy at replayed states and taking the smaller of the resulting soft values. The full label construction is given above. All RDA2C variants use a fixed sliding replay window for the dual regression problem. This label window is separate from the transition replay buffer that the twin-$Q$ backbone shares with SAC. Unless changed in the ablations, we use $W=50{,}000$ label tuples for the GAE MuJoCo setting, $W=12{,}800$ label tuples for the twin-$Q$ critic MuJoCo setting, and $W=500{,}000$ frames for Atari. The Atari GAE runs use clipped dual regression with clip radius $1.0$, stabilizer $\beta=0.6$, and entropy floor $\lambda=0.005$. The clipped dual regression is described in Appendix~\ref{subsec:appendix_clipped_dual_regression}.

\FloatBarrier
\subsection{Hyperparameters}

We provide the hyperparameter configurations for both domains. To ensure a fair comparison, we match the \textbf{Total Batch Size} between RDA2C and the PPO baseline. Method-specific learning-rate and stabilization settings are listed explicitly below.

\paragraph{Continuous control (MuJoCo, GAE labels).}
Table~\ref{tab:mujoco_hparams} lists the hyperparameters for the GAE MuJoCo benchmark.

\begin{table}[!htbp]
    \centering

    \caption[MuJoCo hyperparameters, GAE labels]{\textbf{MuJoCo Hyperparameters (GAE labels).} Comparison of the GAE RDA2C configuration reported in Table~\ref{tab:mujoco_results} and the PPO baseline. RDA2C uses a replay buffer of size $W=50{,}000$ and enables optional clipped prediction regression with $\epsilon_{\mathrm{clip}}=0.2$. The target-advantage clip is inactive for this configuration.}
    \label{tab:mujoco_hparams}
    \footnotesize\renewcommand{\arraystretch}{0.95}
    \begin{tabular}{lcc}
        \toprule
        \textbf{Parameter} & \textbf{RDA2C (Ours)} & \textbf{PPO (Baseline)} \\
        \midrule
        \multicolumn{3}{l}{\textit{Shared Rollout / Optimization Settings}} \\
        Optimizer & Adam & Adam \\
        Discount Factor ($\gamma$) & 0.99 & 0.99 \\
        Total Timesteps & $1{,}000{,}000$ & $1{,}000{,}000$ \\
        Total Batch Size & 2048 & 2048 \\
        Minibatch Size & 64 & 64 \\
        VF Coefficient & 0.5 & 0.5 \\
        Update Epochs & 10 & 10 \\
        Num. Environments & 1 & 1 \\
        Steps per Env & 2048 & 2048 \\
        \midrule
        \multicolumn{3}{l}{\textit{Method Specific Settings}} \\
        Actor Trunk & $2\times128$ Tanh MLP & $2\times128$ Tanh MLP \\
        Actor Heads & Mean / state LogStd / bias & Mean / global LogStd \\
        Value Network & $2\times128$ Tanh MLP & $2\times128$ Tanh MLP \\
        Learning Rate & $2 \times 10^{-4}$ & $2 \times 10^{-4}$ \\
        Hidden Width & 128 & 128 \\
        Max Gradient Norm & 3.0 & 3.0 \\
        GAE Lambda ($\lambda_{\mathrm{GAE}}$) & 0.80 & 0.80 \\
        PPO Policy Ratio Clip & -- & 0.2 \\
        Value Loss Clipping & Yes, $\epsilon=0.2$ & Yes, $\epsilon=0.2$ \\
        \midrule
        \multicolumn{3}{l}{\textit{RDA2C Dual Regression Settings}} \\
        \textbf{Replay Window Size ($W$)} & \textbf{$50{,}000$} & -- \\
        \textbf{Regularizer ($\lambda$)} & \textbf{0.05} & -- \\
        \textbf{Stabilizer ($\beta$)} & \textbf{1.0} & -- \\
        \textbf{Dual Prediction Clip ($\epsilon_{\mathrm{clip}}$)} & \textbf{0.2} & -- \\
        \bottomrule
    \end{tabular}
\end{table}

\paragraph{Continuous control (MuJoCo, twin-$Q$ critic labels).}
Table~\ref{tab:sac_hparams} lists the hyperparameters for the SAC comparison. Both methods use the standard SAC replay and twin-$Q$ architecture, but RDA2C with twin-$Q$ labels uses a tuned actor regression configuration in place of the default SAC update schedule. The table therefore separates common backbone/evaluation settings from the method specific batch, policy frequency, temperature, and label construction settings.

\begin{table}[!htbp]
    \centering

    \caption[MuJoCo hyperparameters, twin-Q critic labels]{\textbf{MuJoCo Hyperparameters (twin-$Q$ critic labels).} Comparison of the SAC baseline at matched batch size and policy update frequency and the RDA2C with twin-$Q$ labels instantiation reported in Table~\ref{tab:mujoco_sac_results}.}
    \label{tab:sac_hparams}
    \footnotesize\renewcommand{\arraystretch}{0.95}
    \begin{tabular}{lcc}
        \toprule
        \textbf{Parameter} & \textbf{RDA2C with twin-$Q$ labels (Ours)} & \textbf{SAC (Baseline)} \\
        \midrule
        \multicolumn{3}{l}{\textit{Common Backbone / Evaluation Settings}} \\
        Actor Trunk & $2\times256$ ReLU MLP & $2\times256$ ReLU MLP \\
        Actor Heads & Mean, LogStd, bias & Mean, LogStd \\
        Twin-$Q$ Networks & $2\times256$ ReLU MLP & $2\times256$ ReLU MLP \\
        Total Timesteps & $1{,}000{,}000$ & $1{,}000{,}000$ \\
        Num. Environments & 5 & 5 \\
        Replay Buffer Size & $1{,}000{,}000$ & $1{,}000{,}000$ \\
        Discount Factor ($\gamma$) & 0.99 & 0.99 \\
        Target Smoothing ($\tau$) & 0.005 & 0.005 \\
        Policy Learning Rate & $3\times 10^{-4}$ & $3\times 10^{-4}$ \\
        $Q$-Learning Rate & $1\times 10^{-3}$ & $1\times 10^{-3}$ \\
        Learning Starts & $5{,}000$ & $5{,}000$ \\
        Target Network Frequency & 1 & 1 \\
        Temperature Tuning & Autotune & Autotune \\
        Evaluation Policy & Deterministic mean action & Deterministic mean action \\
        Eval Frequency / Episodes & $50{,}000$ / 20 & $50{,}000$ / 20 \\
        \midrule
        \multicolumn{3}{l}{\textit{Method Specific SAC-Loop Settings}} \\
        Batch Size & 512 & 512 \\
        Policy Frequency & 1 & 1 \\
        Target Entropy & $-|A|$ & $-|A|$ \\
        \midrule
        \multicolumn{3}{l}{\textit{RDA2C twin-$Q$ Actor Regression Settings}} \\
        \textbf{Label Samples ($K$)} & 2 & -- \\
        \textbf{Replay States per Actor Step ($B$)} & 512 & -- \\
        \textbf{Label Window Size ($W$)} & \textbf{$12{,}800$} & -- \\
        \textbf{Advantage Clip} & 10.0 & -- \\
        \textbf{Dual Prediction Clip ($\epsilon_{\mathrm{clip}}$)} & $\infty$ (off) & -- \\
        \bottomrule
    \end{tabular}
\end{table}

\paragraph{Discrete Control (Atari).}
For Atari, we compare against PPO using the same total batch size per update, the same CleanRL preprocessing and EnvPool execution stack, and the same 10M-frame budget. The remaining rollout and optimization settings follow each method's own defaults and are tabled side by side in Appendix~\ref{app:implementation_details}. Table~\ref{tab:atari_hparams} lists the hyperparameters used for the 10M frame benchmark. Note that while the number of environments and steps per environment differ, the total batch size is fixed at $1024$ for both algorithms.

\begin{table}[!htbp]
    \centering
    \caption{\textbf{Atari Hyperparameters.} Final clipped RDA2C setting compared with PPO.}
    \label{tab:atari_hparams}
    \footnotesize\renewcommand{\arraystretch}{0.95}
    \begin{tabular}{lcc}
        \toprule
        \textbf{Parameter} & \textbf{RDA2C (Ours)} & \textbf{PPO (Baseline)} \\
        \midrule
        \textit{Shared Settings} & & \\
        CNN Backbone & Nature CNN, 512-unit output & Nature CNN, 512-unit output \\
        Actor--Critic Trunks & Independent policy and value trunks & Shared trunk \\
        Optimizer & Adam & Adam \\
        Learning Rate & $2.5 \times 10^{-4}$ & $2.5 \times 10^{-4}$ \\
        LR Schedule & Linear Decay & Linear Decay \\
        Discount Factor ($\gamma$) & 0.99 & 0.99 \\
        Total Batch Size & 1024 & 1024 \\
        Minibatch Size & 256 & 256 \\
        VF Coefficient & 0.5 & 0.5 \\
        Max Gradient Norm & 2.0 & 0.5 \\
        \midrule
        \textit{Sampling Strategy} & & \\
        Num. Environments & 4 & 8 \\
        Steps per Env & 256 & 128 \\
        \midrule
        \textit{Algorithm Specific} & & \\
        Update Epochs & 2 & 4 \\
        GAE Lambda ($\lambda_{\mathrm{GAE}}$) & 0.85 & 0.95 \\
        \textbf{Window Size ($W$)} & \textbf{500,000} & -- \\
        \textbf{Entropy Coeff ($\lambda$)} & \textbf{0.005} & 0.01 \\
        \textbf{Stabilizer ($\beta$)} & \textbf{0.6} & -- \\
        \textbf{Dual Prediction Clip ($\epsilon_{\mathrm{clip}}$)} & \textbf{1.0} & -- \\
        PPO Policy Ratio Clip & -- & 0.1 \\
        \bottomrule
    \end{tabular}
\end{table}

\FloatBarrier
\subsection{Optional clipped dual regression}
\label{subsec:appendix_clipped_dual_regression}

The main algorithm uses the squared dual regression loss in \eqref{eq:dual_regression_loss}. In the final GAE experiments, we also use a clipped prediction variant to prevent abrupt changes in the fitted dual score under repeated replay updates. When a transition enters the buffer, we store the old prediction $Z_{\mathrm{old}}(s,a)$. For clip radius $\epsilon_{\mathrm{clip}}>0$, define
$$
\begin{aligned}
Z_\theta^{\mathrm{clip}}(s,a)
&=
Z_{\mathrm{old}}(s,a)
+
\mathrm{clip}\bigl(Z_\theta(s,a)-Z_{\mathrm{old}}(s,a),-\epsilon_{\mathrm{clip}},\epsilon_{\mathrm{clip}}\bigr),\\
\mathcal L_{\mathrm{clip}}(\theta,t)
&=
\frac12\,
\mathbb E_{(s,a,\hat A,Z_{\mathrm{old}})\sim\mathcal B_t}
\left[
\max\{(Z_\theta(s,a)-\hat A)^2,(Z_\theta^{\mathrm{clip}}(s,a)-\hat A)^2\}
\right].
\end{aligned}
$$
This is a regression stabilizer analogous in spirit to PPO value function clipping \citep{schulman2017proximal}. It is not a policy ratio clip and does not change the mirror map readout in \eqref{eq:closed_form_final}. We use $\epsilon_{\mathrm{clip}}=0.2$ for GAE MuJoCo and $\epsilon_{\mathrm{clip}}=1.0$ for the final GAE Atari runs. We set $\epsilon_{\mathrm{clip}}=\infty$ when this stabilizer is disabled.

\FloatBarrier
\section{Main-Text Numerical Result Tables}
\label{app:main_result_tables}

This appendix collects the exact numerical tables corresponding to the bar-plot summaries in Figures~\ref{fig:mujoco_summary} and~\ref{fig:atari_summary}.

\paragraph{Evaluation conventions.}
The three comparisons use three return conventions, each the one used to evaluate the corresponding baseline. The PPO comparisons on MuJoCo and Atari report the online episodic return during training, the convention of the PPO implementation we match \citep{huang2022ppo37}. The SAC comparison reports the best deterministic evaluation return within the 1M-step budget. Evaluating with the mean action is the SAC protocol \citep{rw_haarnoja2018sac}, and reporting the maximum evaluation return within a 1M-step budget is the convention of \citet{fujimoto2018td3}. The AAPDA comparison reports the mean of the final five deterministic evaluations, following the final-five-epoch average of \citet{rw_ju2026aapda}, with both methods evaluated on the same ten fixed evaluation seeds at checkpoints every $20{,}000$ environment steps, RDA2C's rounded up to its next rollout boundary. The best-within-budget convention is optimistic for both methods it is applied to, and the conventions are not comparable across comparisons, so no number is compared across them.

\paragraph{Aggregate bootstrap intervals.}
The main-text figures plot per-environment relative improvement. For each environment, the point estimate is computed from the seed-averaged returns, $\Delta\%=100\cdot(\bar R_{\mathrm{ours}}-\bar R_{\mathrm{base}})/|\bar R_{\mathrm{base}}|$. Percentages are computed from unrounded seed means, so recomputing them from the rounded table entries can differ in the last digit. To summarize uncertainty at the panel level, we report fixed suite paired bootstrap intervals: each bootstrap replicate resamples matched baseline/ours seed pairs within each environment, computes the environment level percent improvements using that environment's fixed baseline mean for normalization, averages these improvements over the fixed benchmark suite, and takes the 2.5 and 97.5 percentiles across bootstrap replicates.

\paragraph{MuJoCo GAE label comparison.}

Table~\ref{tab:mujoco_results} reports the running average episodic returns underlying the MuJoCo/PPO panel in Figure~\ref{fig:mujoco_summary}. RDA2C uses GAE labels and clipped dual regression in this setting, and the table shows both the mean return and the environment-level relative improvement over PPO.

\begin{table}[htbp]
    \centering

    \caption[MuJoCo continuous control results]{\textbf{MuJoCo Continuous Control Results.} Mean episodic return ($\pm$ std) over 10 seeds. PPO is matched to RDA2C's architecture and optimization settings (Appendix~\ref{app:implementation_details}); the CleanRL-default comparison appears in Appendix~\ref{app:default_baselines}. RDA2C uses clipped dual regression with $\epsilon_{\mathrm{clip}}=0.2$ and value-loss clipping enabled. Last column reports relative change $\Delta\% \triangleq 100\cdot(\text{RDA2C}-\text{PPO})/(|\text{PPO}|)$ (higher is better).}
    \label{tab:mujoco_results}
    \begin{tabular}{lrrr}
        \toprule
        \textbf{Environment} & \textbf{PPO (Baseline)} & \textbf{RDA2C (Ours)} & \textbf{$\Delta\%$} \\
        \midrule
        Ant-v5 & $1953 \pm 372$ & $\mathbf{2831 \pm 583}$ & $\mathbf{+45.0\%}$ \\
        HalfCheetah-v5 & $1864 \pm 78$ & $\mathbf{7099 \pm 645}$ & $\mathbf{+280.9\%}$ \\
        Hopper-v5 & $2333 \pm 590$ & $\mathbf{2607 \pm 272}$ & $\mathbf{+11.7\%}$ \\
        Humanoid-v5 & $2370 \pm 976$ & $\mathbf{3768 \pm 1290}$ & $\mathbf{+59.0\%}$ \\
        HumanoidStandup-v5 & $\mathbf{170412 \pm 8967}$ & $156068 \pm 9614$ & $-8.4\%$ \\
        InvertedPendulum-v5 & $989 \pm 36$ & $\mathbf{999.2 \pm 2.6}$ & $\mathbf{+1.1\%}$ \\
        Pusher-v5 & $-33.5 \pm 2.2$ & $\mathbf{-30.6 \pm 3.0}$ & $\mathbf{+8.9\%}$ \\
        Walker2d-v5 & $\mathbf{3626 \pm 604}$ & $3546 \pm 659$ & $-2.2\%$ \\
        \bottomrule
    \end{tabular}
\end{table}

\paragraph{MuJoCo twin-$Q$ critic comparison.}
Table~\ref{tab:mujoco_sac_results} reports deterministic evaluation returns for the SAC-style twin-$Q$ critic backbone. RDA2C and SAC share the replay and critic components, so the table isolates the effect of replacing the SAC actor improvement step with dual-score regression and entropy readout.

\begin{table}[htbp]
    \centering

    \caption[MuJoCo twin-Q critic results]{\textbf{MuJoCo twin-$Q$ critic results.} Best deterministic evaluation return ($\pm$ seed std) within the 1M-step budget over 10 seeds. Both methods use batch size $512$, policy update frequency $1$, 5 parallel environments, and 20 evaluation episodes; the CleanRL-default SAC comparison appears in Appendix~\ref{app:default_baselines}. Last column reports relative percent change against SAC (higher is better).}
    \label{tab:mujoco_sac_results}
    \begin{tabular}{lrrr}
        \toprule
        \textbf{Environment} & \textbf{SAC (Baseline)} & \textbf{RDA2C with twin-$Q$ labels (Ours)} & \textbf{$\Delta\%$} \\
        \midrule
        Ant-v5 & $\mathbf{4389.60 \pm 1100.19}$ & $4240.94 \pm 877.82$ & $-3.4\%$ \\
        HalfCheetah-v5 & $\mathbf{10118.79 \pm 894.32}$ & $9524.52 \pm 1388.40$ & $-5.9\%$ \\
        Hopper-v5 & $3345.29 \pm 204.84$ & $\mathbf{3408.27 \pm 121.76}$ & $\mathbf{+1.9\%}$ \\
        Humanoid-v5 & $\mathbf{5376.44 \pm 175.72}$ & $5195.96 \pm 314.71$ & $-3.4\%$ \\
        HumanoidStandup-v5 & $158309.50 \pm 4088.25$ & $\mathbf{158784.44 \pm 6868.90}$ & $\mathbf{+0.3\%}$ \\
        InvertedPendulum-v5 & $\mathbf{1000.00 \pm 0.00}$ & $\mathbf{1000.00 \pm 0.00}$ & $0.0\%$ \\
        Pusher-v5 & $-29.58 \pm 1.55$ & $\mathbf{-28.05 \pm 1.51}$ & $\mathbf{+5.2\%}$ \\
        Walker2d-v5 & $4079.87 \pm 371.70$ & $\mathbf{4504.91 \pm 788.33}$ & $\mathbf{+10.4\%}$ \\
        \bottomrule
    \end{tabular}
\end{table}

\newpage
\paragraph{AAPDA comparison.}
Table~\ref{tab:aapda_results} reports the comparison with AAPDA described in Section~\ref{sec:experiments}: released defaults, 1M steps, ten paired training seeds, the same ten fixed evaluation seeds at every checkpoint, and the mean of the final five aligned deterministic checkpoints.

\begin{table}[htbp]
    \centering
    \caption[AAPDA comparison]{\textbf{MuJoCo results against AAPDA.} Mean of the final five aligned deterministic checkpoints ($\pm$ seed std) over 10 paired seeds. Last column reports relative change $\Delta\% \triangleq 100\cdot(\text{RDA2C}-\text{AAPDA})/(|\text{AAPDA}|)$.}
    \label{tab:aapda_results}
    \begin{tabular}{lrrr}
        \toprule
        \textbf{Environment} & \textbf{AAPDA} & \textbf{RDA2C (Ours)} & \textbf{$\Delta\%$} \\
        \midrule
        Ant-v5 & $2562 \pm 374$ & $\mathbf{4043 \pm 691}$ & $\mathbf{+57.8\%}$ \\
        HalfCheetah-v5 & $4948 \pm 705$ & $\mathbf{7808 \pm 629}$ & $\mathbf{+57.8\%}$ \\
        Hopper-v5 & $2594 \pm 794$ & $\mathbf{3218 \pm 196}$ & $\mathbf{+24.0\%}$ \\
        Humanoid-v5 & $1209 \pm 1335$ & $\mathbf{3653 \pm 1459}$ & $\mathbf{+202.1\%}$ \\
        HumanoidStandup-v5 & $\mathbf{162208 \pm 2025}$ & $156485 \pm 9003$ & $-3.5\%$ \\
        InvertedPendulum-v5 & $980.7 \pm 49.6$ & $\mathbf{1000.0 \pm 0.0}$ & $\mathbf{+2.0\%}$ \\
        Pusher-v5 & $-28.2 \pm 1.1$ & $\mathbf{-27.1 \pm 1.7}$ & $\mathbf{+3.6\%}$ \\
        Walker2d-v5 & $\mathbf{4424 \pm 251}$ & $4001 \pm 558$ & $-9.6\%$ \\
        \bottomrule
    \end{tabular}
\end{table}

\paragraph{Baseline default configurations.}
\label{app:default_baselines}

The main-text comparisons use baselines matched to RDA2C's shared settings. For reference, this section reports both baselines at their CleanRL default configurations. For PPO the defaults are $2\times64$ trunks, learning rate $3\times10^{-4}$, gradient clip $0.5$, and GAE $\lambda=0.85$. For SAC they are batch size $256$ and policy update frequency $2$. Matching strengthens the baselines: the matched configuration improves PPO over its defaults on five of eight tasks, and the batch-$512$, frequency-$1$ SAC matches or improves the default SAC on seven of eight tasks.

\begin{table}[htbp]
    \centering
    \caption[MuJoCo results, CleanRL-default PPO]{\textbf{MuJoCo results against CleanRL-default PPO.} Mean episodic return ($\pm$ std) over 10 seeds. Same RDA2C runs as Table~\ref{tab:mujoco_results}.}
    \label{tab:mujoco_results_default}
    \begin{tabular}{lrrr}
        \toprule
        \textbf{Environment} & \textbf{PPO (default)} & \textbf{RDA2C (Ours)} & \textbf{$\Delta\%$} \\
        \midrule
        Ant-v5 & $1855 \pm 519$ & $\mathbf{2831 \pm 583}$ & $\mathbf{+52.6\%}$ \\
        HalfCheetah-v5 & $1884 \pm 78$ & $\mathbf{7099 \pm 645}$ & $\mathbf{+276.8\%}$ \\
        Hopper-v5 & $2588 \pm 831$ & $\mathbf{2607 \pm 272}$ & $\mathbf{+0.7\%}$ \\
        Humanoid-v5 & $1100 \pm 388$ & $\mathbf{3768 \pm 1290}$ & $\mathbf{+242.5\%}$ \\
        HumanoidStandup-v5 & $\mathbf{158572 \pm 12341}$ & $156068 \pm 9614$ & $-1.6\%$ \\
        InvertedPendulum-v5 & $\mathbf{1000.0 \pm 0.0}$ & $999.2 \pm 2.6$ & $-0.1\%$ \\
        Pusher-v5 & $-36.2 \pm 2.0$ & $\mathbf{-30.6 \pm 3.0}$ & $\mathbf{+15.6\%}$ \\
        Walker2d-v5 & $3450 \pm 475$ & $\mathbf{3546 \pm 659}$ & $\mathbf{+2.8\%}$ \\
        \bottomrule
    \end{tabular}
\end{table}

\begin{table}[htbp]
    \centering
    \caption[MuJoCo twin-Q results, CleanRL-default SAC]{\textbf{MuJoCo twin-$Q$ results against CleanRL-default SAC} (batch $256$, policy frequency $2$). Best deterministic evaluation return ($\pm$ seed std) within the 1M-step budget over 10 seeds. Same RDA2C runs as Table~\ref{tab:mujoco_sac_results}.}
    \label{tab:mujoco_sac_results_default}
    \begin{tabular}{lrrr}
        \toprule
        \textbf{Environment} & \textbf{SAC (default)} & \textbf{RDA2C with twin-$Q$ labels (Ours)} & \textbf{$\Delta\%$} \\
        \midrule
        Ant-v5 & $4048.10 \pm 859.12$ & $\mathbf{4240.94 \pm 877.82}$ & $\mathbf{+4.8\%}$ \\
        HalfCheetah-v5 & $\mathbf{9844.07 \pm 914.73}$ & $9524.52 \pm 1388.40$ & $-3.2\%$ \\
        Hopper-v5 & $3348.58 \pm 157.32$ & $\mathbf{3408.27 \pm 121.76}$ & $\mathbf{+1.8\%}$ \\
        Humanoid-v5 & $\mathbf{5228.44 \pm 241.40}$ & $5195.96 \pm 314.71$ & $-0.6\%$ \\
        HumanoidStandup-v5 & $156360.89 \pm 7197.51$ & $\mathbf{158784.44 \pm 6868.90}$ & $\mathbf{+1.5\%}$ \\
        InvertedPendulum-v5 & $\mathbf{1000.00 \pm 0.00}$ & $\mathbf{1000.00 \pm 0.00}$ & $0.0\%$ \\
        Pusher-v5 & $-29.80 \pm 1.55$ & $\mathbf{-28.05 \pm 1.51}$ & $\mathbf{+5.9\%}$ \\
        Walker2d-v5 & $3871.58 \pm 736.89$ & $\mathbf{4504.91 \pm 788.33}$ & $\mathbf{+16.4\%}$ \\
        \bottomrule
    \end{tabular}
\end{table}

\newpage
\paragraph{Atari GAE label comparison.}
Table~\ref{tab:atari_comparison} reports the twelve Atari games underlying Figure~\ref{fig:atari_summary}.

\begin{table}[htbp]
    \centering
    \caption[Atari results]{\textbf{Atari results.} Mean running return ($\pm$ seed std) over 10 seeds on twelve Atari games with clipped dual regression, $\epsilon_{\mathrm{clip}}=1.0$, $\beta=0.6$, and $\lambda=0.005$. Last column reports relative change $\Delta\% \triangleq 100\cdot(\text{RDA2C}-\text{PPO})/(|\text{PPO}|)$ when PPO $\neq 0$.}
    \label{tab:atari_comparison}
    \begin{tabular}{lrrr}
        \toprule
        \textbf{Environment} & \textbf{PPO (Baseline)} & \textbf{RDA2C (Ours)} & \textbf{$\Delta\%$} \\
        \midrule
        Asterix & $4061 \pm 1014$ & $\mathbf{6024 \pm 1336}$ & $\mathbf{+48.3\%}$ \\
        BeamRider & $2540 \pm 316$ & $\mathbf{9142 \pm 786}$ & $\mathbf{+259.9\%}$ \\
        Breakout & $\mathbf{422 \pm 24}$ & $408 \pm 29$ & $-3.5\%$ \\
        Enduro & $980 \pm 135$ & $\mathbf{5348 \pm 1720}$ & $\mathbf{+445.6\%}$ \\
        Freeway & $\mathbf{32.7 \pm 0.6}$ & $23.6 \pm 16.3$ & $-27.8\%$ \\
        Gravitar & $686 \pm 187$ & $\mathbf{835 \pm 376}$ & $\mathbf{+21.8\%}$ \\
        MsPacman & $2085 \pm 268$ & $\mathbf{3243 \pm 936}$ & $\mathbf{+55.6\%}$ \\
        Pong & $\mathbf{20.6 \pm 0.7}$ & $20.5 \pm 1.4$ & $-0.6\%$ \\
        PrivateEye & $70.1 \pm 47.1$ & $\mathbf{89.8 \pm 31.6}$ & $\mathbf{+28.2\%}$ \\
        Qbert & $\mathbf{15804 \pm 1993}$ & $14562 \pm 1283$ & $-7.9\%$ \\
        Seaquest & $1541 \pm 427$ & $\mathbf{9114 \pm 7642}$ & $\mathbf{+491.4\%}$ \\
        SpaceInvaders & $998 \pm 187$ & $\mathbf{1373 \pm 220}$ & $\mathbf{+37.6\%}$ \\
        \bottomrule
    \end{tabular}
\end{table}

\newpage
\section{Additional Experimental Results}
\label{app:additional_results}

\FloatBarrier
\subsection{MuJoCo GAE label learning curves}
\label{app:mujoco_curves}

Figure~\ref{fig:mujoco_curves} displays the 10-seed training dynamics for the PPO versus clipped-dual-regression RDA2C comparison across the 8 continuous control environments. The curves highlight large gains on \texttt{HalfCheetah} and \texttt{Humanoid}, a clear improvement on \texttt{Ant}, and PPO-level behavior on the remaining tasks, with \texttt{HumanoidStandup} remaining slightly below PPO in the final aggregate table.

\begin{figure}[htbp]
    \centering
    \includegraphics[width=\textwidth]{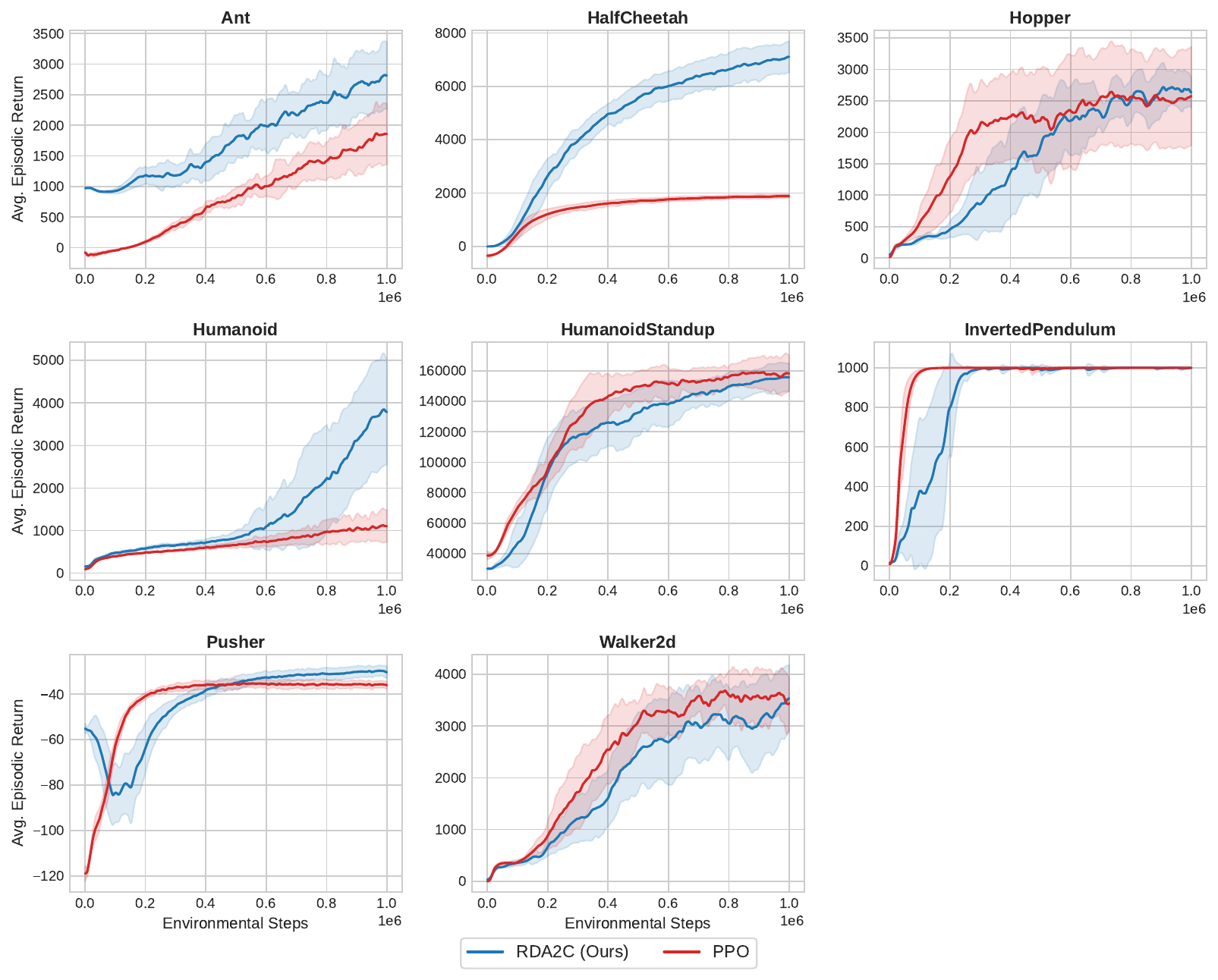}
    \caption{\textbf{PPO versus RDA2C MuJoCo learning curves.} Blue denotes RDA2C and red denotes PPO. Shaded regions show standard deviation over 10 seeds. The clipped-dual-regression configuration yields large gains on \texttt{HalfCheetah}, \texttt{Humanoid}, and \texttt{Ant}, while remaining close to PPO on the other environments.}
    \label{fig:mujoco_curves}
\end{figure}

\FloatBarrier
\subsection{MuJoCo twin-$Q$ critic label learning curves}
\label{app:mujoco_sac_curves}
Figure~\ref{fig:mujoco_sac_curves} presents the 10-seed online training-return curves for the SAC versus RDA2C with twin-$Q$ labels comparison. These curves correspond to the logged running reward summaries in Table~\ref{tab:mujoco_sac_running_reward}, not to the deterministic evaluation returns in Table~\ref{tab:mujoco_sac_results}. Across environments, the off-policy RDA actor update with soft twin-$Q$ RLOO labels is generally stronger during data collection: RDA2C with twin-$Q$ labels has higher final running reward on seven of eight tasks and lower seed standard deviation on six of eight tasks. The largest online return gaps appear on \texttt{Hopper-v5} and \texttt{Walker2d-v5}. \texttt{HalfCheetah-v5} remains the main environment where SAC is clearly stronger.

\begin{table}[htbp]
    \centering
    \caption[MuJoCo online training return with twin-Q critic labels]{\textbf{MuJoCo online training return with twin-$Q$ critic labels.} Final logged running reward ($\pm$ seed std) over 10 seeds at $1{,}000{,}000$ environment steps. This is an online data collection metric, distinct from the deterministic evaluation return in Table~\ref{tab:mujoco_sac_results}. Last column reports relative percent change against SAC (higher is better).}
    \label{tab:mujoco_sac_running_reward}
    \begin{tabular}{lrrr}
        \toprule
        \textbf{Environment} & \textbf{SAC (Baseline)} & \textbf{RDA2C with twin-$Q$ labels (Ours)} & \textbf{$\Delta\%$} \\
        \midrule
        Ant-v5 & $3414.91 \pm 873.59$ & $\mathbf{3683.40 \pm 842.94}$ & $\mathbf{+7.9\%}$ \\
        HalfCheetah-v5 & $\mathbf{9182.51 \pm 888.07}$ & $8538.56 \pm 1167.47$ & $-7.0\%$ \\
        Hopper-v5 & $2289.03 \pm 637.75$ & $\mathbf{2948.40 \pm 357.63}$ & $\mathbf{+28.8\%}$ \\
        Humanoid-v5 & $4771.82 \pm 320.91$ & $\mathbf{4979.12 \pm 215.64}$ & $\mathbf{+4.3\%}$ \\
        HumanoidStandup-v5 & $152428.66 \pm 7472.47$ & $\mathbf{155100.32 \pm 5994.39}$ & $\mathbf{+1.8\%}$ \\
        InvertedPendulum-v5 & $935.64 \pm 72.85$ & $\mathbf{969.61 \pm 33.70}$ & $\mathbf{+3.6\%}$ \\
        Pusher-v5 & $-30.01 \pm 1.24$ & $\mathbf{-28.87 \pm 1.50}$ & $\mathbf{+3.8\%}$ \\
        Walker2d-v5 & $3124.79 \pm 1354.77$ & $\mathbf{4067.66 \pm 1117.91}$ & $\mathbf{+30.2\%}$ \\
        \bottomrule
    \end{tabular}
\end{table}

\begin{figure}[htbp]
    \centering
    \includegraphics[width=\textwidth]{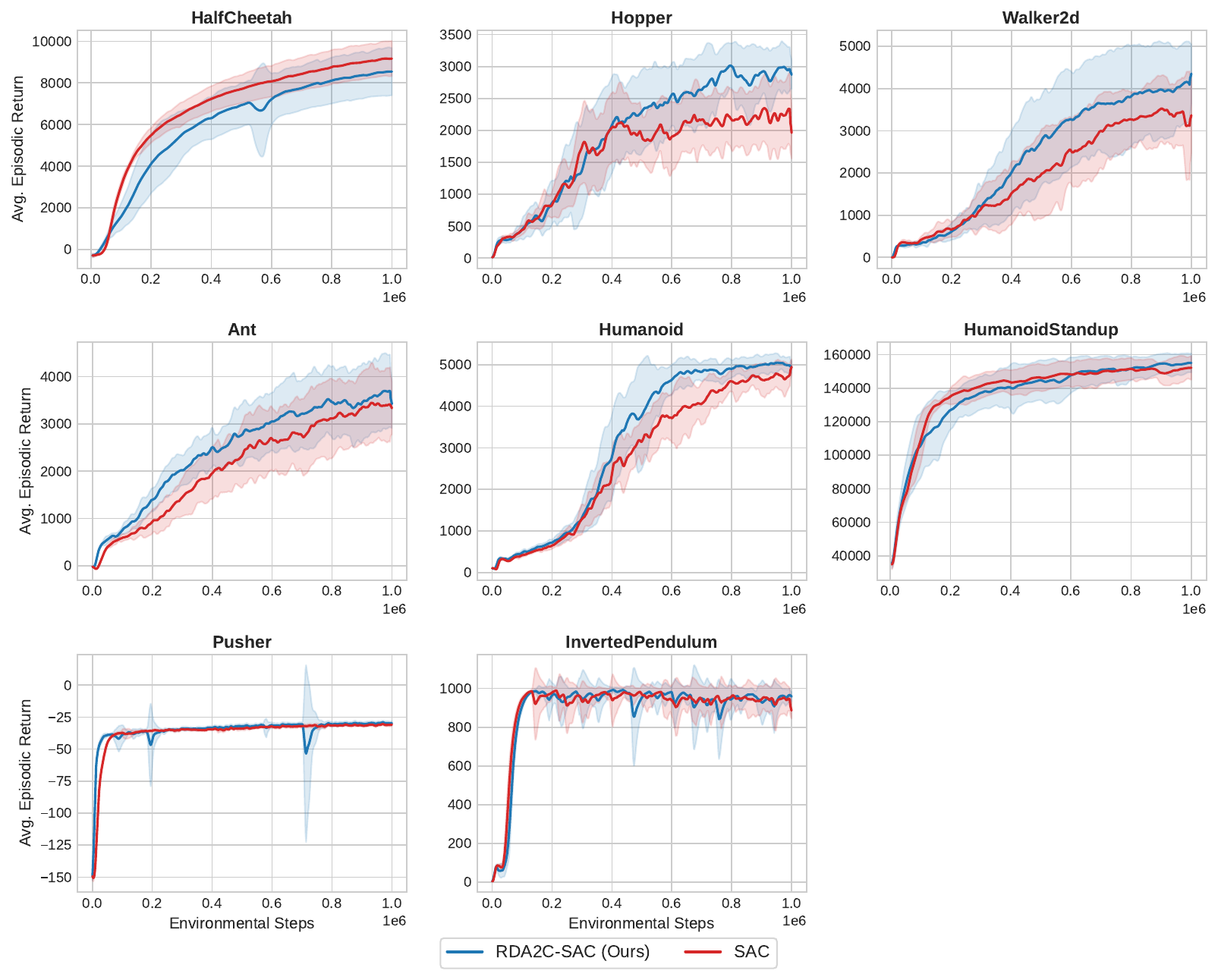}
    \caption{\textbf{MuJoCo online training curves with twin-$Q$ critic labels.} Blue denotes RDA2C twin-$Q$ and red denotes SAC. Shaded regions show standard deviation over 10 seeds. The curves report the online running reward metric logged during data collection. RDA2C twin-$Q$ achieves higher final running reward on Ant, Hopper, Humanoid, HumanoidStandup, InvertedPendulum, Pusher, and Walker2d, while SAC remains stronger on HalfCheetah.}
    \label{fig:mujoco_sac_curves}
\end{figure}

\FloatBarrier
\subsection{Atari Learning Curves}
\label{app:atari_curves}
Figure \ref{fig:atari_full_curves} presents the learning dynamics for the twelve Atari games used in the main comparison. Curves represent the average episodic return over 10 random seeds with standard deviation shading.

\paragraph{Analysis.}
The curves give a more detailed view of the summary in the main text. RDA2C improves substantially over PPO on \texttt{Seaquest}, \texttt{BeamRider}, \texttt{Enduro}, and \texttt{Asterix}. In these runs, the replayed dual averaging update continues to improve after the PPO curve has largely flattened.

The remaining games include threshold and saturated regimes. \texttt{Pong} is near saturated for both methods, while \texttt{Freeway} remains a high variance threshold game where PPO is higher in the final 10-seed aggregate. RDA2C improves \texttt{PrivateEye}, but it does not add a directed exploration mechanism.

\begin{figure}[htbp]
    \centering
    \includegraphics[width=\textwidth]{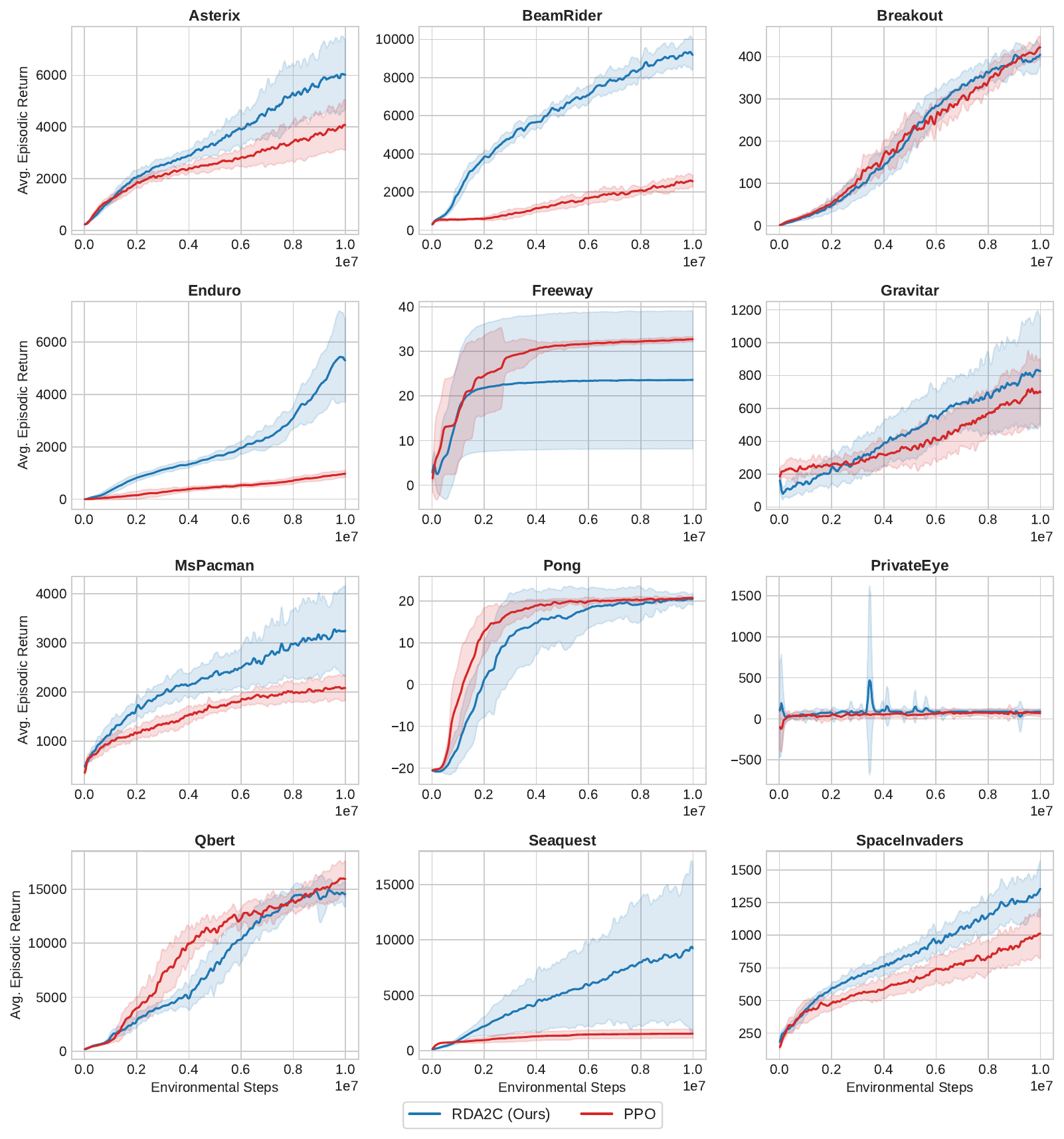}
    \caption{\textbf{Full Training Dynamics (10M Frames).} Comparison of RDA2C and PPO across the twelve Atari games used in the main comparison.}
    \label{fig:atari_full_curves}
\end{figure}

\FloatBarrier
\section{Ablation Studies}
\label{app:additional_ablations}
This section reports the ablations: the replay window (Section~\ref{app:ablation_window}), the RDA stabilizer (Section~\ref{app:ablation_beta}), the entropy floor (Section~\ref{app:ablation_lambda}), the advantage-label loss (Section~\ref{app:ablation_adv_loss}), the SAC replay buffer (Section~\ref{app:ablation_sac_buffer}), and a PPO capacity control (Section~\ref{app:ablation_ppo_capacity}).
\FloatBarrier
\subsection{Replay Window Size}
\label{app:ablation_window}

We evaluate sensitivity to the replay / label window size $W$ in both the GAE Atari setting and the twin-$Q$ critic label MuJoCo setting. The two sweeps differ because the Atari implementation stores replayed transitions, whereas the twin-$Q$ critic implementation stores labeled $(s,a,g)$ tuples in a dedicated FIFO actor regression buffer.

This sweep should be interpreted qualitatively. The theoretical discussion in Section~\ref{sec:theory} predicts a variance--staleness tradeoff and, in a simplified asymptotic bound, balances the replay error terms with $W_t\asymp t^{2/3}$. Our finite budget sweeps do not estimate this exponent: they vary fixed window sizes and report achieved return, and they do not measure score estimation error directly. The empirical question is therefore whether small windows are too noisy and whether very long memories can become stale.

\label{subsec:ablation_window_main}
The replay window controls how much past dual evidence defines the actor objective. Small windows use fresher labels but provide fewer samples for fitting the dual score, while large windows reduce label variance but mix in older policy-improvement signals. The Atari sweep in Appendix~\ref{app:ablation_window} shows this finite-budget tradeoff, with intermediate windows often strongest. The best window varies across games and is not monotone in \(W\). This behavior qualitatively matches the finite replay and local drift terms in Theorem~\ref{thm:fitted_replay_value_bound}. The twin-$Q$ MuJoCo sweep (Appendix~\ref{app:ablation_window_mujoco}) shows the same nonmonotone pattern, and the one-round current-label setting is never the best window. A current-label control for the GAE setting (Appendix~\ref{app:ablation_window_currentlabel}) isolates retention alone: the historical target is higher on seven of eight tasks.

\FloatBarrier
\subsubsection{Atari}
\label{app:ablation_window_atari}

We evaluate the sensitivity of RDA2C to the Atari replay window size $W$. Figure \ref{fig:window_sweeps} shows the learning curves, and Table \ref{tab:window_full} provides the final asymptotic performance. The window sweep uses the plain squared dual regression loss, whereas the main Atari comparison in Table~\ref{tab:atari_comparison} uses the clipped loss of Appendix~\ref{subsec:appendix_clipped_dual_regression}, so the two tables are not comparable at $W=500{,}000$. Only the comparison across windows within the sweep is meaningful.

\begin{table}[htbp]
    \centering
    \caption{\textbf{Atari Window Size Sweep Results.} Mean episodic return at 10M frames, plain squared loss. \textbf{Bold} indicates the best performing configuration.}
    \label{tab:window_full}
    \begin{tabular}{lcccc}
        \toprule
        \textbf{Environment} & \textbf{$W=300,000$} & \textbf{$W=500,000$} & \textbf{$W=700,000$} & \textbf{$W=1,000,000$} \\
        \midrule
        Asterix & $\mathbf{5448 \pm 1063}$ & $5305 \pm 204$ & $4758 \pm 929$ & $4122 \pm 505$ \\
        BeamRider & $9768 \pm 429$ & $\mathbf{10509 \pm 874}$ & $9823 \pm 317$ & $9831 \pm 832$ \\
        Breakout & $\mathbf{369 \pm 17}$ & $353 \pm 6$ & $319 \pm 22$ & $314 \pm 31$ \\
        Enduro & $\mathbf{3474 \pm 372}$ & $3374 \pm 273$ & $3077 \pm 520$ & $3076 \pm 340$ \\
        Freeway & $33.8 \pm 0.0$ & $33.8 \pm 0.0$ & $33.8 \pm 0.0$ & $\mathbf{33.9 \pm 0.0}$ \\
        Gravitar & $690 \pm 93$ & $\mathbf{830 \pm 169}$ & $675 \pm 40$ & $670 \pm 65$ \\
        MsPacman & $2788 \pm 576$ & $\mathbf{3974 \pm 669}$ & $3056 \pm 802$ & $3245 \pm 428$ \\
        Pong & $20.9 \pm 0.0$ & $\mathbf{21.0 \pm 0.0}$ & $\mathbf{21.0 \pm 0.0}$ & $20.9 \pm 0.1$ \\
        PrivateEye & $66.6 \pm 47.1$ & $66.6 \pm 47.1$ & $\mathbf{100.0 \pm 0.0}$ & $99.9 \pm 0.0$ \\
        Qbert & $15341 \pm 392$ & $15690 \pm 309$ & $14310 \pm 1150$ & $\mathbf{15892 \pm 929}$ \\
        Seaquest & $5484 \pm 1819$ & $\mathbf{14158 \pm 5832}$ & $13335 \pm 4911$ & $13843 \pm 4220$ \\
        SpaceInvaders & $1359 \pm 44$ & $\mathbf{1457 \pm 237}$ & $1361 \pm 188$ & $1405 \pm 172$ \\
        \bottomrule
    \end{tabular}
\end{table}

\begin{figure}[htbp]
    \centering
   \includegraphics[width=\textwidth]{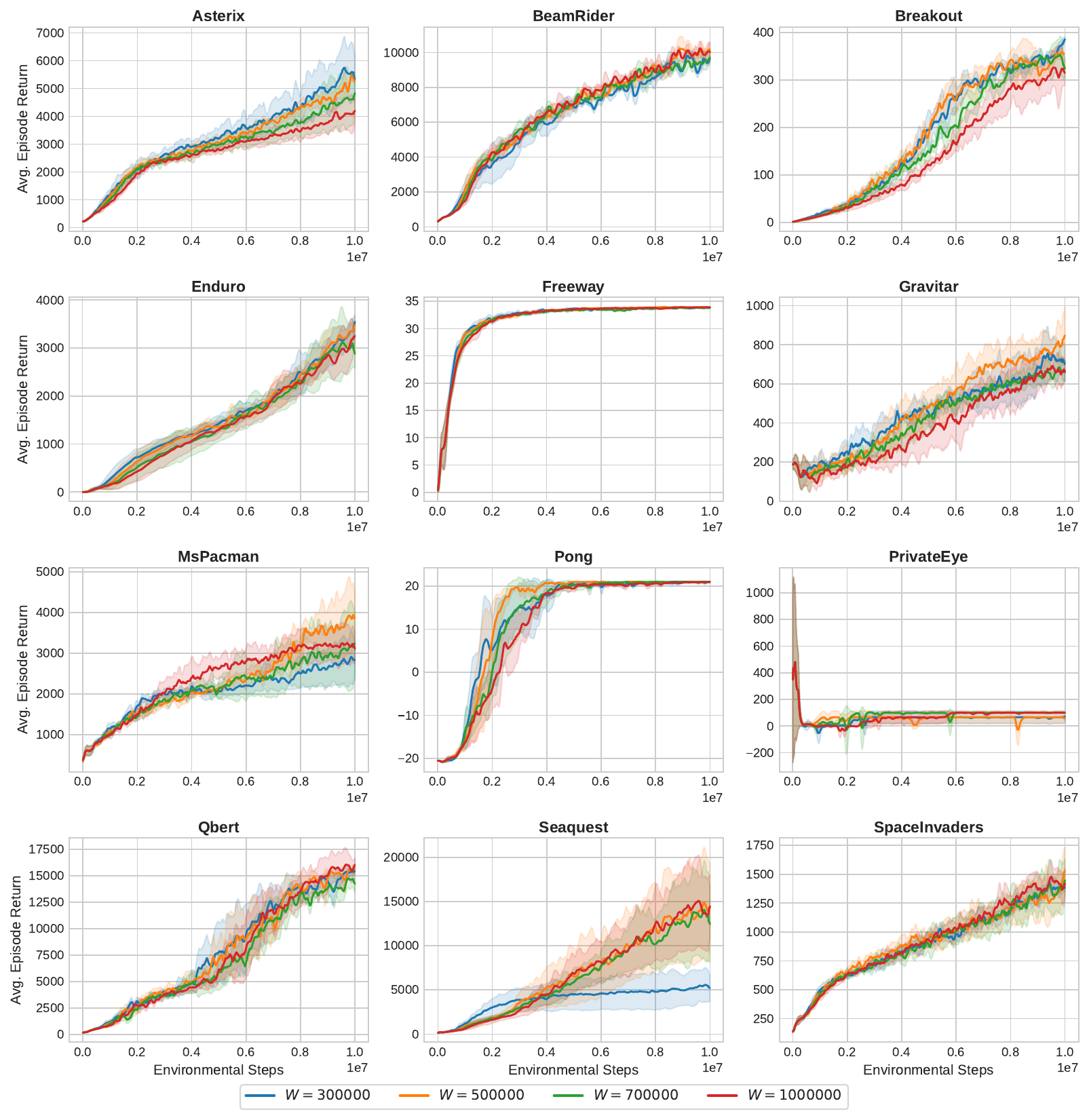}
    \caption{\textbf{Effect of Window Size ($W$) on Atari.} Training curves for $W \in \{300\text{k}, 500\text{k}, 700\text{k}, 1\text{M}\}$ on the twelve Atari games used in the main comparison. $W=500\text{k}$ offers the most consistent performance across tasks.}
    \label{fig:window_sweeps}
\end{figure}

\newpage
\subsubsection{MuJoCo with twin-$Q$ Critics}
\label{app:ablation_window_mujoco}

We additionally sweep the RDA label buffer size on five representative MuJoCo tasks: \texttt{Ant-v5}, \texttt{HalfCheetah-v5}, \texttt{Hopper-v5}, \texttt{Humanoid-v5}, and \texttt{Walker2d-v5}. We summarize each setting by the best deterministic evaluation mean return achieved during training, averaged over 10 seeds. In this implementation, each actor trigger samples $B=512$ replay states and generates $K=2$ labels per state, so it inserts $KB=1024$ fresh $(s,a,g)$ tuples into the FIFO label buffer. Thus $W=1024$ acts as a one round current-label control. Table~\ref{tab:window_sac_full} shows a nonmonotone mean return pattern: the current-label control is not the top mean setting on any of the five tasks, larger windows substantially improve \texttt{HalfCheetah-v5} and moderately improve \texttt{Walker2d-v5}, and \texttt{Ant-v5}, \texttt{Hopper-v5}, and \texttt{Humanoid-v5} are comparatively flat once $W>1024$. This is consistent with the qualitative bias--variance/staleness tradeoff from Section~\ref{sec:theory}: averaging recent actor-improvement labels can reduce noise, but overly long memories can become stale.

\begin{table}[htbp]
    \centering
    \caption[MuJoCo window size sweep with twin-Q critic labels]{\textbf{MuJoCo window size sweep with twin-$Q$ critic labels.} Mean best deterministic evaluation return ($\pm$ seed std) over 10 seeds. Here $W$ is the FIFO label buffer size in stored $(s,a,g)$ tuples. With batch size $512$ and $K=2$ label samples, $W=1024$ retains approximately one actor trigger's fresh labels and therefore acts as a current label control. \textbf{Bold} indicates the best window per environment.}
    \label{tab:window_sac_full}
    \resizebox{\textwidth}{!}{%
    \begin{tabular}{lcccccc}
        \toprule
        \textbf{Environment} & \textbf{$W=1024$} & \textbf{$W=2048$} & \textbf{$W=4096$} & \textbf{$W=8192$} & \textbf{$W=12{,}800$} & \textbf{$W=25{,}600$} \\
        \midrule
        Ant-v5 & $4755 \pm 985$ & $4008 \pm 953$ & $4314 \pm 909$ & $\mathbf{4798 \pm 747}$ & $4241 \pm 878$ & $4672 \pm 946$ \\
        HalfCheetah-v5 & $7068 \pm 3653$ & $8800 \pm 2565$ & $9387 \pm 1542$ & $8081 \pm 3410$ & $9525 \pm 1388$ & $\mathbf{9819 \pm 1029}$ \\
        Hopper-v5 & $3404 \pm 195$ & $3330 \pm 111$ & $3365 \pm 207$ & $3351 \pm 133$ & $3408 \pm 122$ & $\mathbf{3430 \pm 150}$ \\
        Humanoid-v5 & $5280 \pm 110$ & $5237 \pm 159$ & $5279 \pm 200$ & $\mathbf{5294 \pm 127}$ & $5196 \pm 315$ & $5218 \pm 107$ \\
        Walker2d-v5 & $4444 \pm 459$ & $3810 \pm 1291$ & $\mathbf{4724 \pm 522}$ & $3566 \pm 1302$ & $4505 \pm 788$ & $4179 \pm 977$ \\
        \bottomrule
    \end{tabular}%
    }
\end{table}

\FloatBarrier
\subsubsection{Current-label control (GAE MuJoCo)}
\label{app:ablation_window_currentlabel}

This control changes label retention and nothing else. The buffer size $W=2048$ retains exactly one rollout of labels, so every regression target comes from the current policy. The main-results setting $W=50{,}000$ retains about 24 rollouts. Architecture, optimizer state, and training budget are fixed. Table~\ref{tab:current_label_gae} shows the result over ten paired seeds: the historical target is higher on seven of eight tasks. The gains are large where they appear: \texttt{Humanoid-v5} goes from $500$ to $3768$ and \texttt{Ant-v5} from $884$ to $2831$. This is the most direct evidence that the improvement comes from aggregating labels across several recent policies. Score regression on current labels alone does not produce it.

\begin{table}[htbp]
    \centering
    \caption[Current-label control, GAE MuJoCo]{\textbf{Current-label control (GAE labels, MuJoCo).} Mean running episodic return ($\pm$ seed std) over 10 paired seeds. Last column reports relative change $\Delta\% \triangleq 100\cdot(\text{history}-\text{current})/(|\text{current}|)$ (higher is better).}
    \label{tab:current_label_gae}
    \begin{tabular}{lrrr}
        \toprule
        \textbf{Environment} & \textbf{current ($W=2048$)} & \textbf{history ($W=50{,}000$)} & \textbf{$\Delta\%$} \\
        \midrule
        Ant-v5 & $884 \pm 412$ & $\mathbf{2831 \pm 583}$ & $\mathbf{+220.1\%}$ \\
        HalfCheetah-v5 & $5162 \pm 1298$ & $\mathbf{7099 \pm 645}$ & $\mathbf{+37.5\%}$ \\
        Hopper-v5 & $2112 \pm 802$ & $\mathbf{2607 \pm 272}$ & $\mathbf{+23.4\%}$ \\
        Humanoid-v5 & $500 \pm 75$ & $\mathbf{3768 \pm 1290}$ & $\mathbf{+654.0\%}$ \\
        HumanoidStandup-v5 & $94062 \pm 24669$ & $\mathbf{156068 \pm 9614}$ & $\mathbf{+65.9\%}$ \\
        InvertedPendulum-v5 & $\mathbf{1000.0 \pm 0.0}$ & $999.2 \pm 2.6$ & $-0.1\%$ \\
        Pusher-v5 & $-31.9 \pm 1.3$ & $\mathbf{-30.6 \pm 3.0}$ & $\mathbf{+4.3\%}$ \\
        Walker2d-v5 & $2973 \pm 707$ & $\mathbf{3546 \pm 659}$ & $\mathbf{+19.3\%}$ \\
        \bottomrule
    \end{tabular}
\end{table}

\newpage
\subsection{RDA Stabilizer}
\label{app:ablation_beta}
\label{subsec:main_text_ablations}

Figure~\ref{fig:beta_sensitivity_main} varies the RDA stabilizer \(\beta\) while holding the entropy floor and replay window fixed. Larger values damp early policy changes more strongly. The aggregate results show that some transient stabilization is useful, with moderate values performing best, while the per-task heatmaps show that the best \(\beta\) is environment dependent. Very large \(\beta\) can make the early readout too conservative, whereas \(\beta=0\) removes this transient damping entirely.

\begin{figure}[htbp]
    \centering
    \includegraphics[width=0.98\textwidth]{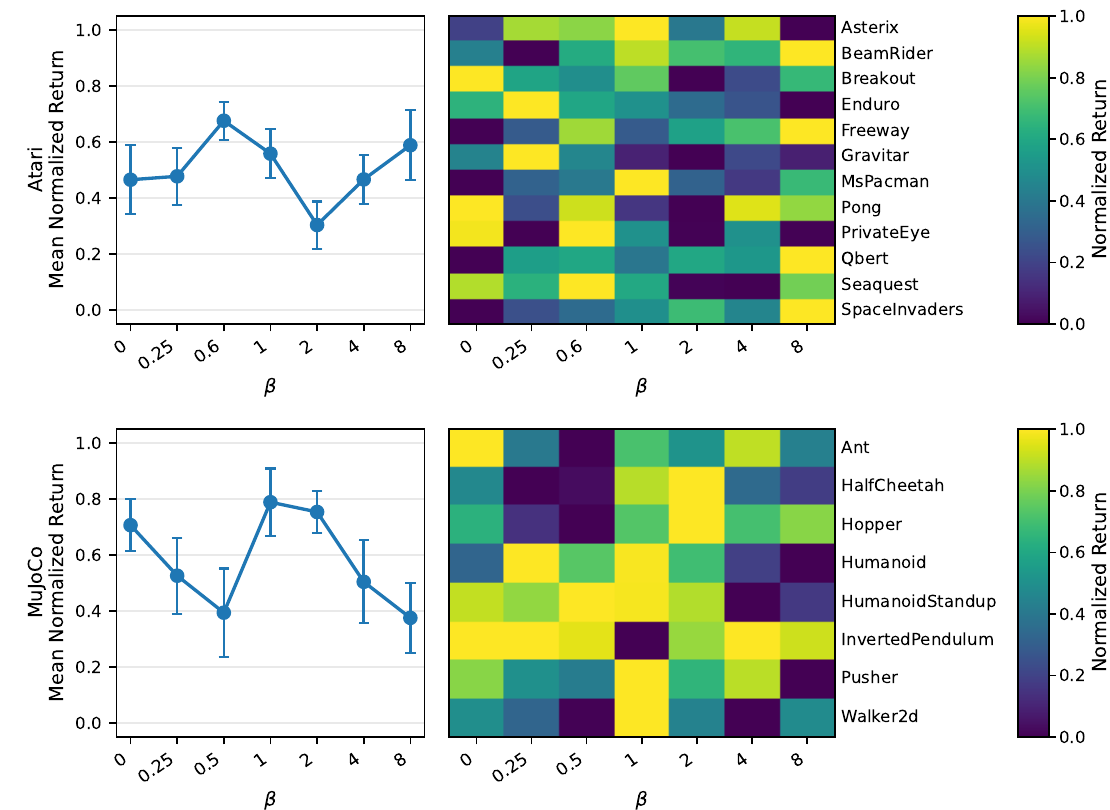}
    \caption{\textbf{Sensitivity to the RDA stabilizer $\beta$.} Returns are normalized independently within each environment across the swept $\beta$ values before aggregation. Error bars in the aggregate panels denote task SEM. For the reason in footnote~\ref{fn:bootstrap}, they do not denote confidence intervals. They illustrate the scale of variation across tasks.}
    \label{fig:beta_sensitivity_main}
\end{figure}

\newpage
\subsection{Entropy Floor Sensitivity}
\label{app:ablation_lambda}

We sweep the persistent entropy floor $\lambda$ in the RDA temperature $\alpha_t=\lambda+\beta/t$ while holding the replay window and $\beta$ fixed. Figure~\ref{fig:rda_lambda_sensitivity_appendix} shows a broad small $\lambda$ plateau on Atari and a moderate $\lambda$ plateau on MuJoCo. Very large entropy floors keep the readout too diffuse and hurt return.

\begin{figure}[htbp]
    \centering
    \includegraphics[width=0.98\textwidth]{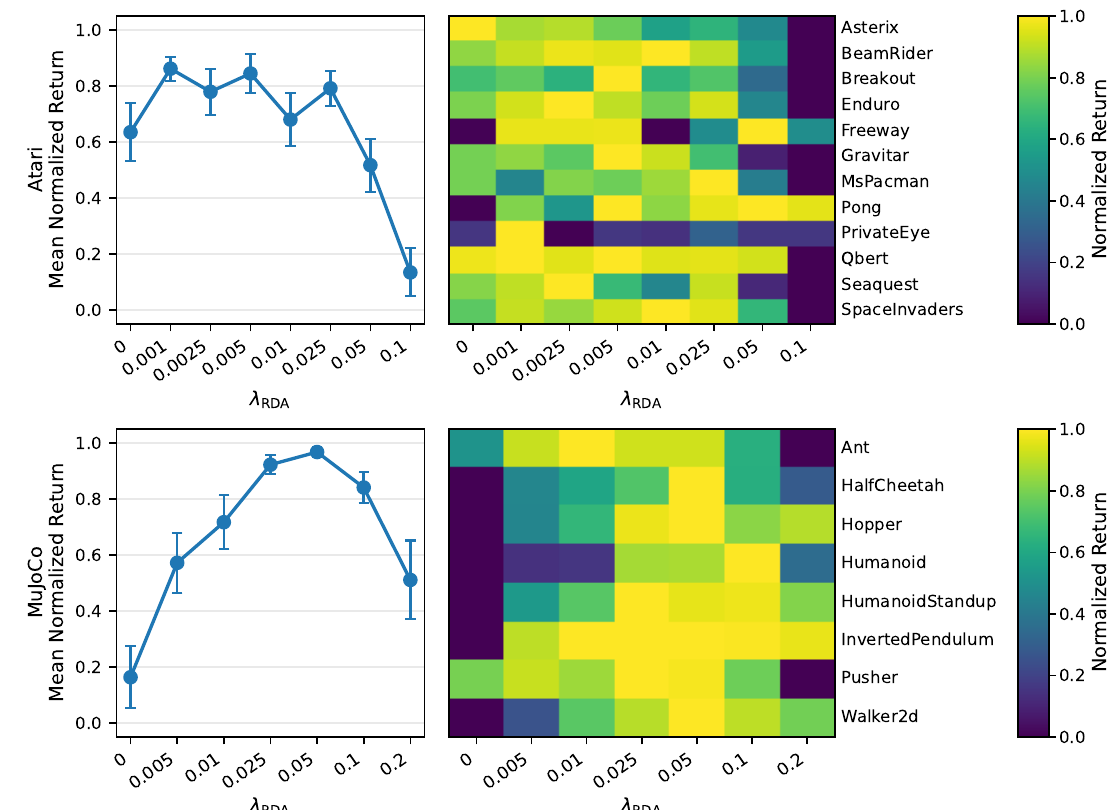}
    \caption{\textbf{Sensitivity to the entropy floor $\lambda$.} Returns are min-max normalized independently within each environment across the swept $\lambda$ values before aggregation. Error bars in the aggregate panels denote task SEM. For the reason in footnote~\ref{fn:bootstrap}, they do not denote confidence intervals. They illustrate the scale of variation across tasks.}
    \label{fig:rda_lambda_sensitivity_appendix}
\end{figure}

\newpage
\subsection{Advantage-Label Loss}
\label{app:ablation_adv_loss}

We compare four dual regression losses for GAE advantage labels: plain MSE, Huber regression, normalized target clipping, and PPO style prediction clipping. Figure~\ref{fig:adv_loss_ablation_appendix} reports Atari and MuJoCo separately. Prediction clipping is the strongest default, especially on MuJoCo. Target clipping is competitive on Atari but is less reliable in continuous control. Huber regression alone does not recover the performance of prediction clipping.

\begin{figure}[htbp]
    \centering
    \includegraphics[width=0.92\textwidth]{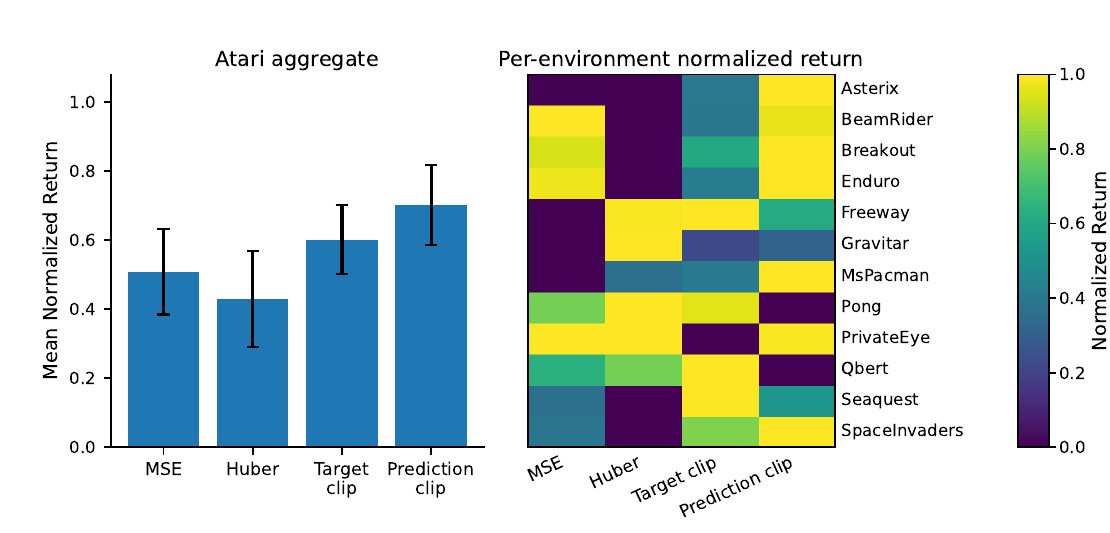}
    \vspace{0.5em}
    \includegraphics[width=0.92\textwidth]{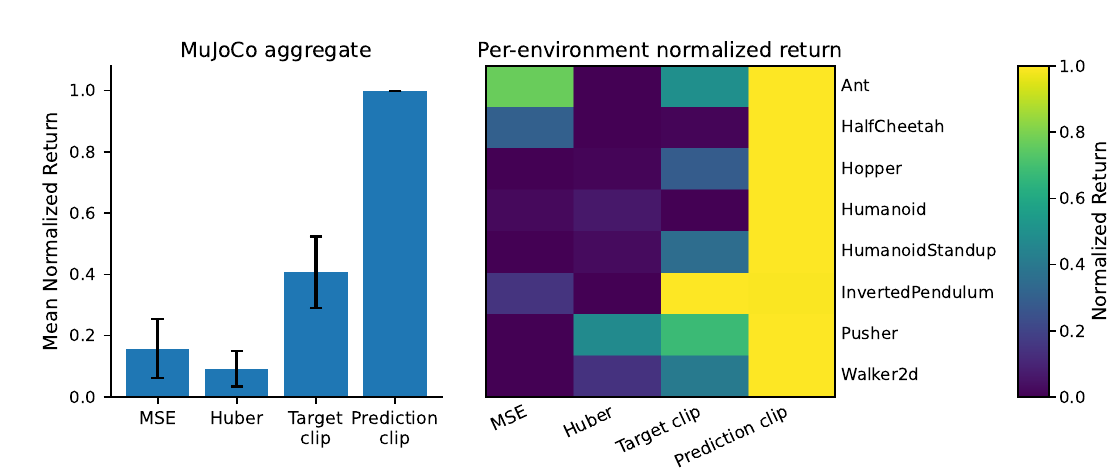}
    \caption{\textbf{Advantage label loss ablation.} Atari results are shown above and MuJoCo results below. Returns are min-max normalized within each environment across loss types before aggregation. Error bars in the aggregate panels denote task SEM. For the reason in footnote~\ref{fn:bootstrap}, they do not denote confidence intervals. They illustrate the scale of variation across tasks. Prediction clipping gives the best aggregate, while Huber alone is weaker.}
    \label{fig:adv_loss_ablation_appendix}
\end{figure}

\newpage
\subsection{SAC Replay Buffer Size}
\label{app:ablation_sac_buffer}

Sensitivity to the shared transition replay buffer. We examine sensitivity to the capacity of the ordinary transition replay buffer used by the shared SAC/twin-Q backbone. In the main comparison, we retain the 1M capacity from the default SAC configuration used in our implementation and apply the same capacity to both SAC and RDA2C. This setting preserves comparability with the standard baseline. It was not selected using the present sweep. For this diagnostic ablation, we vary the shared transition-buffer capacity over 50k, 100k, 250k, 500k, and 1M, while holding RDA2C’s separate advantage-label buffer fixed at 12{,}800. We normalize jointly across both methods and all tested capacities within each environment, so the aggregate summarizes sensitivity to the shared replay capacity and the method gap on a common scale. Figure~\ref{fig:sac_buffer_sensitivity_appendix} shows that aggregate normalized performance peaks near 100k for both methods and decreases for larger capacities. The similar dependence across SAC and RDA2C suggests that this effect comes primarily from recency in the shared transition-replay backbone and not from the RDA-specific label buffer, which is held fixed. We retain 1M in the main comparison to report results under the default SAC configuration.

\begin{figure}[htbp]
    \centering
    \includegraphics[width=\textwidth]{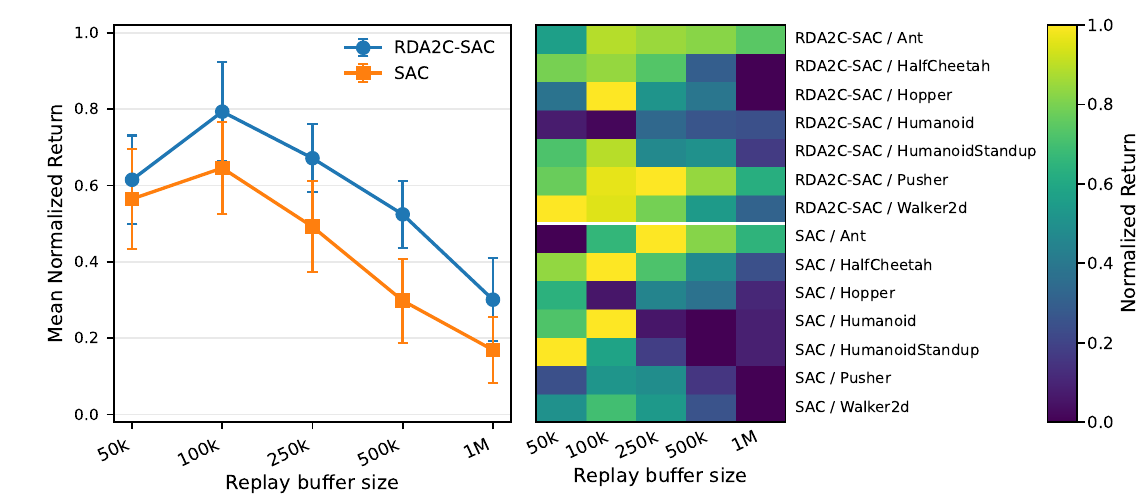}
    \caption{\textbf{Sensitivity to standard SAC replay buffer size.} We vary the ordinary SAC replay buffer and keep the RDA label window fixed at $12{,}800$. Returns are min-max normalized independently within each environment across all method and buffer configurations before aggregation. Error bars denote task SEM. InvertedPendulum is omitted from the normalized aggregate because it is saturated for every setting.}
    \label{fig:sac_buffer_sensitivity_appendix}
\end{figure}

\newpage
\subsection{PPO Network Capacity Control (MuJoCo, GAE)}
\label{app:ablation_ppo_capacity}

The matched comparison in Table~\ref{tab:mujoco_results} sets PPO to RDA2C's width and learning rate. This ablation asks the complementary question: does capacity alone close the gap when PPO keeps its own defaults? Starting from the CleanRL default $2\times64$ trunk (Appendix~\ref{app:default_baselines}), we retrain PPO with a $2\times128$ trunk, keeping the global log-standard-deviation and the remaining default hyperparameters. We report both the default learning rate $3\times10^{-4}$ and the RDA2C rate $2\times10^{-4}$. Table~\ref{tab:ppo_capacity_control} and Figure~\ref{fig:ppo_capacity_control_appendix} show that enlarging PPO does not close the gap: at $3\times10^{-4}$ the wider PPO is often worse than the $2\times64$ baseline (especially on \texttt{Ant-v5} and \texttt{Hopper-v5}), and lowering the learning rate recovers some of that loss but still leaves RDA2C ahead on the tasks with the largest gains.

\begin{table}[htbp]
    \centering
    \caption[PPO capacity control on MuJoCo GAE]{\textbf{PPO network capacity control (MuJoCo, GAE labels).} Mean running episodic return ($\pm$ seed std) over 10 seeds. All PPO variants use global log-standard-deviation. \textbf{Bold} marks the best mean in each row.}
    \label{tab:ppo_capacity_control}
    \resizebox{\textwidth}{!}{%
    \begin{tabular}{lcccc}
        \toprule
        \textbf{Environment} & \textbf{PPO $2\times64$} & \textbf{PPO $2\times128$} & \textbf{PPO $2\times128$} & \textbf{RDA2C $2\times128$} \\
         & \textbf{lr $3\times10^{-4}$} & \textbf{lr $3\times10^{-4}$} & \textbf{lr $2\times10^{-4}$} & \textbf{lr $2\times10^{-4}$} \\
        \midrule
        Ant-v5 & $1855 \pm 519$ & $1260 \pm 441$ & $1537 \pm 757$ & $\mathbf{2831 \pm 583}$ \\
        HalfCheetah-v5 & $1884 \pm 78$ & $1940 \pm 69$ & $2190 \pm 1059$ & $\mathbf{7099 \pm 645}$ \\
        Hopper-v5 & $2588 \pm 831$ & $2154 \pm 719$ & $2363 \pm 464$ & $\mathbf{2607 \pm 272}$ \\
        Humanoid-v5 & $1100 \pm 388$ & $1187 \pm 529$ & $1573 \pm 901$ & $\mathbf{3768 \pm 1290}$ \\
        HumanoidStandup-v5 & $158572 \pm 12341$ & $\mathbf{161394 \pm 6265}$ & $159009 \pm 8458$ & $156068 \pm 9614$ \\
        InvertedPendulum-v5 & $\mathbf{1000.0 \pm 0.0}$ & $952.6 \pm 149.9$ & $997.3 \pm 8.4$ & $999.2 \pm 2.6$ \\
        Pusher-v5 & $-36.2 \pm 2.0$ & $-36.2 \pm 2.1$ & $-36.5 \pm 2.4$ & $\mathbf{-30.6 \pm 3.0}$ \\
        Walker2d-v5 & $3450 \pm 475$ & $3262 \pm 951$ & $\mathbf{3812 \pm 675}$ & $3546 \pm 659$ \\
        \bottomrule
    \end{tabular}%
    }
\end{table}

\begin{figure}[htbp]
    \centering
    \includegraphics[width=0.92\textwidth]{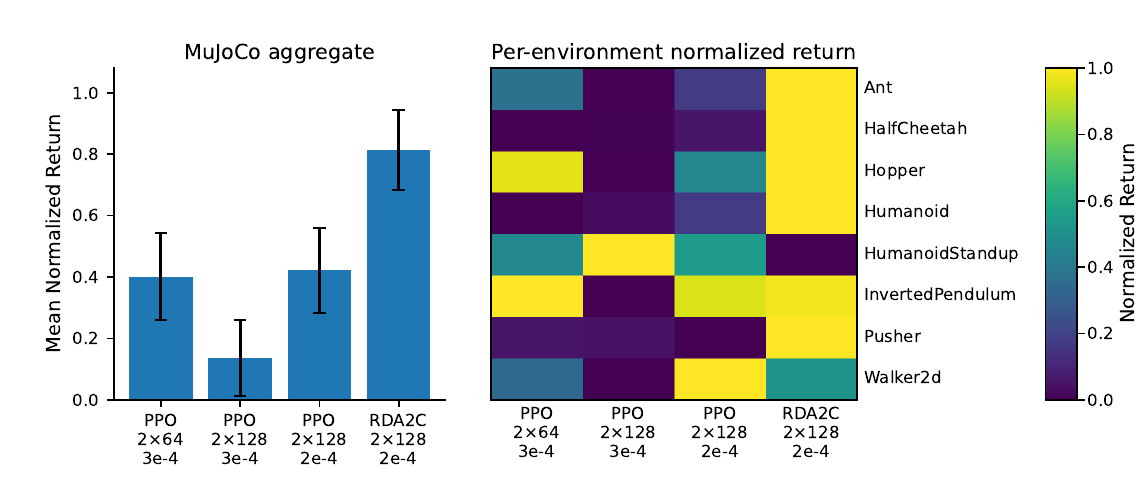}
    \caption{\textbf{PPO network capacity control on MuJoCo (GAE labels).} Returns are min--max normalized within each environment across the four configurations before aggregation. Error bars in the aggregate panel denote task SEM. Matching PPO's width to RDA2C does not reproduce the RDA2C gains.}
    \label{fig:ppo_capacity_control_appendix}
\end{figure}

\FloatBarrier
\section{RDA2C Gradient Boosting with XGBoost}
\label{app:xgb_trees}

\FloatBarrier
\subsection{Overview}

To complement our neural network instantiation of the dual predictor $Z_\theta$, we performed a small study in which $Z_\theta$ is instantiated as a gradient boosted decision tree regressor (XGBoost).
This setting makes the connection between dual averaging and additive function approximation particularly explicit, and it allows us to compare two concrete approximations of the running dual average:
\texttt{xgb\_boosting} (incremental boosting of the cumulative dual) and
\texttt{xgb\_dagger} (DAgger style dataset aggregation with a FIFO buffer).

\paragraph{Supervised representation.}
In both variants, we train a regressor on features
$\phi(s,a)=[s,\mathrm{onehot}(a)]$ and targets given by an advantage label derived
from a learned value function.
The resulting predictor $\widehat{Z}(s,a)$ is converted into a policy by the
closed-form mirror map:
$$
\pi(a\mid s) \propto \exp\!\left(\frac{\widehat{Z}(s,a)}{\alpha}\right),
$$
using the same entropy regularized Boltzmann form as in the main algorithm.
In all runs reported here, we apply clipped importance sampling (IPS) by
reweighting advantage targets by $1/\pi_{\mathrm{beh}}(a\mid s)$ with a maximum
weight (\texttt{iw\_max}). See the configuration tables below.

\paragraph{Incremental boosting (\texttt{xgb\_boosting}).}
We maintain a single XGBoost booster whose prediction represents an
\emph{unnormalized} cumulative dual
$U(s,a)\approx \sum_{k=1}^{K} g_k(s,a)$, where $g_k$ is the (normalized and
optionally IPS weighted) advantage signal from update $k$.
Operationally, after each minibatch we add one tree (warm start) using a
\emph{dual target} $y = U_{\mathrm{prev}} + g$ so that the new tree fits the
residual advantage.
At action selection time, logits are formed as
$U(s,a)/(K\lambda+\beta)$, where $K$ is the current number of boosted rounds.
This matches the scaling implied by the ratio $Z_t/\alpha_t$ in
\eqref{eq:closed_form_final} when identifying $K$ with the number of dual updates.

\paragraph{Dataset aggregation (\texttt{xgb\_dagger}).}
We maintain a FIFO buffer of advantage labeled tuples $(s,a,\hat A)$ collected
from all past rollouts (up to a fixed capacity), and periodically continue
boosting on the full buffer.
This corresponds to fitting a single predictor to the pooled dataset, which
approximates the average dual $Z_t$ under the mixture of visited state--action
distributions.
The policy uses the standard temperature schedule
$\alpha_t=\lambda+\beta/t$ and applies the
same softmax mapping.

\FloatBarrier
\subsection{Hyperparameters}

\paragraph{Experimental protocol.}
We ran both variants for $10^6$ environment steps on \texttt{CartPole-v1} and
\texttt{Acrobot-v1}, using 8 parallel environments with rollout length 256 (batch
size 2048).
We report mean $\pm$ standard deviation of the final episodic return over 5 random
seeds.

\paragraph{Hyperparameters.}
Tables~\ref{tab:xgb_dagger_hparams} and~\ref{tab:xgb_boosting_hparams} list the
configuration used for each environment.

\begin{table}[htbp]
    \centering
    \caption{\textbf{XGBoost dataset aggregation (\texttt{xgb\_dagger}) configurations.}}
    \label{tab:xgb_dagger_hparams}
    \small
    \begin{tabular}{lcc}
        \toprule
        \textbf{Parameter} & \textbf{CartPole-v1} & \textbf{Acrobot-v1} \\
        \midrule
        Total timesteps & $1{,}000{,}000$ & $1{,}000{,}000$ \\
        Num envs / steps & $8$ / $256$ & $8$ / $256$ \\
        Batch size & $2048$ & $2048$ \\
        $\gamma$ / GAE $\lambda_{\mathrm{GAE}}$ & $0.99$ / $0.8$ & $0.99$ / $0.8$ \\
        Value LR / epochs & $2.5{\times}10^{-3}$ / $4$ & $2.5{\times}10^{-3}$ / $4$ \\
        Anneal LR & \texttt{True} & \texttt{False} \\
        Max grad norm & $0.5$ & $0.5$ \\
        FIFO capacity & $100{,}000$ & $200{,}000$ \\
        Fit every (iters) & $4$ & $2$ \\
        Advantage clip & $10$ & $25$ \\
        IPS / \texttt{iw\_max} & \texttt{True} / $10$ & \texttt{True} / $5$ \\
        $\alpha_t$ schedule & $\lambda{=}0.15,\,\beta{=}2$ & $\lambda{=}0.2,\,\beta{=}2$ \\
        Trees per fit & $1$ & $2$ \\
        Tree LR (\texttt{eta}) & $0.05$ & $0.05$ \\
        Max depth & $3$ & $7$ \\
        Min child weight & $3$ & $3$ \\
        Subsample / colsample & $1.0$ / $1.0$ & $1.0$ / $1.0$ \\
        \bottomrule
    \end{tabular}
\end{table}

\begin{table}[htbp]
    \centering
    \caption{\textbf{XGBoost incremental boosting (\texttt{xgb\_boosting}) configurations.}}
    \label{tab:xgb_boosting_hparams}
    \small
    \begin{tabular}{lcc}
        \toprule
        \textbf{Parameter} & \textbf{CartPole-v1} & \textbf{Acrobot-v1} \\
        \midrule
        Total timesteps & $1{,}000{,}000$ & $1{,}000{,}000$ \\
        Num envs / steps & $8$ / $256$ & $8$ / $256$ \\
        Batch size & $2048$ & $2048$ \\
        $\gamma$ / GAE $\lambda_{\mathrm{GAE}}$ & $0.99$ / $0.8$ & $0.99$ / $0.8$ \\
        Value LR / epochs & $3{\times}10^{-3}$ / $1$ & $10^{-2}$ / $1$ \\
        Anneal LR & \texttt{True} & \texttt{True} \\
        Max grad norm & $10$ & $10$ \\
        Normalize adv & \texttt{True} & \texttt{True} \\
        Advantage clip & $5$ & $5$ \\
        IPS / \texttt{iw\_max} & \texttt{True} / $10$ & \texttt{True} / $10$ \\
        $\alpha_t$ schedule & $\lambda{=}10^{-4},\,\beta{=}1$ & $\lambda{=}10^{-4},\,\beta{=}1$ \\
        Tree LR (\texttt{eta}) & $0.5$ & $0.45$ \\
        Max depth & $3$ & $3$ \\
        Min child weight & $5$ & $5$ \\
        Subsample / colsample & $1.0$ / $1.0$ & $1.0$ / $1.0$ \\
        \bottomrule
    \end{tabular}
\end{table}

\FloatBarrier
\subsection{Results}

\begin{table}[htbp]
    \centering
    \caption{\textbf{RDA2C gradient boosting results.} Final episodic return (mean $\pm$ std over 5 seeds).}
    \label{tab:xgb_results}
    \begin{tabular}{llc}
        \toprule
        \textbf{Environment} & \textbf{Method} & \textbf{Final return} \\
        \midrule
        CartPole-v1 & \texttt{xgb\_dagger} & $500.00 \pm 0.00$ \\
        CartPole-v1 & \texttt{xgb\_boosting} & $500.00 \pm 0.00$ \\
        Acrobot-v1 & \texttt{xgb\_dagger} & $-84.15 \pm 4.57$ \\
        Acrobot-v1 & \texttt{xgb\_boosting} & $-88.98 \pm 2.43$ \\
        \bottomrule
    \end{tabular}
\end{table}

\begin{figure}[htbp]
    \centering
    \begin{subfigure}[t]{0.47\textwidth}
        \centering
        \includegraphics[width=\textwidth]{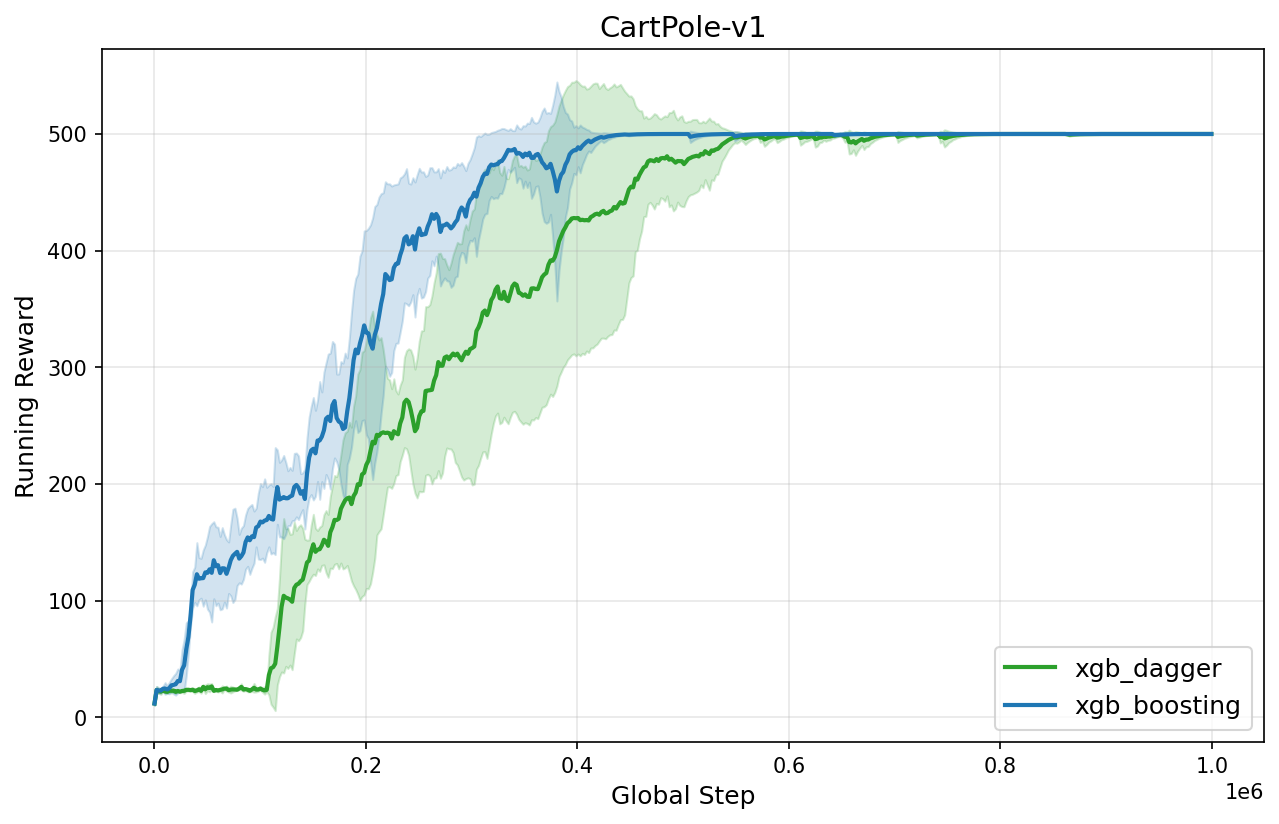}
    \end{subfigure}\hfill
    \begin{subfigure}[t]{0.47\textwidth}
        \centering
        \includegraphics[width=\textwidth]{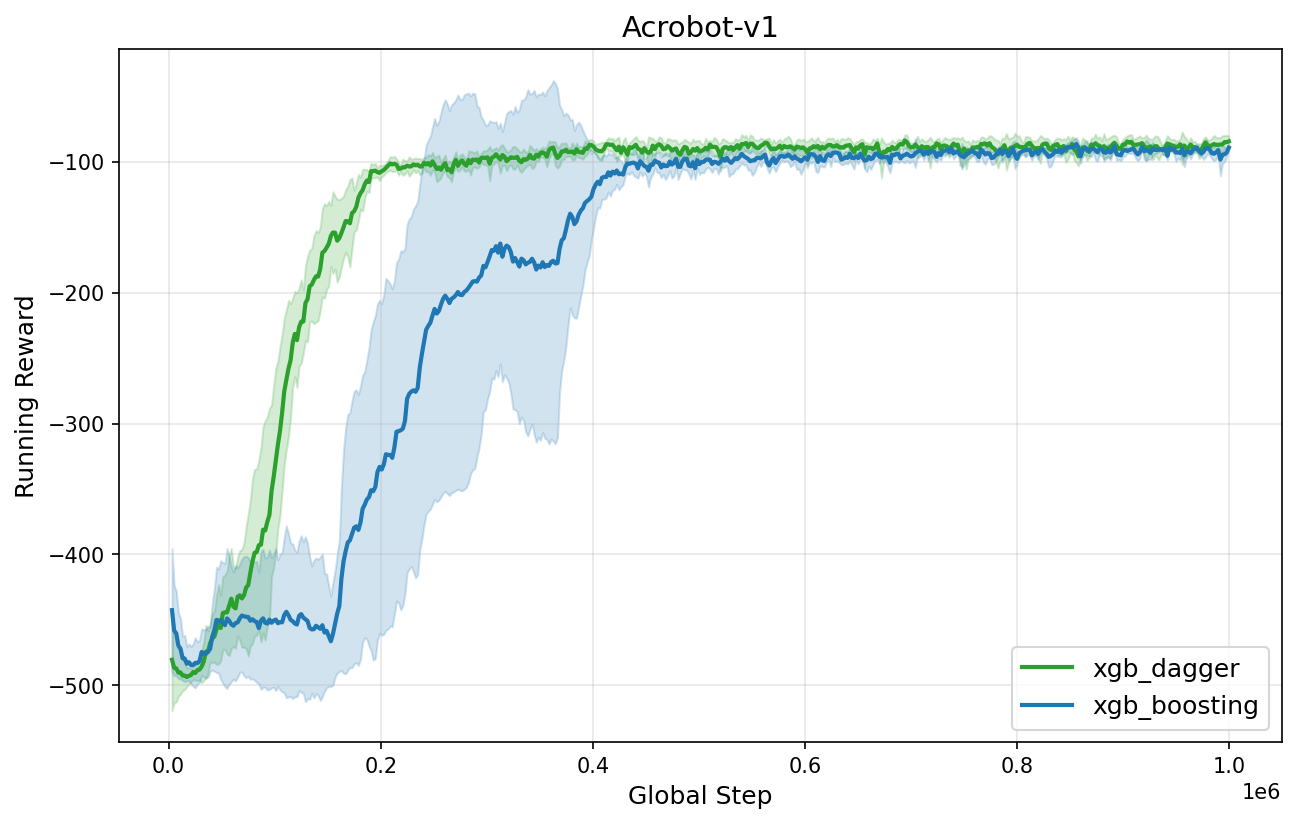}
    \end{subfigure}
    \caption{\textbf{RDA2C gradient boosting learning curves.} On CartPole-v1 (left), both \texttt{xgb\_dagger} and \texttt{xgb\_boosting} reach $500$ within $10^6$ steps. On Acrobot-v1 (right), \texttt{xgb\_dagger} is slightly more sample efficient early, and both methods converge similarly by $10^6$ steps.}
    \label{fig:xgb_learning_curves}
\end{figure}

\newpage
\section{Proofs and implications for policy performance}
\label{app:proofs}

The proof below analyzes the learned replay actor through a direct regularized score to value argument.

The three layers are:
\begin{enumerate}
    \item \emph{Replay window Bayes target and local drift.} The replay regression Bayes target $Z_t^A$ is a convex combination of recent advantages at each state action pair, so it tracks the current advantage $A_t$ up to a local window drift term.
    \item \emph{Direct regularized learned readout bound.} Since the learned policy $\pi_{\theta_t}$ is the exact entropy regularized maximizer of the learned score $Z_{\theta_{t-1}}$, we prove regularized approximate optimality for $\pi_{\theta_t}$. The residual $Z_{\theta_{t-1}}-Z_t^A$ against the regression Bayes target $Z_t^A$ enters through a direct inner product error under the comparator occupancy.
    \item \emph{Replay decomposition.} The learned score error $\|Z_{\theta_{t-1}}-Z_t^A\|_{2,\mu_t}$ is decomposed into supervised fitting error, finite replay concentration, and critic label bias.
\end{enumerate}

\paragraph{Estimation chain.}
The estimation chain summarizes the construction, and each arrow is one estimation step:
\begin{equation*}
Z_t \;\xrightarrow{\text{replay weighting}}\; \widehat Z_t \;\xrightarrow{\text{regression}}\; Z_\theta \;\xrightarrow{\text{mirror map}}\; \pi_{\theta_{t+1}}.
\end{equation*}
The exact score $Z_t$ is the state-weighted average advantage. Replay sampling replaces it with the Bayes target $\widehat Z_t$, which carries the behavior action weights of the stored tuples. Fitting gives $Z_\theta$, and the entropy mirror map turns $Z_\theta$ into the next policy. Section~\ref{sec:theory} bounds each gap in this chain. For states outside the support of $\bar d_t$, $Z_t$ can be assigned any fixed value, since those states have zero weight in the objective in \eqref{eq:rda_update} until later data cover them.

\FloatBarrier
\subsection{Performance difference lemma}
\label{app:performance_difference_lemma}

With normalized discounted occupancies, the following lemma converts policy space advantage feedback into value differences.

\begin{lemma}[Performance difference lemma \citep{kakade2002approximately}]
\label{lem:performance_difference_lemma}
For any stationary comparator $\pi^\star$ and any policy $\pi_{\theta_t}$, with $A_t=A^{\pi_{\theta_t}}$,
\begin{equation}
\label{eq:pdl_advantage_feedback}
    J(\pi^\star)-J(\pi_{\theta_t})
    =
    \frac{1}{1-\gamma}
    \mathbb E_{s\sim d^{\pi^\star}}
    \left[
        \left\langle
        \pi^\star(\cdot\mid s)-\pi_{\theta_t}(\cdot\mid s),
        A_t(s,\cdot)
        \right\rangle
    \right].
\end{equation}
\end{lemma}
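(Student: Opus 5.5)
The plan is the classical telescoping argument of \citet{kakade2002approximately}, specialized to the normalized occupancy $d^{\pi^\star}$ of Section~\ref{sec:preliminaries}. Fix $\pi=\pi_{\theta_t}$ and write $A_t=A^{\pi}$. I first prove the identity for a fixed initial state $s_0$ and then average over $s_0\sim\rho_0$.

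\textbf{Step 1 (telescoping along comparator trajectories).} Let $\tau=(s_0,a_0,s_1,\dots)$ be generated by $\pi^\star$ from $s_0$. For each horizon $T$,
\[
\sum_{k=0}^{T-1}\gamma^k r(s_k,a_k)-V^{\pi}(s_0)
=\sum_{k=0}^{T-1}\gamma^k\big(r(s_k,a_k)+\gamma V^{\pi}(s_{k+1})-V^{\pi}(s_k)\big)-\gamma^{T}V^{\pi}(s_T).
\]
The rewards are bounded by $R_{\max}$, so $|V^{\pi}|\le R_{\max}/(1-\gamma)$. Hence the remainder $\gamma^T V^\pi(s_T)$ tends to zero, and the partial sums are dominated by $R_{\max}(1+2\gamma/(1-\gamma))\sum_k\gamma^k<\infty$. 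Dominated convergence then lets me take expectations and send $T\to\infty$. This justification of the interchange of the infinite sum and the expectation is the only step requiring care, and it is exactly where bounded rewards enter.

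\textbf{Step 2 (one-step advantage identification).} Condition on $(s_k,a_k)$ and take the expectation over $s_{k+1}\sim P(\cdot\mid s_k,a_k)$. By the definition of $Q^\pi$ in Section~\ref{sec:preliminaries}, the bracket becomes $Q^{\pi}(s_k,a_k)-V^{\pi}(s_k)=A_t(s_k,a_k)$. Therefore
\[
V^{\pi^\star}(s_0)-V^{\pi}(s_0)=\mathbb E_{\tau\sim\pi^\star}\Big[\sum_k\gamma^k A_t(s_k,a_k)\Big].
\]

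\textbf{Step 3 (occupancy rewriting and centering).} Average over $s_0\sim\rho_0$ and use Fubini, which is allowed since the summands are absolutely summable. With $d^{\pi^\star}(s)=(1-\gamma)\sum_k\gamma^k\mathbb P_{\pi^\star}(s_k=s)$, and $a_k\sim\pi^\star(\cdot\mid s_k)$, this gives
\[
J(\pi^\star)-J(\pi)=\frac1{1-\gamma}\,\mathbb E_{s\sim d^{\pi^\star}}\big\langle\pi^\star(\cdot\mid s),A_t(s,\cdot)\big\rangle .
\]
Finally, Section~\ref{sec:preliminaries} gives $\mathbb E_{a\sim\pi(\cdot\mid s)}[A^{\pi}(s,a)]=0$, that is, $\langle\pi_{\theta_t}(\cdot\mid s),A_t(s,\cdot)\rangle=0$ for every $s$. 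Subtracting this zero term inside the expectation yields \eqref{eq:pdl_advantage_feedback}. For continuous state or action spaces, the sums over $a$ and $s$ become integrals, and the same dominated-convergence argument applies verbatim.
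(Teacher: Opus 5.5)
Your proposal is correct. It is the standard telescoping proof of Kakade and Langford, which the paper does not reproduce but invokes only by citation, and it uses the same normalized occupancy $d^{\pi^\star}$ and the centering identity $\mathbb E_{a\sim\pi(\cdot\mid s)}[A^\pi(s,a)]=0$. One harmless slip: the per-step domination constant should be $|r(s_k,a_k)+\gamma V^{\pi}(s_{k+1})-V^{\pi}(s_k)|\le 2R_{\max}/(1-\gamma)$ rather than $R_{\max}(1+2\gamma/(1-\gamma))$, but only finiteness matters for the dominated-convergence step.
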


\FloatBarrier
\subsection{Window notation and replay Bayes target}

Fix an exponent $\zeta\in(0,1)$. For every $t\ge2$, let
$$
    W_t=\max\{1,\lfloor t^\zeta\rfloor\}
$$
be the number of policy iterations represented in the replay window, and define
$$
    \mathcal W_t
    =
    \{t-W_t,\ldots,t-1\}.
$$
Thus $|\mathcal W_t|=W_t$ and $W_t\le t^\zeta$. Replacing $\lfloor t^\zeta\rfloor$ by $\lceil t^\zeta\rceil$ only changes constants and has no effect on the rate. If the implementation measures replay capacity in samples rather than iterations, a fixed batch size converts sample capacity into a constant multiple of $W_t$, and that constant is absorbed into $C_{\mathrm{var}}$ below.

Throughout this section,
$$
    J(\pi)=J_\rho(\pi):=V^\pi(\rho)=\mathbb E_{s_0\sim\rho}[V^\pi(s_0)]
$$
denotes the scalar start distribution value from the fixed initial distribution $\rho$. We also assume that rewards are uniformly bounded by a global constant $R_{\max}<\infty$:
$$
    |r(s,a)|\le R_{\max}
    \qquad\text{for all }(s,a).
$$
We write $G_A$ for a uniform advantage bound. Under this two sided reward bound one may take $G_A=2R_{\max}/(1-\gamma)$, although the analysis below only uses $\|A_t\|_\infty\le G_A$.

Write
$$
    A_k(s,a)=A^{\pi_{\theta_k}}(s,a)
$$
for the advantage of the learned policy at round $k$.

When we need the population replay distribution explicitly, we reuse the occupancy notation from Section~\ref{sec:preliminaries}:
$$
    \mu_k(s,a)
    =
    \mu^{\pi_{\theta_k}}(s,a)
    =
    d^{\pi_{\theta_k}}(s)\pi_{\theta_k}(a\mid s)
$$
for the state action occupancy of the data generated at round $k$, and define the window replay marginal
$$
    \mu_t(s,a)
    =
    \frac1{W_t}
    \sum_{k\in\mathcal W_t}\mu_k(s,a).
$$
Thus $\mu_t$ is the population state action distribution represented by the current replay window. Ratios involving $\mu_t(s,a)$ are always understood on the support where $\mu_t(s,a)>0$.

The score learned by the squared loss replay regression has, with true advantage labels, the population conditional mean target
$$
    Z_t^A(s,a)
    =
    \begin{cases}
    \displaystyle
    \frac{
        \sum_{k\in\mathcal W_t}\mu_k(s,a)A_k(s,a)
    }{
        \sum_{k\in\mathcal W_t}\mu_k(s,a)
    },
    &\displaystyle \sum_{k\in\mathcal W_t}\mu_k(s,a)>0,\\[2.25ex]
    \displaystyle
    \frac1{W_t}
    \sum_{k\in\mathcal W_t}A_k(s,a),
    &\displaystyle \sum_{k\in\mathcal W_t}\mu_k(s,a)=0.
    \end{cases}
$$
On the support of $\mu_t$ this is the replay regression Bayes target. The second line is a harmless tie breaker on state action pairs with zero window replay mass. It is included only so that the pointwise supremum $\|A_t-Z_t^A\|_\infty$ used in Layer~1 is defined on every $(s,a)$, and it does not affect the $L_2(\mu_t)$ score error term, which never sees the off support branch. At every $(s,a)$, $Z_t^A(s,a)$ is therefore a convex combination of the recent advantages $\{A_k(s,a):k\in\mathcal W_t\}$, with coefficients proportional to the joint occupancy $\mu_k(s,a)$ on the support and uniform off it. This convex combination property is all that the local drift bound uses.

The algorithm uses a fixed stabilizer $\beta\ge0$. The temperature of the readout that produces the round-$t$ policy is that of \eqref{eq:closed_form_final},
$$
    \alpha_{t-1}
    =
    \lambda+\frac{\beta}{t-1},
    \qquad
    \lambda>0.
$$

For $t\ge2$, define the learned readout by
$$
    \pi_{\theta_t}(\cdot\mid s)
    =
    \operatorname{softmax}\!\left(
        \frac{Z_{\theta_{t-1}}(s,\cdot)}{\alpha_{t-1}}
    \right).
$$
Here $Z_{\theta_{t-1}}$ denotes the score learned using data available before round $t$. The proof is stated for the average over $t=2,\ldots,T$, so every policy in the average is read out from a nonempty past window.

\FloatBarrier
\subsection{Layer 1: replay window Bayes target and local drift}

The role of Layer~1 is now only to connect the window target appearing in the score residual to the current advantage in the performance difference lemma. The main bridge is the regression Bayes target $Z_t^A$. Since $Z_t^A(s,a)$ is a convex combination of recent advantages at each fixed $(s,a)$, the local policy path stability assumption implies $\|A_t-Z_t^A\|_\infty=O(t^{\zeta-1})$.

\begin{assumption}[Learned policy path stability]
\label{ass:learned_policy_path_stability_clean}
There exists $C_\pi>0$ such that the learned policies satisfy
$$
    \sup_s
    \left\|
        \pi_{\theta_{k+1}}(\cdot\mid s)
        -
        \pi_{\theta_k}(\cdot\mid s)
    \right\|_1
    \le
    \frac{C_\pi}{k}
$$
for every $k\ge1$.
\end{assumption}

This is the algorithm side stability condition needed because the replay buffer is generated by the learned policies $\pi_{\theta_k}$, not by an auxiliary exact windowed readout. For learned policies, this need not follow from average fitting error alone: consecutive learned policies can move by a constant amount while maintaining small average replay error. We therefore state this assumption explicitly. It is natural for warm started actor updates with $1/k$-scale step sizes, trust region constraints, clipping, or projection onto a path stable policy class.

\begin{lemma}[Value and advantage Lipschitzness in policy]
\label{lem:value_advantage_lipschitz_clean}
For any two stationary policies $\pi$ and $\pi'$,
$$
    \left\|V^\pi-V^{\pi'}\right\|_\infty
    \le
    \frac{R_{\max}}{(1-\gamma)^2}
    \sup_s
    \left\|
        \pi(\cdot\mid s)-\pi'(\cdot\mid s)
    \right\|_1.
$$
Moreover,
$$
    \left\|A^\pi-A^{\pi'}\right\|_\infty
    \le
    \frac{(1+\gamma)R_{\max}}{(1-\gamma)^2}
    \sup_s
    \left\|
        \pi(\cdot\mid s)-\pi'(\cdot\mid s)
    \right\|_1.
$$
\end{lemma}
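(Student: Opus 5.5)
The plan is to prove the value bound first by a perturbation argument on the Bellman equation, and then to deduce the advantage bound from it. Write $\delta:=\sup_s\|\pi(\cdot\mid s)-\pi'(\cdot\mid s)\|_1$, let $r_\pi(s)=\sum_a\pi(a\mid s)r(s,a)$, and let $P_\pi$ be the state transition operator under $\pi$. Then $V^\pi=r_\pi+\gamma P_\pi V^\pi$, and similarly for $\pi'$. Subtracting the two equations and adding and subtracting $\gamma P_\pi V^{\pi'}$ gives
\[
V^\pi-V^{\pi'}=(I-\gamma P_\pi)^{-1}\big[(r_\pi-r_{\pi'})+\gamma(P_\pi-P_{\pi'})V^{\pi'}\big].
\]

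Next, bound each term in sup norm. Since $|r|\le R_{\max}$, we get $\|r_\pi-r_{\pi'}\|_\infty\le R_{\max}\delta$. For the transition term, note that $(P_\pi-P_{\pi'})V^{\pi'}(s)=\sum_a(\pi-\pi')(a\mid s)\,Q'(s,a)$ with $Q'(s,a)=\mathbb E_{s'\sim P(\cdot\mid s,a)}V^{\pi'}(s')$ and $\|Q'\|_\infty\le R_{\max}/(1-\gamma)$. It would be more natural to group the two terms together as
\[
\sum_a(\pi-\pi')(a\mid s)\,Q^{\pi'}(s,a),\qquad\text{with } \|Q^{\pi'}\|_\infty\le R_{\max}/(1-\gamma).
\]
This yields an inner bound of $R_{\max}\delta/(1-\gamma)$. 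Applying $\|(I-\gamma P_\pi)^{-1}\|_{\infty\to\infty}\le 1/(1-\gamma)$ then gives exactly the stated $R_{\max}\delta/(1-\gamma)^2$. So the writeup should use the $Q^{\pi'}$ grouping, i.e. $V^\pi-V^{\pi'}=(I-\gamma P_\pi)^{-1}\big[\sum_a(\pi-\pi')(a\mid\cdot)Q^{\pi'}(\cdot,a)\big]$, which follows because $r_\pi+\gamma P_\pi V^{\pi'}=\sum_a\pi(a\mid\cdot)Q^{\pi'}(\cdot,a)$ and $V^{\pi'}=\sum_a\pi'(a\mid\cdot)Q^{\pi'}(\cdot,a)$. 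Equivalently, this is the statewise performance difference identity.

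For the advantage bound, decompose
\[
A^\pi-A^{\pi'}=(Q^\pi-Q^{\pi'})-(V^\pi-V^{\pi'}).
\]
Since $Q^\pi(s,a)-Q^{\pi'}(s,a)=\gamma\,\mathbb E_{s'}[V^\pi(s')-V^{\pi'}(s')]$, its sup norm is at most $\gamma\|V^\pi-V^{\pi'}\|_\infty$. The triangle inequality then gives $(1+\gamma)\|V^\pi-V^{\pi'}\|_\infty$, and inserting the first bound finishes the proof.

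The only delicate step is obtaining the constant $R_{\max}/(1-\gamma)^2$ instead of a looser one: grouping the reward and transition terms separately gives $R_{\max}(1+\gamma/(1-\gamma))/(1-\gamma)$, which equals the same constant. Either grouping works, and I would present the $Q^{\pi'}$ grouping because it is cleanest. For continuous state spaces, the Neumann series argument holds verbatim for bounded measurable functions.
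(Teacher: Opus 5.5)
Your proof is correct and follows essentially the same route as the paper, which uses the same one-step identity $V^\pi(s)-V^{\pi'}(s)=\langle\pi(\cdot\mid s)-\pi'(\cdot\mid s),Q^{\pi'}(s,\cdot)\rangle+\mathbb E_{a\sim\pi(\cdot\mid s)}[Q^\pi(s,a)-Q^{\pi'}(s,a)]$ but closes it by taking the supremum and rearranging instead of inverting $I-\gamma P_\pi$, which is the same bound packaged differently. Your advantage step, $\|Q^\pi-Q^{\pi'}\|_\infty\le\gamma\|V^\pi-V^{\pi'}\|_\infty$ followed by the triangle inequality, is identical to the paper's.
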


\begin{proof}
For any state $s$,
$$
\begin{aligned}
V^\pi(s)-V^{\pi'}(s)
&=
\left\langle
    \pi(\cdot\mid s)-\pi'(\cdot\mid s),
    Q^{\pi'}(s,\cdot)
\right\rangle \\
&\qquad+
\mathbb E_{a\sim\pi(\cdot\mid s)}
\left[
    Q^\pi(s,a)-Q^{\pi'}(s,a)
\right].
\end{aligned}
$$
By the Bellman relation,
$$
    Q^\pi(s,a)-Q^{\pi'}(s,a)
    =
    \gamma
    \mathbb E_{s'\sim P(\cdot\mid s,a)}
    \left[
        V^\pi(s')-V^{\pi'}(s')
    \right].
$$
Therefore, by H\"older's inequality,
$$
\begin{aligned}
\left|V^\pi(s)-V^{\pi'}(s)\right|
&\le
\left\|\pi(\cdot\mid s)-\pi'(\cdot\mid s)\right\|_1
\left\|Q^{\pi'}\right\|_\infty
+
\gamma
\left\|V^\pi-V^{\pi'}\right\|_\infty.
\end{aligned}
$$
Taking the supremum over $s$ and using $\left\|Q^{\pi'}\right\|_\infty\le R_{\max}/(1-\gamma)$ gives
$$
    \left\|V^\pi-V^{\pi'}\right\|_\infty
    \le
    \frac{R_{\max}}{1-\gamma}
    \sup_s
    \left\|
        \pi(\cdot\mid s)-\pi'(\cdot\mid s)
    \right\|_1
    +
    \gamma
    \left\|V^\pi-V^{\pi'}\right\|_\infty.
$$
Rearranging proves the value bound. For the advantage bound,
$$
    \left\|Q^\pi-Q^{\pi'}\right\|_\infty
    \le
    \gamma
    \left\|V^\pi-V^{\pi'}\right\|_\infty,
$$
and since $A^\pi=Q^\pi-V^\pi$,
$$
    \left\|A^\pi-A^{\pi'}\right\|_\infty
    \le
    \left\|Q^\pi-Q^{\pi'}\right\|_\infty
    +
    \left\|V^\pi-V^{\pi'}\right\|_\infty
    \le
    (1+\gamma)
    \left\|V^\pi-V^{\pi'}\right\|_\infty.
$$
Substituting the value bound completes the proof.
\end{proof}

\begin{corollary}[Local advantage drift]
\label{cor:local_advantage_drift_clean}
Suppose Assumption~\ref{ass:learned_policy_path_stability_clean} holds. If $A_k=A^{\pi_{\theta_k}}$, then for every $k\ge1$,
$$
    \left\|A_{k+1}-A_k\right\|_\infty
    \le
    \frac{L_A}{k},
    \qquad
    L_A
    =
    \frac{(1+\gamma)R_{\max}C_\pi}{(1-\gamma)^2}.
$$
\end{corollary}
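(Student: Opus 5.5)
This follows by composing Lemma~\ref{lem:value_advantage_lipschitz_clean} with Assumption~\ref{ass:learned_policy_path_stability_clean}; no further estimate is needed. Fix $k\ge1$ and apply the advantage part of Lemma~\ref{lem:value_advantage_lipschitz_clean} with $\pi=\pi_{\theta_{k+1}}$ and $\pi'=\pi_{\theta_k}$. Both are stationary policies, being softmax readouts of fixed scores. Since $A_{k+1}=A^{\pi_{\theta_{k+1}}}$ and $A_k=A^{\pi_{\theta_k}}$ by definition, the lemma gives
$$
    \left\|A_{k+1}-A_k\right\|_\infty
    \le
    \frac{(1+\gamma)R_{\max}}{(1-\gamma)^2}
    \sup_s
    \left\|
        \pi_{\theta_{k+1}}(\cdot\mid s)-\pi_{\theta_k}(\cdot\mid s)
    \right\|_1 .
$$

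Next, bound the supremum on the right by $C_\pi/k$ using Assumption~\ref{ass:learned_policy_path_stability_clean}, which holds for every $k\ge1$. Multiplying the constants gives exactly $L_A/k$ with $L_A=(1+\gamma)R_{\max}C_\pi/(1-\gamma)^2$, as claimed.

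The only point to check is that the hypotheses of Lemma~\ref{lem:value_advantage_lipschitz_clean} actually hold. It uses only the uniform reward bound $|r|\le R_{\max}$, which gives $\|Q^{\pi'}\|_\infty\le R_{\max}/(1-\gamma)$, and this bound is assumed throughout the section. The indexing also matches: the assumption is stated for consecutive learned policies $\pi_{\theta_k},\pi_{\theta_{k+1}}$ at the same index $k$ that appears in the corollary, so no index shift is introduced. I do not expect any real obstacle here. The corollary is a direct specialization, and its role is to feed the telescoping drift bound $\|A_t-Z_t^A\|_\infty\le\sum_{k=t-W_t}^{t-1}L_A/k$ used later in Layer~1.
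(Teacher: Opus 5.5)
Your proof is correct and matches the paper's argument: apply the advantage bound of Lemma~\ref{lem:value_advantage_lipschitz_clean} to the consecutive pair $\pi_{\theta_{k+1}},\pi_{\theta_k}$, then bound $\sup_s\|\pi_{\theta_{k+1}}(\cdot\mid s)-\pi_{\theta_k}(\cdot\mid s)\|_1$ by $C_\pi/k$ via Assumption~\ref{ass:learned_policy_path_stability_clean}. One small aside: $\pi_{\theta_1}$ is the initial policy rather than a softmax readout, but it is still stationary, so the $k=1$ case goes through unchanged.
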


\begin{proof}
By Lemma~\ref{lem:value_advantage_lipschitz_clean},
$$
\|A_{k+1}-A_k\|_\infty
\le
\frac{(1+\gamma)R_{\max}}{(1-\gamma)^2}
\sup_s\|\pi_{\theta_{k+1}}(\cdot\mid s)-\pi_{\theta_k}(\cdot\mid s)\|_1.
$$
Applying Assumption~\ref{ass:learned_policy_path_stability_clean} to the right hand side gives the claim with $L_A = (1+\gamma) R_{\max}C_\pi/(1-\gamma)^2$.
\end{proof}

\begin{lemma}[The recent window tracks the current advantage]
\label{lem:recent_window_tracks_current_advantage_clean}
Under Corollary~\ref{cor:local_advantage_drift_clean}, for every $t\ge2$,
$$
    \left\|A_t-Z_t^A\right\|_\infty
    \le
    C_{\mathrm{loc}}t^{\zeta-1},
$$
where one may take
$$
    C_{\mathrm{loc}}
    =
    \frac{L_A}{1-2^{\zeta-1}}.
$$
Consequently, for every $T\ge2$,
$$
    \frac{2}{T-1}\sum_{t=2}^T
    \left\|A_t-Z_t^A\right\|_\infty
    \le
    \frac{2C_{\mathrm{loc}}}{\zeta}
    \frac{T^\zeta}{T-1}.
$$
\end{lemma}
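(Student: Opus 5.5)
Since $Z_t^A(s,a)$ is, at every $(s,a)$, a convex combination of $\{A_k(s,a):k\in\mathcal W_t\}$, we have pointwise
$$
|A_t(s,a)-Z_t^A(s,a)|\le \max_{k\in\mathcal W_t}|A_t(s,a)-A_k(s,a)|\le \max_{k\in\mathcal W_t}\sum_{j=k}^{t-1}\|A_{j+1}-A_j\|_\infty .
$$
The maximum is attained at the oldest index $k=t-W_t$. Applying Corollary~\ref{cor:local_advantage_drift_clean} to each increment gives
$$
\|A_t-Z_t^A\|_\infty\le L_A\sum_{j=t-W_t}^{t-1}\frac1j .
$$
The task is then to bound this harmonic tail by $C_{\mathrm{loc}}t^{\zeta-1}$ with $C_{\mathrm{loc}}=L_A/(1-2^{\zeta-1})$.

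\textbf{Bounding the tail.} Each term satisfies $1/j\le 1/(t-W_t)$, so the sum is at most $W_t/(t-W_t)\le t^\zeta/(t-t^\zeta)=t^{\zeta-1}/(1-t^{\zeta-1})$. For $t\ge2$ we have $t^{\zeta-1}\le 2^{\zeta-1}<1$, hence $1-t^{\zeta-1}\ge 1-2^{\zeta-1}$, which yields exactly $C_{\mathrm{loc}}t^{\zeta-1}$.

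\textbf{Edge case.} When $W_t=1$ but $t^\zeta<1$, the crude bound $W_t\le t^\zeta$ fails. This cannot occur for $t\ge2$, since then $t^\zeta\ge1$ and $\lfloor t^\zeta\rfloor\ge1$. So $W_t=\lfloor t^\zeta\rfloor\le t^\zeta$ throughout, and we also need $t-W_t\ge1$, which holds because $t^\zeta<t$. This is the only delicate point, and it checks out.

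\textbf{Summed bound.} Sum the pointwise bound over $t=2,\dots,T$:
$$
\sum_{t=2}^T t^{\zeta-1}\le\int_1^T x^{\zeta-1}\,dx=\frac{T^\zeta-1}{\zeta}\le\frac{T^\zeta}{\zeta},
$$
using that $x^{\zeta-1}$ is decreasing, so $t^{\zeta-1}\le\int_{t-1}^t x^{\zeta-1}\,dx$. Multiplying by $2C_{\mathrm{loc}}/(T-1)$ gives the stated average bound.
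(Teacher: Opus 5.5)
Your proof is correct and matches the paper's argument: you telescope the drift bound of Corollary~\ref{cor:local_advantage_drift_clean} over the window, bound the harmonic tail by $W_t/(t-W_t)\le t^{\zeta-1}/(1-2^{\zeta-1})$, pass to $Z_t^A$ through the convex-combination property, and average using the integral comparison $\sum_{t=2}^T t^{\zeta-1}\le T^\zeta/\zeta$. Your explicit check that $W_t\le t^\zeta$ and $t-W_t\ge1$ for $t\ge2$ is a small detail the paper leaves implicit.
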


\begin{proof}
For $k\in\mathcal W_t$, Corollary~\ref{cor:local_advantage_drift_clean} gives
$$
\begin{aligned}
\left\|A_t-A_k\right\|_\infty
&\le
\sum_{j=k}^{t-1}
\left\|A_{j+1}-A_j\right\|_\infty \\
&\le
L_A\sum_{j=k}^{t-1}\frac1j.
\end{aligned}
$$

Since $k\ge t-W_t$ and $W_t\le t^\zeta$, we have $k\ge t-t^\zeta$. Also, since $t^\zeta/t=t^{\zeta-1}\le2^{\zeta-1}$ for every $t\ge2$,
$$
    t-W_t
    \ge
    t-t^\zeta
    \ge
    \left(1-2^{\zeta-1}\right)t.
$$
Therefore,
$$
    \sum_{j=k}^{t-1}\frac1j
    \le
    \frac{t-k}{t-W_t}
    \le
    \frac{W_t}{(1-2^{\zeta-1})t}
    \le
    \frac{t^{\zeta-1}}{1-2^{\zeta-1}}.
$$
Thus, uniformly over $k\in\mathcal W_t$,
$$
    \left\|A_t-A_k\right\|_\infty
    \le
    C_{\mathrm{loc}}t^{\zeta-1}.
$$
For each fixed $(s,a)$, $Z_t^A(s,a)$ is a convex combination of the values $\{A_k(s,a):k\in\mathcal W_t\}$ (coefficients proportional to $\mu_k(s,a)$ on the support of $\mu_t$, and uniform off it). Hence
$$
\begin{aligned}
\left|A_t(s,a)-Z_t^A(s,a)\right|
&\le
\max_{k\in\mathcal W_t}
\left|A_t(s,a)-A_k(s,a)\right| \\
&\le
C_{\mathrm{loc}}t^{\zeta-1}.
\end{aligned}
$$
Taking the supremum over $(s,a)$ proves the pointwise bound.
For the average, multiply the pointwise bound by $2$ and use
$$
    \sum_{t=2}^T t^{\zeta-1}
    \le
    \int_1^T x^{\zeta-1}\,dx
    =
    \frac{T^\zeta-1}{\zeta}
    \le
    \frac{T^\zeta}{\zeta}.
$$
\end{proof}

\FloatBarrier
\subsection{Layer 2: direct regularized learned readout value transfer}
\label{app:layer2_coverage}

Layer~2 proves regularized approximate optimality directly for the learned policy $\pi_{\theta_t}$. The only transfer needed is from an $L_2(\mu_t)$ score error bound to a comparator state expectation. This is where the quadratic coverage coefficient enters.

\paragraph{Where the action sampling mismatch is paid.}
The replay Bayes target weights each action by how often the retained behavior policies took it, so the fitted score is accurate where replay puts probability mass and is unconstrained elsewhere. The analysis does not estimate this mismatch. It bounds it through the assumption that the coverage coefficient below is finite, which holds exactly when the replay occupancy $\mu_t$ reaches every action at every state the comparator visits. The burden of the mismatch therefore sits on this coefficient. It grows when the comparator prefers actions the replayed policies rarely take, and the guarantee weakens accordingly. Removing the assumption, for example through importance weighting on the stored labels or through a drift argument, is open.

For a comparator $\pi^\star$, define the quadratic replay to comparator coverage coefficient
$$
    C_t^{\mathrm{cov}}(\pi^\star)
    =
    \operatorname*{ess\,sup}_{s,a}
    \frac{d^{\pi^\star}(s)}{\mu_t(s,a)},
$$
with the convention that the ratio is $+\infty$ whenever $d^{\pi^\star}(s)>0$ and $\mu_t(s,a)=0$. Let
$$
    \mu_t^S(s)
    =
    \sum_a \mu_t(s,a)
$$
be the replay state marginal. In finite action spaces this coefficient is easy to interpret. If the replay conditional action distribution satisfies
$$
    \rho_t(a\mid s)
    =
    \frac{\mu_t(s,a)}{\mu_t^S(s)}
    \ge
    \eta
$$
whenever $\mu_t^S(s)>0$, then
$$
    C_t^{\mathrm{cov}}(\pi^\star)
    \le
    \frac1\eta
    \operatorname*{ess\,sup}_{s}
    \frac{d^{\pi^\star}(s)}{\mu_t^S(s)}.
$$
Thus action exploration still enters the theory through the coverage transfer, even though it no longer enters the replay reweighting lemma.

\begin{lemma}[Direct regularized optimality of the learned readout]
\label{lem:direct_regularized_learned_readout_clean}
For every comparator policy $\pi^\star$, every state $s$, and every $t\ge2$,
$$
\begin{aligned}
&\left\langle
    \pi^\star(\cdot\mid s)-\pi_{\theta_t}(\cdot\mid s),
    A_t(s,\cdot)
\right\rangle
+
\lambda
\left(
    \varphi(\pi_{\theta_t}(\cdot\mid s))
    -
    \varphi(\pi^\star(\cdot\mid s))
\right) \\
&\qquad\le
\frac{\beta}{t-1}\ln|\mathcal A|
+
2\left\|A_t(s,\cdot)-Z_t^A(s,\cdot)\right\|_\infty
+
2\left\|Z_{\theta_{t-1}}(s,\cdot)-Z_t^A(s,\cdot)\right\|_2.
\end{aligned}
$$
\end{lemma}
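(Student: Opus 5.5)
Fix $s$ and $t\ge2$, and write $p=\pi_{\theta_t}(\cdot\mid s)$, $q=\pi^\star(\cdot\mid s)$, $Z=Z_{\theta_{t-1}}(s,\cdot)$, $\alpha=\alpha_{t-1}=\lambda+\beta/(t-1)$. The plan is to use the exact variational characterization of the softmax readout. By construction, $p=\operatorname{softmax}(Z/\alpha)$ maximizes $x\mapsto\langle Z,x\rangle-\alpha\varphi(x)$ over the simplex. Comparing with $q$ gives
\[
\langle q-p,Z\rangle+\alpha\big(\varphi(p)-\varphi(q)\big)\le 0 .
\]
In fact the slack is $-\alpha D_{\mathrm{KL}}(q\|p)$, but the weaker inequality suffices.

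Next I split $\alpha$ into $\lambda$ plus $\beta/(t-1)$. This yields
\[
\langle q-p,Z\rangle+\lambda\big(\varphi(p)-\varphi(q)\big)\le \tfrac{\beta}{t-1}\big(\varphi(q)-\varphi(p)\big).
\]
The negative entropy satisfies $-\ln|\mathcal A|\le\varphi\le 0$ on the simplex, so $\varphi(q)-\varphi(p)\le\ln|\mathcal A|$. This produces the stabilizer term $\frac{\beta}{t-1}\ln|\mathcal A|$.

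It remains to replace $Z$ by $A_t(s,\cdot)$ in the linear term. I write
\[
A_t=Z+(A_t-Z_t^A)+(Z_t^A-Z)
\]
and bound each cross term by H\"older's inequality.

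For the drift piece, $|\langle q-p,A_t-Z_t^A\rangle|\le\|q-p\|_1\|A_t-Z_t^A\|_\infty\le 2\|A_t-Z_t^A\|_\infty$, since $\|q-p\|_1\le2$.

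For the fitting piece, Cauchy--Schwarz gives $|\langle q-p,Z_t^A-Z\rangle|\le\|q-p\|_2\|Z_t^A-Z\|_2$. Then I use $\|q-p\|_2\le\|q-p\|_1\le 2$; the tighter $\|q-p\|_2\le\sqrt2$ also holds but is not needed.

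Adding the three pieces gives exactly the stated inequality. One point to check is that the tie-breaking definition of $Z_t^A$ off the replay support does not matter here, because the bound is pointwise in $(s,a)$ and holds for any finite values. The $\ell_2$ norm is taken over actions at the fixed state, which is what Layer~3 will later integrate against $d^{\pi^\star}$ using the coverage coefficient.

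The main subtlety is the first step: making sure the sign convention (maximizing score plus entropy, with $\varphi$ the negative entropy) is applied correctly. It must be the total temperature $\alpha_{t-1}$, not just $\lambda$, that multiplies $\varphi$ in the optimality inequality, so that the $\beta$ part is peeled off with the favorable bound $\varphi(q)-\varphi(p)\le\ln|\mathcal A|$. Everything else is routine norm bookkeeping.
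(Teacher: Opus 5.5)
Your proof is correct and matches the paper's argument step for step: the maximality inequality for the softmax readout at the full temperature $\alpha_{t-1}$, peeling off the $\beta/(t-1)$ part using $\varphi\in[-\ln|\mathcal A|,0]$, and the three-term split through $Z_t^A$ bounded by H\"older and Cauchy--Schwarz. The paper bounds $\|q-p\|_2\le\|q\|_2+\|p\|_2\le 2$ rather than going through $\|q-p\|_1$, which makes no difference. One small correction: the integration against $d^{\pi^\star}$ via the coverage coefficient happens in the Layer~2 proposition, not in Layer~3.
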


\begin{proof}
Fix $s$ and $t\ge2$. By definition of the readout, with $\alpha_{t-1}=\lambda+\beta/(t-1)$,
$$
    \pi_{\theta_t}(\cdot\mid s)
    \in
    \arg\max_{p\in\Delta(\mathcal A)}
    \left\{
        \left\langle p,Z_{\theta_{t-1}}(s,\cdot)\right\rangle
        -
        \alpha_{t-1}\varphi(p)
    \right\}.
$$
Since this objective is concave, evaluating it at $\pi_{\theta_t}(\cdot\mid s)$ and at $\pi^\star(\cdot\mid s)$ and using maximality gives
$$
    \left\langle
        \pi^\star(\cdot\mid s)-\pi_{\theta_t}(\cdot\mid s),
        Z_{\theta_{t-1}}(s,\cdot)
    \right\rangle
    +
    \alpha_{t-1}
    \left(
        \varphi(\pi_{\theta_t}(\cdot\mid s))
        -
        \varphi(\pi^\star(\cdot\mid s))
    \right)
    \le 0.
$$
Writing $\alpha_{t-1}=\lambda+\beta/(t-1)$ and carrying the score from $Z_{\theta_{t-1}}$ to $A_t$ through the regression Bayes target $Z_t^A$ (all quantities evaluated at $s$),
$$
\begin{aligned}
&\left\langle
    \pi^\star-\pi_{\theta_t},A_t
\right\rangle
+
\lambda
\left(
    \varphi(\pi_{\theta_t})-\varphi(\pi^\star)
\right) \\
&=
\underbrace{
\left\langle
    \pi^\star-\pi_{\theta_t},Z_{\theta_{t-1}}
\right\rangle
+
\alpha_{t-1}
\left(
    \varphi(\pi_{\theta_t})-\varphi(\pi^\star)
\right)
}_{\le 0}
-
\frac{\beta}{t-1}
\left(
    \varphi(\pi_{\theta_t})-\varphi(\pi^\star)
\right) \\
&\qquad
+
\left\langle
    \pi^\star-\pi_{\theta_t},A_t-Z_t^A
\right\rangle
+
\left\langle
    \pi^\star-\pi_{\theta_t},Z_t^A-Z_{\theta_{t-1}}
\right\rangle.
\end{aligned}
$$
We bound the three remaining terms. Since $\varphi(p)=\sum_a p(a)\ln p(a)\in[-\ln|\mathcal A|,0]$,
$$
    -\frac{\beta}{t-1}
    \left(
        \varphi(\pi_{\theta_t})-\varphi(\pi^\star)
    \right)
    \le
    \frac{\beta}{t-1}\ln|\mathcal A|.
$$
By H\"older's inequality with $\|\pi^\star-\pi_{\theta_t}\|_1\le2$,
$$
    \left\langle
        \pi^\star-\pi_{\theta_t},A_t-Z_t^A
    \right\rangle
    \le
    2\left\|A_t(s,\cdot)-Z_t^A(s,\cdot)\right\|_\infty .
$$
By the Cauchy--Schwarz inequality with $\|\pi^\star-\pi_{\theta_t}\|_2\le\|\pi^\star\|_2+\|\pi_{\theta_t}\|_2\le2$ (each distribution has $\ell_2$ norm at most $1$),
$$
    \left\langle
        \pi^\star-\pi_{\theta_t},Z_t^A-Z_{\theta_{t-1}}
    \right\rangle
    \le
    2\left\|Z_{\theta_{t-1}}(s,\cdot)-Z_t^A(s,\cdot)\right\|_2 .
$$
Summing the three bounds proves the claim.
\end{proof}

\begin{proposition}[Direct regularized windowed value transfer]
\label{prop:windowed_learned_readout_value_transfer_clean}
For every comparator policy $\pi^\star$ and every $T\ge2$,
$$
\begin{aligned}
&\frac1{T-1}\sum_{t=2}^T
\left[
\left(J(\pi^\star)-J(\pi_{\theta_t})\right)
+
\frac{\lambda}{1-\gamma}
\mathbb E_{s\sim d^{\pi^\star}}
\left(
    \varphi(\pi_{\theta_t}(\cdot\mid s))
    -
    \varphi(\pi^\star(\cdot\mid s))
\right)
\right] \\
&\quad\le
\frac1{1-\gamma}
\Bigg[
\frac{\beta\ln|\mathcal A|}{T-1}\sum_{t=2}^T\frac1{t-1}
+
\frac2{T-1}\sum_{t=2}^T
\left\|A_t-Z_t^A\right\|_\infty \\
&\qquad+
\frac{2}{T-1}\sum_{t=2}^T
\sqrt{C_t^{\mathrm{cov}}(\pi^\star)}
\left\|
    Z_{\theta_{t-1}}-Z_t^A
\right\|_{2,\mu_t}
\Bigg].
\end{aligned}
$$
Consequently, if $C_t^{\mathrm{cov}}(\pi^\star)\le C_{\mathrm{cov}}(\pi^\star)$ for all $t=2,\ldots,T$, the last sum is bounded by $2\sqrt{C_{\mathrm{cov}}(\pi^\star)}\cdot\frac1{T-1}\sum_{t=2}^T\|Z_{\theta_{t-1}}-Z_t^A\|_{2,\mu_t}$.
\end{proposition}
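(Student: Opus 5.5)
The plan is to combine the performance difference lemma (Lemma~\ref{lem:performance_difference_lemma}) with the pointwise regularized optimality of the readout (Lemma~\ref{lem:direct_regularized_learned_readout_clean}). Then we transfer the statewise $\ell_2$ score error, taken in expectation under $d^{\pi^\star}$, to the replay norm $\|\cdot\|_{2,\mu_t}$ through the coverage coefficient.

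\textbf{Step 1: reduce each summand to a state expectation.} Fix $t\in\{2,\dots,T\}$. By Lemma~\ref{lem:performance_difference_lemma}, $J(\pi^\star)-J(\pi_{\theta_t})=\frac1{1-\gamma}\mathbb E_{s\sim d^{\pi^\star}}\langle\pi^\star-\pi_{\theta_t},A_t\rangle(s)$. Adding the entropy correction, which already carries the factor $\frac{\lambda}{1-\gamma}$, the $t$-th bracket equals $\frac1{1-\gamma}$ times the $d^{\pi^\star}$-expectation of the left-hand side of Lemma~\ref{lem:direct_regularized_learned_readout_clean}. Applying that lemma pointwise in $s$ and taking expectations gives three terms:
\begin{itemize}
\item $\frac{\beta}{t-1}\ln|\mathcal A|$, which does not depend on $s$;
\item $2\,\mathbb E_{d^{\pi^\star}}\|A_t(s,\cdot)-Z_t^A(s,\cdot)\|_\infty$, which is at most $2\|A_t-Z_t^A\|_\infty$;
\item $2\,\mathbb E_{s\sim d^{\pi^\star}}\|Z_{\theta_{t-1}}(s,\cdot)-Z_t^A(s,\cdot)\|_2$.
\end{itemize}
Here we use that $d^{\pi^\star}$ is a probability distribution.

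\textbf{Step 2: coverage transfer, the only nontrivial step.} Write $e(s,a)=Z_{\theta_{t-1}}(s,a)-Z_t^A(s,a)$. Jensen (Cauchy--Schwarz) gives $\mathbb E_{d^{\pi^\star}}\|e(s,\cdot)\|_2\le(\mathbb E_{d^{\pi^\star}}\sum_a e(s,a)^2)^{1/2}$. Then
\[
\sum_s d^{\pi^\star}(s)\sum_a e(s,a)^2=\sum_{s,a}\frac{d^{\pi^\star}(s)}{\mu_t(s,a)}\mu_t(s,a)e(s,a)^2\le C_t^{\mathrm{cov}}(\pi^\star)\|e\|_{2,\mu_t}^2 .
\]
The essential supremum applies on the support of $d^{\pi^\star}\otimes$ counting measure. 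If $C_t^{\mathrm{cov}}(\pi^\star)=\infty$ the bound is trivial. Otherwise $\mu_t(s,a)>0$ wherever $d^{\pi^\star}(s)>0$, so the off-support tie-breaker branch of $Z_t^A$ never enters. This is the point to check carefully: the pointwise sum over all actions must be dominated by the $\mu_t$-weighted norm, which requires universal action coverage, and that is exactly how $C_t^{\mathrm{cov}}$ is defined. Taking square roots gives $\sqrt{C_t^{\mathrm{cov}}}\,\|e\|_{2,\mu_t}$.

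\textbf{Step 3: average and conclude.} Average the resulting per-$t$ inequality over $t=2,\dots,T$, dividing by $T-1$. This yields the displayed bound, with $\sum_t\frac{1}{t-1}$ appearing in the first term. For the consequence, replace each $C_t^{\mathrm{cov}}$ by the uniform bound $C_{\mathrm{cov}}(\pi^\star)$ and pull $2\sqrt{C_{\mathrm{cov}}}$ out of the sum. I expect no real obstacle beyond the measure-theoretic bookkeeping in Step~2; continuous state spaces are handled identically, replacing sums over $s$ by integrals.
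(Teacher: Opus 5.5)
Your proposal is correct and follows the paper's own proof: the performance difference lemma, then Lemma~\ref{lem:direct_regularized_learned_readout_clean} applied pointwise in $s$ and integrated under $d^{\pi^\star}$, then Jensen's inequality and the change of measure through $C_t^{\mathrm{cov}}(\pi^\star)$, and finally averaging over $t=2,\ldots,T$. Your explicit treatment of the support and of the case $C_t^{\mathrm{cov}}(\pi^\star)=\infty$ is somewhat more careful than the paper's one-line appeal to the coverage discussion.
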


\begin{proof}
By the performance difference lemma,
$$
    J(\pi^\star)-J(\pi_{\theta_t})
    =
    \frac1{1-\gamma}
    \mathbb E_{s\sim d^{\pi^\star}}
    \left[
        \left\langle
            \pi^\star(\cdot\mid s)-\pi_{\theta_t}(\cdot\mid s),
            A_t(s,\cdot)
        \right\rangle
    \right],
$$
where $A_t=A^{\pi_{\theta_t}}$. Apply Lemma~\ref{lem:direct_regularized_learned_readout_clean} pointwise in $s$ and take $\mathbb E_{s\sim d^{\pi^\star}}$:
$$
\begin{aligned}
&J(\pi^\star)-J(\pi_{\theta_t})
+
\frac{\lambda}{1-\gamma}
\mathbb E_{s\sim d^{\pi^\star}}
\left(
    \varphi(\pi_{\theta_t})-\varphi(\pi^\star)
\right) \\
&\quad\le
\frac1{1-\gamma}
\left[
\frac{\beta}{t-1}\ln|\mathcal A|
+
2\left\|A_t-Z_t^A\right\|_\infty
+
2\,\mathbb E_{s\sim d^{\pi^\star}}
\left\|
    Z_{\theta_{t-1}}(s,\cdot)-Z_t^A(s,\cdot)
\right\|_2
\right].
\end{aligned}
$$
For the last term, by Jensen's inequality and the change of measure bound (as in the coverage discussion above),
$$
    \mathbb E_{s\sim d^{\pi^\star}}
    \left\|
        Z_{\theta_{t-1}}(s,\cdot)-Z_t^A(s,\cdot)
    \right\|_2
    \le
    \left(
    \sum_{s,a}d^{\pi^\star}(s)
    \left(
        Z_{\theta_{t-1}}(s,a)-Z_t^A(s,a)
    \right)^2
    \right)^{1/2}
    \le
    \sqrt{C_t^{\mathrm{cov}}(\pi^\star)}
    \left\|
        Z_{\theta_{t-1}}-Z_t^A
    \right\|_{2,\mu_t}.
$$
Averaging over $t=2,\ldots,T$ gives the display. The final assertion uses $C_t^{\mathrm{cov}}(\pi^\star)\le C_{\mathrm{cov}}(\pi^\star)$.
\end{proof}

\FloatBarrier
\subsection{Layer 3: replay regression error decomposition}
\label{app:theory_setup}

Layer 2 reduces the learned policy value bound to the score error
$$
    \left\|Z_{\theta_{t-1}}-Z_t^A\right\|_{2,\mu_t},
$$
the residual of the learned score against the regression Bayes target $Z_t^A$.
The norm is taken under the window replay marginal $\mu_t$ defined above.

We recall the population replay target with true advantage labels,
$$
    Z_t^A(s,a)
    =
    \frac{
        \sum_{k\in\mathcal W_t}\mu_k(s,a)A_k(s,a)
    }{
        \sum_{k\in\mathcal W_t}\mu_k(s,a)
    }
$$
on the support of $\mu_t$. Its critic label analogue is
$$
    \widehat Z_t(s,a)
    =
    \frac{
        \sum_{k\in\mathcal W_t}\mu_k(s,a)\widehat A_k(s,a)
    }{
        \sum_{k\in\mathcal W_t}\mu_k(s,a)
    }.
$$
Let
$$
    \widetilde Z_t
$$
be the finite buffer empirical target represented by the actual replay sample. In the infinite buffer limit,
$$
    \widetilde Z_t=\widehat Z_t.
$$
The score error chain is
$$
Z_{\theta_{t-1}}
\quad\longrightarrow\quad
\widetilde Z_t
\quad\longrightarrow\quad
\widehat Z_t
\quad\longrightarrow\quad
Z_t^A.
$$

\begin{assumption}[Primitive windowed replay error bounds]
\label{ass:replay_error_decomp}
For every $T\ge2$, the following bounds hold.
\begin{enumerate}
    \item Fitting error:
    $$
    \frac1{T-1}\sum_{t=2}^T
    \left\|
        Z_{\theta_{t-1}}-
        \widetilde Z_t
    \right\|_{2,\mu_t}
    \le
    \bar\epsilon_{\mathrm{fit}}.
    $$

    That is, the fitting error is not growing on average.

    \item Finite replay concentration:
    $$
    \left\|
        \widetilde Z_t
        -
        \widehat Z_t
    \right\|_{2,\mu_t}
    \le
    C_{\mathrm{var}}t^{-\zeta/2}
    \qquad\text{for every }t\ge2.
    $$

    This rate is the standard $N^{-1/2}$ concentration for the empirical mean of $N_t\asymp t^\zeta$ bounded labels around its conditional expectation, with $C_{\mathrm{var}}$ depending on the label range $G_A$, the regression function class capacity, and a logarithmic factor from a union bound over $t$.

    \item Critic label bias:
    $$
    \frac1{T-1}\sum_{t=2}^T
    \left\|
        \widehat Z_t
        -
        Z_t^A
    \right\|_{2,\mu_t}
    \le
    \epsilon_0.
    $$

    That is, the critic bias is not growing.

\end{enumerate}
\end{assumption}

\begin{lemma}[Averaged windowed score error]
\label{lem:averaged_windowed_score_error_clean}
Under Assumption~\ref{ass:replay_error_decomp}, for every $T\ge2$,
$$
\begin{aligned}
&\frac1{T-1}\sum_{t=2}^T
\left\|
Z_{\theta_{t-1}}-Z_t^A
\right\|_{2,\mu_t} \\
&\qquad\le
\bar\epsilon_{\mathrm{fit}}
+
\epsilon_0
+
\frac{C_{\mathrm{var}}}{1-\zeta/2}
\frac{T^{1-\zeta/2}}{T-1}.
\end{aligned}
$$
\end{lemma}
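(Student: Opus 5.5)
The plan is a triangle inequality along the score error chain $Z_{\theta_{t-1}}\to\widetilde Z_t\to\widehat Z_t\to Z_t^A$, followed by averaging over $t$ and applying the three items of Assumption~\ref{ass:replay_error_decomp} term by term. For each fixed $t\ge2$, $\|\cdot\|_{2,\mu_t}$ is a seminorm on functions of $(s,a)$, so
$$
\left\|Z_{\theta_{t-1}}-Z_t^A\right\|_{2,\mu_t}
\le
\left\|Z_{\theta_{t-1}}-\widetilde Z_t\right\|_{2,\mu_t}
+\left\|\widetilde Z_t-\widehat Z_t\right\|_{2,\mu_t}
+\left\|\widehat Z_t-Z_t^A\right\|_{2,\mu_t}.
$$
Only the support of $\mu_t$ enters, so the off-support tie-breaker in the definition of $Z_t^A$ plays no role.

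Next we average this inequality over $t=2,\ldots,T$ with weight $1/(T-1)$. The first average is at most $\bar\epsilon_{\mathrm{fit}}$ by item 1, and the third is at most $\epsilon_0$ by item 3; both are stated directly as averaged bounds, so no further work is needed. For the middle term we use the pointwise bound of item 2, which gives $\frac{C_{\mathrm{var}}}{T-1}\sum_{t=2}^T t^{-\zeta/2}$.

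It remains to compare this sum with an integral. Since $x\mapsto x^{-\zeta/2}$ is decreasing on $[1,\infty)$, each term satisfies $t^{-\zeta/2}\le\int_{t-1}^{t}x^{-\zeta/2}\,dx$. Summing, $\sum_{t=2}^T t^{-\zeta/2}\le\int_1^T x^{-\zeta/2}\,dx=\frac{T^{1-\zeta/2}-1}{1-\zeta/2}\le\frac{T^{1-\zeta/2}}{1-\zeta/2}$, using $1-\zeta/2>0$ because $\zeta\in(0,1)$. This is the same device used at the end of Lemma~\ref{lem:recent_window_tracks_current_advantage_clean}. Combining the three pieces gives the stated bound.

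There is no real obstacle here. The only point needing care is that the triangle inequality is applied with the same measure $\mu_t$ for all three pieces at a given $t$, which is how each item of Assumption~\ref{ass:replay_error_decomp} is stated. Beyond that, the only step that is not immediate is the integral comparison.
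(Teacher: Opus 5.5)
Your proposal is correct and matches the paper's proof: the triangle inequality in $\|\cdot\|_{2,\mu_t}$ along $Z_{\theta_{t-1}}\to\widetilde Z_t\to\widehat Z_t\to Z_t^A$, averaging over $t$, the averaged fitting and critic-bias bounds, and the sum $\sum_{t=2}^T t^{-\zeta/2}$ compared with an integral. Your explicit comparison $\sum_{t=2}^T t^{-\zeta/2}\le\int_1^T x^{-\zeta/2}\,dx$ fills in the one step the paper states without justification.
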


\begin{proof}
For each $t$, apply the triangle inequality along the chain of replay targets ending at $Z_t^A$:
$$
\begin{aligned}
&\left\|
Z_{\theta_{t-1}}-Z_t^A
\right\|_{2,\mu_t} \\
&\quad\le
\left\|
Z_{\theta_{t-1}}-
\widetilde Z_t
\right\|_{2,\mu_t}
+
\left\|
\widetilde Z_t
-
\widehat Z_t
\right\|_{2,\mu_t}
+
\left\|
\widehat Z_t
-
Z_t^A
\right\|_{2,\mu_t}.
\end{aligned}
$$
Average over $t=2,\ldots,T$. The fitting and critic terms are controlled by the averaged bounds in Assumption~\ref{ass:replay_error_decomp}. For finite replay concentration,
$$
\begin{aligned}
\frac1{T-1}\sum_{t=2}^T C_{\mathrm{var}}t^{-\zeta/2}
&\le
\frac{C_{\mathrm{var}}}{T-1}
\sum_{t=2}^T t^{-\zeta/2} \\
&\le
\frac{C_{\mathrm{var}}}{1-\zeta/2}
\frac{T^{1-\zeta/2}}{T-1}.
\end{aligned}
$$
Combining these estimates proves the claim.
\end{proof}

\FloatBarrier
\subsection{Final windowed value bound}

This subsection contains the proofs of Theorem~\ref{thm:fitted_replay_value_bound} and Corollaries~\ref{cor:unregularized_rate} and~\ref{cor:annealed_rate} from the main text.

Define the learned policy regularized gap
$$
    \mathcal G_T^{\mathrm{reg}}(\pi^\star)
    :=
    \frac1{T-1}\sum_{t=2}^T
    \left[
    \left(J(\pi^\star)-J(\pi_{\theta_t})\right)
    +
    \frac{\lambda}{1-\gamma}
    \mathbb E_{s\sim d^{\pi^\star}}
    \left(
        \varphi(\pi_{\theta_t}(\cdot\mid s))
        -
        \varphi(\pi^\star(\cdot\mid s))
    \right)
    \right].
$$

\begin{proof}[Proof of Theorem~\ref{thm:fitted_replay_value_bound}]
Start from Proposition~\ref{prop:windowed_learned_readout_value_transfer_clean}, whose left hand side is $\mathcal G_T^{\mathrm{reg}}(\pi^\star)$. Bound the local drift term $\|A_t-Z_t^A\|_\infty$ using Lemma~\ref{lem:recent_window_tracks_current_advantage_clean} and the score error term using Lemma~\ref{lem:averaged_windowed_score_error_clean}. Under the reward bound $|r(s,a)|\le R_{\max}$,
$$
    L_A=\frac{(1+\gamma)R_{\max}C_\pi}{(1-\gamma)^2},
    \qquad
    C_{\mathrm{loc}}
    =
    \frac{(1+\gamma)R_{\max}C_\pi}{(1-\gamma)^2(1-2^{\zeta-1})}.
$$
The stabilizer contribution satisfies
$$
    \frac{\beta\ln|\mathcal A|}{T-1}
    \sum_{t=2}^T\frac1{t-1}
    \le
    \frac{\beta\ln|\mathcal A|(1+\ln T)}{T-1}.
$$
Substituting these estimates gives the bound in Theorem~\ref{thm:fitted_replay_value_bound}. Equivalently, after expanding $C_{\mathrm{loc}}$,
$$
\begin{aligned}
\mathcal G_T^{\mathrm{reg}}(\pi^\star)
&\le
\frac1{1-\gamma}
\Bigg[
\frac{\beta\ln|\mathcal A|(1+\ln T)}{T-1}
+
\frac{2(1+\gamma)R_{\max}C_\pi}{\zeta(1-\gamma)^2(1-2^{\zeta-1})}
\frac{T^\zeta}{T-1} \\
&\qquad+
2\sqrt{C_{\mathrm{cov}}(\pi^\star)}
\left(
\bar\epsilon_{\mathrm{fit}}
+
\epsilon_0
+
\frac{C_{\mathrm{var}}}{1-\zeta/2}
\frac{T^{1-\zeta/2}}{T-1}
\right)
\Bigg].
\end{aligned}
$$
For the balanced choice $\zeta=2/3$,
$$
    \frac{2C_{\mathrm{loc}}}{\zeta}
    =
    \frac{3(1+\gamma)R_{\max}C_\pi}
    {(1-\gamma)^2(1-2^{-1/3})},
    \qquad
    \frac{C_{\mathrm{var}}}{1-\zeta/2}
    =
    \frac32 C_{\mathrm{var}},
$$
so
$$
\begin{aligned}
\mathcal G_T^{\mathrm{reg}}(\pi^\star)
&\le
\frac1{1-\gamma}
\Bigg[
\frac{\beta\ln|\mathcal A|(1+\ln T)}{T-1}
+
\frac{3(1+\gamma)R_{\max}C_\pi}{(1-\gamma)^2(1-2^{-1/3})}
\frac{T^{2/3}}{T-1} \\
&\qquad+
2\sqrt{C_{\mathrm{cov}}(\pi^\star)}
\left(
\bar\epsilon_{\mathrm{fit}}
+
\epsilon_0
+
\frac32 C_{\mathrm{var}}
\frac{T^{2/3}}{T-1}
\right)
\Bigg].
\end{aligned}
$$
Thus, for fixed $\lambda>0$, bounded coverage, and vanishing fitting and critic bias floors, $\mathcal G_T^{\mathrm{reg}}(\pi^\star)=O(T^{-1/3})$.
\end{proof}

\begin{remark}[Reading the bound]

The regularized gap is the natural goal for an entropy-regularized method. It is the quantity the RDA iterates optimize, the entropy floor and the stabilizer control it directly, and it converts to the ordinary value gap at the cost of one explicit entropy bias term (Corollary~\ref{cor:unregularized_rate}).

The bound has three groups of terms: the dual averaging stabilizer (optimization) term, the local replay window drift, and the replay regression error, namely supervised fit $\bar\epsilon_{\mathrm{fit}}$, critic label bias $\epsilon_0$, and finite buffer variance $C_{\mathrm{var}}$, transferred to the comparator through coverage. The proof is given in Appendix~\ref{app:proofs}.

The replay terms expose the window length tradeoff. With $W_t\asymp t^\zeta$, finite buffer variance scales as $T^{-\zeta/2}$ and replay drift as $T^{\zeta-1}$. Balancing the two gives $\zeta=2/3$ and a $T^{-1/3}$ rate. Smaller $\zeta$ uses fresher but noisier data, while larger $\zeta$ averages more data but increases stale replay drift.
\end{remark}

\begin{proof}[Proof of Corollary~\ref{cor:unregularized_rate}]
Starting from Theorem~\ref{thm:fitted_replay_value_bound}, add
$$
    \frac{\lambda}{1-\gamma}
    \mathbb E_{s\sim d^{\pi^\star}}
    \left[
        \varphi(\pi^\star(\cdot\mid s))
        -
        \varphi(\pi_{\theta_t}(\cdot\mid s))
    \right]
$$
to the regularized gap. Since $\varphi(p)\in[-\ln|\mathcal A|,0]$,
$$
    \varphi(\pi^\star(\cdot\mid s))
    -
    \varphi(\pi_{\theta_t}(\cdot\mid s))
    \le
    \ln|\mathcal A|.
$$
This gives the exact bound stated in Corollary~\ref{cor:unregularized_rate}. For $\zeta=2/3$ and fixed $\lambda>0$, collecting the nonvanishing fitting and critic terms separately from the stabilizer, drift, and variance terms gives
$$
    \frac1{T-1}\sum_{t=2}^T
    \left(J(\pi^\star)-J(\pi_{\theta_t})\right)
    \le
    \frac{\lambda\ln|\mathcal A|}{1-\gamma}
    +
    \frac{2\sqrt{C_{\mathrm{cov}}(\pi^\star)}}{1-\gamma}
    (\bar\epsilon_{\mathrm{fit}}+\epsilon_0)
    +
    O\!\left(T^{-1/3}\right).
$$
The $O(T^{-1/3})$ coefficient is free of any explicit $1/\lambda$ factor. The primitive constants $C_\pi$, $C_{\mathrm{var}}$, $C_{\mathrm{cov}}(\pi^\star)$, and constants hidden in $\bar\epsilon_{\mathrm{fit}}$ and $\epsilon_0$ may still depend on $\lambda$ through the learned policy path, replay coverage, and regression difficulty.
\end{proof}

Annealed-$\lambda$ remark. Because $\lambda$ now appears \emph{only} in the entropy bias term (the score error coefficient is $\lambda$-free), annealing $\lambda$ no longer inflates the replay terms. This is the improvement over the old softmax stability route, where the $G_A/\lambda$ factor forced $\lambda_T\asymp T^{-1/6}$ and an unregularized $T^{-1/6}$ rate.

\begin{corollary}[Annealed unregularized rate]
\label{cor:annealed_rate}
In the setting of Corollary~\ref{cor:unregularized_rate}, suppose the fitting and critic floors vanish, $\bar\epsilon_{\mathrm{fit}}=\epsilon_0=0$, and $C_\pi,C_{\mathrm{var}},C_{\mathrm{cov}}(\pi^\star)$ remain uniformly bounded as $\lambda\to0$. Then $\zeta=2/3$ and $\lambda_T\asymp T^{-1/3}$ give $\frac1{T-1}\sum_{t=2}^T(J(\pi^\star)-J(\pi_{\theta_t}))=O(T^{-1/3})$.

\end{corollary}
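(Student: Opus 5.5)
We obtain the annealed rate by rerunning Corollary~\ref{cor:unregularized_rate} with a horizon-dependent entropy floor $\lambda=\lambda_T$ held fixed over rounds $2,\ldots,T$, and then tracking which terms of $\mathcal B_T(\pi^\star)$ depend on $\lambda$. Since Theorem~\ref{thm:fitted_replay_value_bound} holds for every fixed $\lambda>0$, we apply it at $\lambda=\lambda_T$ with $\zeta=2/3$. The unregularized bound of Corollary~\ref{cor:unregularized_rate} becomes
\[
\frac1{T-1}\sum_{t=2}^T\big(J(\pi^\star)-J(\pi_{\theta_t})\big)\le \frac{\lambda_T\ln|\mathcal A|}{1-\gamma}+\mathcal B_T(\pi^\star).
\]
The first step is to check that $\lambda$ enters $\mathcal B_T(\pi^\star)$ only through the primitive constants. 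We read this off the proof of Lemma~\ref{lem:direct_regularized_learned_readout_clean}. There $\lambda$ is absorbed into $\alpha_{t-1}$ inside the nonpositive optimality term. The stabilizer, drift and score-error terms carry no $1/\lambda$ factor. They depend on $\lambda$ only through $C_\pi$ (via $C_{\mathrm{loc}}$), $C_{\mathrm{var}}$ and $C_{\mathrm{cov}}(\pi^\star)$, which are uniformly bounded by hypothesis.

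The second step substitutes the hypotheses. With $\bar\epsilon_{\mathrm{fit}}=\epsilon_0=0$ and $\zeta=2/3$, the remaining terms of $\mathcal B_T$ are as follows. The stabilizer term is $\beta\ln|\mathcal A|(1+\ln T)/(T-1)=O(T^{-1}\ln T)$. The drift term is $3C_{\mathrm{loc}}T^{2/3}/(T-1)=O(T^{-1/3})$. The variance term is $3\sqrt{C_{\mathrm{cov}}}\,C_{\mathrm{var}}T^{2/3}/(T-1)=O(T^{-1/3})$. Each constant is bounded uniformly in $\lambda_T$, so $\mathcal B_T(\pi^\star)=O(T^{-1/3})$ uniformly along the schedule. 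Choosing $\lambda_T\le cT^{-1/3}$ makes the entropy bias $O(T^{-1/3})$ as well, and summing gives the claim. The matching lower order $\lambda_T\asymp T^{-1/3}$ is the largest floor that does not worsen the rate; any smaller floor also works under these hypotheses.

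The main obstacle is interpretive rather than computational: making sure the assumptions are invoked at the right $\lambda$. Assumption~\ref{ass:learned_policy_path_stability_clean} and Assumption~\ref{ass:replay_error_decomp} are stated for a fixed algorithm run, and the run itself now depends on $T$. I would handle this by treating each horizon $T$ as its own run with constant floor $\lambda_T$, so Theorem~\ref{thm:fitted_replay_value_bound} applies verbatim. The uniform-boundedness hypothesis is exactly what makes the hidden constants independent of $T$.

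If instead $\lambda_t$ varied within a run, the regularized-gap left side would carry round-dependent weights. I would then need to replace $\lambda$ by $\lambda_t$ in Lemma~\ref{lem:direct_regularized_learned_readout_clean}, which is still valid pointwise in $t$, and bound the bias by $\frac{\ln|\mathcal A|}{(1-\gamma)(T-1)}\sum_t\lambda_t=O(T^{-1/3})$. I would present this as a remark instead of the main route.
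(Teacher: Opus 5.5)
Your proposal is correct and follows the paper's proof. You apply Corollary~\ref{cor:unregularized_rate} at $\lambda=\lambda_T$ and note that $\mathcal B_T(\pi^\star)$ has no explicit $\lambda$-dependence, so with $\bar\epsilon_{\mathrm{fit}}=\epsilon_0=0$, $\zeta=2/3$, and uniformly bounded $C_\pi$, $C_{\mathrm{var}}$, $C_{\mathrm{cov}}(\pi^\star)$ it is $O(T^{-1/3})$; choosing $\lambda_T\asymp T^{-1/3}$ then matches the entropy bias, and your one-run-per-horizon reading with a constant floor $\lambda_T$ makes explicit what the paper's phrase ``along the schedule $\lambda=\lambda_T\to0$'' leaves implicit.
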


\begin{proof}[Proof of Corollary~\ref{cor:annealed_rate}]
Under the hypotheses of Corollary~\ref{cor:annealed_rate}, set $\bar\epsilon_{\mathrm{fit}}=\epsilon_0=0$. In Corollary~\ref{cor:unregularized_rate}, the only term with explicit dependence on $\lambda$ is then the entropy bias
$$
    \frac{\lambda\ln|\mathcal A|}{1-\gamma}.
$$
The remaining terms are $O(T^{-1/3})$ and contain no explicit $1/\lambda$ factor. Under the stated uniform boundedness assumptions on $C_\pi$, $C_{\mathrm{var}}$, $C_{\mathrm{cov}}(\pi^\star)$, and the constants in the replay error bounds, their coefficients remain bounded along the schedule $\lambda=\lambda_T\to0$. Taking $\lambda_T\asymp T^{-1/3}$ makes the entropy bias $O(T^{-1/3})$, which proves
$$
    \frac1{T-1}\sum_{t=2}^T
    \left(J(\pi^\star)-J(\pi_{\theta_t})\right)
    =
    O\!\left(T^{-1/3}\right).
$$
\end{proof}

Because the score error coefficient $2\sqrt{C_{\mathrm{cov}}(\pi^\star)}$ is free of any explicit $1/\lambda$ factor, annealing the entropy weight leaves the replay terms at $T^{-1/3}$, so the optimization and replay contributions match. The condition $\lambda>0$ controls the optimization term. The unregularized ablation $\lambda=0$ (Appendix~\ref{app:additional_ablations}) is an empirical stress test. The hypothesis of Corollary~\ref{cor:annealed_rate} that $C_\pi$, $C_{\mathrm{var}}$, and $C_{\mathrm{cov}}(\pi^\star)$ stay bounded as $\lambda\to0$ is a genuine restriction, since a vanishing entropy floor sharpens the readout and can inflate both the path constant and the coverage coefficient. Corollary~\ref{cor:unregularized_rate} at fixed $\lambda$ is therefore the primary statement, and the annealed rate is conditional on that hypothesis.

\FloatBarrier
\subsection{Density weighted surrogate and the action sampling mismatch}
\label{app:density_weighted_surrogate}

This subsection connects the direct proof to the main text product space RDA surrogate. The main proof bridges through the replay regression Bayes target $Z_t^A$, which is exactly the object fit by squared loss. The main text product space surrogate instead induces a state density weighted window score $\bar Z_t^W$.

Define the average state occupancy in the current window by
$$
    \bar d_t^W(s)
    =
    \frac1{W_t}
    \sum_{k\in\mathcal W_t}d^{\pi_{\theta_k}}(s).
$$
The density weighted windowed score is
$$
    \bar Z_t^W(s,a)
    =
    \begin{cases}
    \displaystyle
    \frac{
        \sum_{k\in\mathcal W_t}d^{\pi_{\theta_k}}(s)A_k(s,a)
    }{
        \sum_{k\in\mathcal W_t}d^{\pi_{\theta_k}}(s)
    },
    &\displaystyle \sum_{k\in\mathcal W_t}d^{\pi_{\theta_k}}(s)>0,\\[2.25ex]
    \displaystyle
    \frac1{W_t}
    \sum_{k\in\mathcal W_t}A_k(s,a),
    &\displaystyle \sum_{k\in\mathcal W_t}d^{\pi_{\theta_k}}(s)=0.
    \end{cases}
$$
The second line is only a tie breaker on states with zero window state mass. It does not change the density weighted surrogate, but it makes $\bar Z_t^W$ pointwise defined everywhere.

The exact windowed product space surrogate corresponding to the main text is
$$
    \arg\max_{\pi\in\Pi}
    \left\{
        \frac1{W_t}
        \sum_{k\in\mathcal W_t}
        \langle A_k,\pi\rangle_{d^{\pi_{\theta_k}}}
        -
        \alpha_{t-1}\varphi_{\bar d_t^W}(\pi)
    \right\},
$$
where $\varphi(p)=\sum_a p(a)\log p(a)=-H(p)$. Expanding the state expectation gives
$$
\begin{aligned}
&\frac1{W_t}
\sum_{k\in\mathcal W_t}
\langle A_k,\pi\rangle_{d^{\pi_{\theta_k}}}
-
\alpha_{t-1}\varphi_{\bar d_t^W}(\pi)
\\
&=
\frac1{W_t}
\sum_{k\in\mathcal W_t}
\sum_s
 d^{\pi_{\theta_k}}(s)
\left\langle
    \pi(\cdot\mid s),A_k(s,\cdot)
\right\rangle
-
\alpha_{t-1}
\sum_s
\bar d_t^W(s)
\varphi(\pi(\cdot\mid s))
\\
&=
\sum_s
\bar d_t^W(s)
\left[
    \left\langle
        \pi(\cdot\mid s),
        \bar Z_t^W(s,\cdot)
    \right\rangle
    -
    \alpha_{t-1}
    \varphi(\pi(\cdot\mid s))
\right]
\\
&=
\left\langle
    \bar Z_t^W,\pi
\right\rangle_{\bar d_t^W}
-
\alpha_{t-1}
\varphi_{\bar d_t^W}(\pi),
\end{aligned}
$$
where the equality uses the density weighted branch on states with $\bar d_t^W(s)>0$, and states with $\bar d_t^W(s)=0$ carry zero mass in the display. Thus the state density factor multiplies both the linear advantage term and the entropy term, and the statewise optimizer on the support of $\bar d_t^W$ is $\operatorname{softmax}(\bar Z_t^W(s,\cdot)/\alpha_{t-1})$.

The gap between the replay Bayes target $Z_t^A$ and the density weighted surrogate $\bar Z_t^W$ is the action sampling mismatch. The next corollary bounds it and shows that a value guarantee phrased against the density weighted surrogate holds at the same rate.

\begin{corollary}[Action sampling mismatch and the density weighted surrogate]
\label{cor:density_weighted_surrogate}
Under the assumptions of Theorem~\ref{thm:fitted_replay_value_bound}, for every $t\ge2$,
$$
    \left\|Z_t^A-\bar Z_t^W\right\|_{2,\mu_t}
    \le
    2C_{\mathrm{loc}}t^{\zeta-1}.
$$
Consequently, replacing the replay Bayes target $Z_t^A$ by the density weighted surrogate $\bar Z_t^W$ in the score residual adds at most $\frac{2\sqrt{C_{\mathrm{cov}}(\pi^\star)}}{1-\gamma}\cdot\frac{2C_{\mathrm{loc}}}{\zeta}\frac{T^\zeta}{T-1}=O(T^{\zeta-1})$ to the bound of Theorem~\ref{thm:fitted_replay_value_bound}, the same order as the local drift term, and does not change the $T^{-1/3}$ rate.
\end{corollary}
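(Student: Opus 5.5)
Both $Z_t^A(s,a)$ and $\bar Z_t^W(s,a)$ are convex combinations of the same finite family $\{A_k(s,a):k\in\mathcal W_t\}$. The plan is to bound each against the current advantage $A_t$ and then use the triangle inequality. On the support of $\mu_t$ the weights of $Z_t^A$ are proportional to $\mu_k(s,a)=d^{\pi_{\theta_k}}(s)\pi_{\theta_k}(a\mid s)$. Where $\sum_k d^{\pi_{\theta_k}}(s)>0$, the weights of $\bar Z_t^W$ are proportional to $d^{\pi_{\theta_k}}(s)$. Off the respective supports, the tie-breaker branches use uniform weights. In every case the coefficients are nonnegative and sum to one, so the two targets differ only in how they weight the window, and that difference is what the action sampling mismatch measures.

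First, I would repeat the argument of Lemma~\ref{lem:recent_window_tracks_current_advantage_clean} for $\bar Z_t^W$. That proof used only two facts: the uniform bound $\|A_t-A_k\|_\infty\le C_{\mathrm{loc}}t^{\zeta-1}$ for $k\in\mathcal W_t$, which follows from Corollary~\ref{cor:local_advantage_drift_clean} and the harmonic-sum estimate, and the convex-combination structure of the target. Since both facts hold for $\bar Z_t^W$, we get $\|A_t-\bar Z_t^W\|_\infty\le C_{\mathrm{loc}}t^{\zeta-1}$ at every $(s,a)$, including the tie-breaker region. The lemma itself gives $\|A_t-Z_t^A\|_\infty\le C_{\mathrm{loc}}t^{\zeta-1}$. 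The triangle inequality then yields $\|Z_t^A-\bar Z_t^W\|_\infty\le 2C_{\mathrm{loc}}t^{\zeta-1}$. Because $\mu_t$ is a probability measure, $\|\cdot\|_{2,\mu_t}\le\|\cdot\|_\infty$, which proves the first claim.

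For the consequence, I would rerun Proposition~\ref{prop:windowed_learned_readout_value_transfer_clean} with the score residual split as $\|Z_{\theta_{t-1}}-Z_t^A\|_{2,\mu_t}\le\|Z_{\theta_{t-1}}-\bar Z_t^W\|_{2,\mu_t}+\|\bar Z_t^W-Z_t^A\|_{2,\mu_t}$. Equivalently, inside Lemma~\ref{lem:direct_regularized_learned_readout_clean} the bridge passes through $\bar Z_t^W$ together with the extra inner product against $Z_t^A-\bar Z_t^W$. In either form, the extra term is transferred to the comparator through the same Jensen and change-of-measure step, which costs the factor $\sqrt{C_{\mathrm{cov}}(\pi^\star)}$ and the prefactor $2/(1-\gamma)$. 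Averaging $2C_{\mathrm{loc}}t^{\zeta-1}$ over $t=2,\dots,T$ with the integral bound $\sum_{t\ge2}t^{\zeta-1}\le T^\zeta/\zeta$, as in Lemma~\ref{lem:recent_window_tracks_current_advantage_clean}, gives the stated addition $\frac{2\sqrt{C_{\mathrm{cov}}}}{1-\gamma}\cdot\frac{2C_{\mathrm{loc}}}{\zeta}\frac{T^\zeta}{T-1}$. This is $O(T^{\zeta-1})$, so at $\zeta=2/3$ the $T^{-1/3}$ rate is unchanged.

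I do not expect a serious obstacle. The one point needing care is that the two targets have different supports and tie-breakers: $Z_t^A$ branches on $\mu_t(s,a)>0$, while $\bar Z_t^W$ branches on the state mass. A state may have positive state mass while a particular action has zero replay mass. Working with the sup-norm avoids any case analysis here, because each branch, on either side, is a convex combination of the window advantages. The Layer~1 drift bound therefore applies uniformly in all four support cases.
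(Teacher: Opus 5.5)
Your proposal is correct and follows the paper's proof. Both arguments rest on $Z_t^A(s,a)$ and $\bar Z_t^W(s,a)$ being convex combinations of $\{A_k(s,a):k\in\mathcal W_t\}$, so their gap is at most $2\max_{k\in\mathcal W_t}\|A_k-A_t\|_\infty\le 2C_{\mathrm{loc}}t^{\zeta-1}$; both then pass to $\|\cdot\|_{2,\mu_t}\le\sup_{s,a}|\cdot|$ and average through the coverage prefactor using $\sum_{t=2}^T t^{\zeta-1}\le T^\zeta/\zeta$. The only difference is the direction of the triangle inequality, and it is harmless: you bound $\|Z_{\theta_{t-1}}-Z_t^A\|_{2,\mu_t}\le\|Z_{\theta_{t-1}}-\bar Z_t^W\|_{2,\mu_t}+\|\bar Z_t^W-Z_t^A\|_{2,\mu_t}$, whereas the paper bounds $\|Z_{\theta_{t-1}}-\bar Z_t^W\|_{2,\mu_t}\le\|Z_{\theta_{t-1}}-Z_t^A\|_{2,\mu_t}+2C_{\mathrm{loc}}t^{\zeta-1}$ so that Assumption~\ref{ass:replay_error_decomp} still controls the residual, and both give the identical additive term.
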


\begin{proof}
On the support of $\mu_t$, both $Z_t^A(s,a)$ and $\bar Z_t^W(s,a)$ are convex combinations of $\{A_k(s,a):k\in\mathcal W_t\}$, so
$$
\begin{aligned}
\left\|Z_t^A-\bar Z_t^W\right\|_{2,\mu_t}
&\le
\sup_{s,a}\left|Z_t^A(s,a)-\bar Z_t^W(s,a)\right| \\
&\le
2\max_{k\in\mathcal W_t}\left\|A_k-A_t\right\|_\infty
\le
2C_{\mathrm{loc}}t^{\zeta-1},
\end{aligned}
$$
where the last inequality is the convex combination stability bound from the proof of Lemma~\ref{lem:recent_window_tracks_current_advantage_clean}. For the second claim, the triangle inequality gives
$$
    \left\|Z_{\theta_{t-1}}-\bar Z_t^W\right\|_{2,\mu_t}
    \le
    \left\|Z_{\theta_{t-1}}-Z_t^A\right\|_{2,\mu_t}
    +
    2C_{\mathrm{loc}}t^{\zeta-1},
$$
and averaging the extra term over $t=2,\dots,T$ contributes at most $\frac{2C_{\mathrm{loc}}}{\zeta}\frac{T^\zeta}{T-1}$ to the averaged score residual, which enters the value bound through the prefactor $\frac{2\sqrt{C_{\mathrm{cov}}(\pi^\star)}}{1-\gamma}$ of Theorem~\ref{thm:fitted_replay_value_bound}. The addition is of the same order as the local drift term already present in Theorem~\ref{thm:fitted_replay_value_bound}.
\end{proof}

The main statements are kept against $Z_t^A$ because this removes the action sampling mismatch from the critical path while retaining the transparent connection to the main text surrogate written here.

\FloatBarrier
\subsection{Scope of the guarantee}

The theorem provides the design rationale for the estimator and window schedule and analyzes the core algorithmic procedure implemented in code. Its objects are the stored labels, the finite window, and the fitted score. Two assumptions hold as stated in the implementation, with bounded rewards on every benchmark and an exact readout within the chosen score class, using a softmax on Atari and a diagonal Gaussian induced by the quadratic score in MuJoCo. The temperature schedule is the analyzed $\alpha_t=\lambda+\beta/t$. The analyzed window grows as $t^{2/3}$ to balance replay concentration against drift as $t\to\infty$. At a fixed budget the same tradeoff selects a fixed window, and the window ablation (Appendix~\ref{app:ablation_window}) confirms that the optimum is interior.

The remaining conditions concern the learned path, and each has a mechanism behind it. Path stability first. In full-history RDA the dual score is a running average of $k$ rounds of feedback, so consecutive scores differ by $O(1/k)$ and the readout inherits that rate. This is where Assumption~\ref{ass:learned_policy_path_stability_clean} originates. A sliding window gives up that mechanism in exchange for faster adaptation, and the implementation supplies stability by other means: the regression is warm-started from the previous round, each round replaces a small fraction of the window (one of $24$ rollouts in the GAE MuJoCo setting), the clipped dual regression bounds the movement of every stored prediction by $\epsilon_{\mathrm{clip}}$ per update, and the temperature decays. We do not verify the $C_\pi/k$ rate, and the reported runs did not need a step-size schedule, but a schedule gives a direct handle on it. With gradient clipping, the parameter movement per round is bounded by the step size times a constant, so a $1/k$ step size enforces $1/k$ policy movement whenever the readout is Lipschitz in the parameters uniformly over states. Coverage next. The coverage coefficient is finite exactly when replay puts positive mass on every action at every state the comparator visits. The entropy floor supplies the action half: with $\lambda>0$ every readout policy has full support on a finite action set, so $\mu_t(s,a)>0$ at every visited state. What remains is state coverage, that replay visits the states the comparator visits, the same exploration requirement that every comparator-based policy improvement guarantee carries. Fitting and concentration last. The fitting error enters as a per-round average of the error of the score the optimizer returns, which the theorem keeps as a floor. The finite replay concentration rate $C_{\mathrm{var}}t^{-\zeta/2}$ is the variance scaling of an average over $W_t\asymp t^{\zeta}$ samples.

Only two implementation heuristics lie outside the analyzed update: the clipped dual regression of Appendix~\ref{subsec:appendix_clipped_dual_regression}, and the per-batch normalization and clipping of advantage labels. Both act in the direction of the assumptions. The clip bounds per-update movement, which is what path stability asks for, and the normalization is an affine rescaling of the labels, which changes the effective temperature scale and nothing else in the readout. The twin-$Q$ runs autotune $\alpha_t$ as SAC does, which sits outside the analyzed schedule, and MuJoCo actions are continuous while the proof assumes finite actions. The Gaussian readout is the continuous instance of the same mirror map, and the experiments test whether the finite-action guarantee transfers.

\newpage
\section{Related Work}
\label{app:additional_related}
\label{app:related-work}

This appendix reviews the work closest to RDA2C. The organizing distinction is between (i) last iterate policy space mirror updates, (ii) cumulative FTRL/RDA updates, and (iii) practical regression or replay mechanisms used to instantiate these ideas with function approximation. RDA2C sits at the intersection of the last two: it uses a dual averaging actor objective, but estimates the cumulative dual object through replayed advantage labels in a deep actor--critic loop.

\FloatBarrier
\subsection{Policy mirror descent, natural gradients, and trust region methods}
\label{app:rw:pmd-npg}

A long line of work interprets modern policy optimization as approximate mirror descent in policy space, with KL divergence as the Bregman divergence induced by negative entropy. Natural policy gradient introduced the use of the Fisher geometry for policy search \citep{kakade2001natural}, and TRPO later operationalized a trust region version of policy improvement with a KL constraint \citep{schulman2015trust}. PPO replaces the constrained update by a clipped surrogate that approximately limits deviation from the behavior policy \citep{schulman2017proximal}, while MPO uses an EM-like KL regularized policy improvement step \citep{rw_abdolmaleki2018mpo}. Maximum entropy methods such as SAC optimize entropy regularized objectives and can be interpreted within the broader regularized MDP and KL regularized policy optimization family \citep{rw_haarnoja2018sac,rw_geist2019regularizedmdp}.

Theoretical work has made this mirror descent view increasingly explicit. Convex analytic treatments of entropy regularized MDPs relate exact trust region and regularized policy improvement updates to mirror descent and duality \citep{rw_neu2017unified,rw_geist2019regularizedmdp}. Adaptive trust region and policy mirror descent analyses give global convergence and sampling complexity guarantees for exact or near exact policy space updates \citep{rw_shani2020adaptive,rw_lan2023pmd}. Functional mirror ascent and surrogate optimization analyses further clarify how policy improvement objectives can be derived independently of a particular parameterization \citep{rw_vaswani2022surrogates}. Empirically motivated mirror descent actor--critic methods, including mirror descent policy optimization (MDPO), exploratory conservative policy optimization (ECPO), and mirror descent actor--critic (MDAC), instantiate related KL/entropy regularized updates in deep RL systems \citep{rw_tomar2022mdpo,rw_mei2019ecpo,rw_iwaki2025mdac}.

Q--NPG is relevant to RDA2C because it connects value function regression, compatible function approximation, and natural gradient policy updates. In the policy gradient theory of \citet{rw_agarwal2019pgtheory}, NPG and Q--NPG updates are written through regression problems whose solutions define a natural policy gradient step. This is close in spirit to our use of advantage labels, but the update remains a last iterate PMD/NPG step: the fitted object determines a local update from the current policy. RDA2C instead fits a replay estimated state weighted average advantage score and reads the policy from that cumulative object through the entropy mirror map.

\FloatBarrier
\subsection{FTRL, regularized dual averaging, and policy dual averaging}
\label{app:rw:ftrl-rda}
\label{sec:related-work}

Regularized dual averaging originates in online convex optimization and stochastic optimization, where the algorithm maintains an average of past subgradients in the dual space and maps back to the primal variable through a regularized mirror map. Foundational treatments include Nesterov's primal-dual subgradient schemes \citep{nesterov2009primal} and Xiao's regularized dual-averaging method \citep{xiao2010dual}. RDA is closely related to follow-the-regularized-leader: \citet{rw_mcmahan2011ftrl} establishes equivalence results connecting mirror descent and FTRL updates under suitable choices of regularization and linearization.

This equivalence is important for positioning RDA2C. We do not claim that FTRL/RDA generically improves the worst case online learning rate over mirror descent. Our use is representational: FTRL/RDA exposes the cumulative feedback as a single object. PMD/OMD writes the next update as a local step around the last policy. RDA/FTRL writes the next update as a regularized optimization over the cumulative linearized feedback. In reinforcement learning, that cumulative feedback is naturally represented by accumulated advantage or $Q$ labels. This is what makes replay useful as an actor representation: the buffer approximates the object the RDA update needs.

The closest precedent for the update itself is POLITEX \citep{rw_abbasi2019politex}, which sets the next policy to the softmax of the cumulative sum of past action-value estimates and proves regret bounds through the reduction to expert prediction. RDA2C keeps that cumulative-softmax form but replaces the explicit sum of stored critics by one regression on replayed advantage labels, adds the persistent entropy floor and the decaying stabilizer of RDA, and analyzes a finite replay window in place of the full history. In the same direction, \citet{rw_vieillard2020leverage} show that KL-regularized policy iteration implicitly averages past $Q$ estimates, and Deep CPI \citep{rw_vieillard2020deepcpi} makes the related conservative mixture update practical with function approximation. Both hold the cumulative object recursively, in the current parameters. RDA2C holds it in the label buffer, which is what makes the window an explicit design variable.

Within RL, a close theoretical precursor is policy dual averaging. \citet{rw_ju2022generalspaces} study policy optimization over general state and action spaces and introduce PDA methods designed to work with function approximation more directly than exact policy mirror descent. That line studies cleaner regimes with exact or bounded error evaluation, and the original continuous action update is defined by a statewise optimization subproblem. The actor accelerated PDA follow up \citep{rw_ju2026aapda} addresses this computational issue by learning an actor that approximates the PDA target, while DAPO studies dual-side approximation error through the Bregman divergence induced by the mirror potential \citep{rw_xiong2024dapo}. RDA2C differs by eliminating the separate projection tracking actor: the learned dual score model $Z_\theta$ is the policy's natural coordinate representation, and the current policy is obtained directly through the entropy readout $\nabla\varphi^*(Z_\theta/\alpha_t)$.

Two further contrasts follow. PDA solves the readout statewise and numerically, while RDA2C replaces that solve with one regression and an exact mirror map. DAPO measures the error of a given dual approximation, while RDA2C specifies how to construct the dual object from replayed labels. The analyses differ the same way: prior guarantees assume exact or bounded-error policy evaluation, while our finite-time analysis charges the update for critic bias, finite-buffer variance, and replay-induced distribution shift.

\FloatBarrier
\subsection{Replay, off policy actor--critic learning, and stale data}
\label{app:rw:replay-stale-data}

Experience replay is a standard mechanism for improving sample efficiency in deep RL. In off-policy actor--critic methods, replay usually provides data for Bellman regression and actor-gradient estimates, as in SAC \citep{rw_haarnoja2018sac}. In on-policy or near on-policy methods such as PPO, data are typically reused only for a small number of epochs, with clipping or KL control limiting the effect of policy drift \citep{schulman2017proximal}. In both cases, stale data are primarily handled as an off-policy correction, stabilization, or variance-reduction issue.

RDA2C changes the role of replay on the actor side. Because the ideal RDA update depends on a historical average of policy space feedback, past advantage labeled samples are not merely stale approximations to the newest gradient. They are samples from the empirical dual objective that defines the next actor. This distinction also explains why buffer size appears explicitly in our theory. A short window reduces distribution shift but raises estimator variance, while a long window averages more labels but can bias the dual estimate toward older policies. The main theorem formalizes this replay-specific decomposition through critic bias, finite buffer variance, and replay induced distribution shift.

\FloatBarrier
\subsection{Reduction and regression views of policy improvement}
\label{app:rw:reductions-regression}

RDA2C is also related to reductions that cast sequential decision making as iterative supervised learning on aggregated data. DAgger aggregates expert-labeled states to reduce imitation learning to no-regret online learning \citep{ross2011reduction}. NRPI extends this perspective to reinforcement learning by aggregating cost-to-go examples under an exploration distribution and reducing approximate policy iteration to no regret learning \citep{rw_ross2014nrpi}. These methods are close in spirit to RDA2C because the dataset is not an incidental implementation detail: it is the object through which the next policy improvement problem is specified.

Recent regression based policy optimization methods make a related move in modern RL and RLHF settings. REBEL reduces reinforcement learning to squared loss regression on relative rewards, REFUEL extends a regression and aggregation viewpoint to multi-turn RLHF under self-generated data, and A$^\star$-PO estimates an optimal-advantage-like object before applying simple regression policy updates \citep{rw_gao2024rebel,rw_gao2024refuel,rw_brantley2025astarpo}. RDA2C follows the same broad reduction to regression philosophy, but its regression target is derived from regularized dual averaging: the model $Z_\theta$ approximates a windowed average of policy space improvement feedback, and the actor update is the exact entropy mirror map readout from that learned dual score.

Advantage-weighted methods also turn policy improvement into supervised learning. AWR \citep{rw_peng2019awr}, AWAC \citep{rw_nair2020awac}, and CRR \citep{rw_wang2020crr} fit the policy by weighted maximum likelihood on replayed actions, with weights that increase with the estimated advantage. That regression is in the primal: the target is a reweighted behavior distribution, and the policy is the fitted likelihood model. RDA2C regresses in the dual: the target is the advantage label itself, the fitted object is a score, and the policy is its mirror-map readout. The two differ in what the buffer represents. For advantage weighting, replay supplies actions to imitate. For RDA2C, replay supplies the cumulative feedback that defines the next policy.

Empirical studies of example forgetting in neural networks provide an additional motivation for keeping policy improvement history in an explicit buffer \citep{rw_toneva2019forgetting}. Recursive updates keep that history only in the actor weights. This motivation is secondary. The primary distinction is algorithmic: RDA2C's replay buffer estimates the dual object required by the RDA update, and recursive actor updates keep past feedback only in the current parameter vector.

\end{document}